\newcounter{subfig}[figure]
\newcommand{\plot}[1]{%
  \raisebox{-0.3\height}{\includegraphics[width=\linewidth]{#1}}}
\newcommand{\rowlabel}[1]{%
  \rotatebox[origin=c]{90}{\textbf{#1}}}
\journal{Applied and Computational Harmonic Analysis}
\def\ps@pprintTitle{%
 \let\@oddhead\@empty
 \let\@evenhead\@empty
 \def\@oddfoot{}%
 \let\@evenfoot\@oddfoot}
\DeclareMathAlphabet{\mathsfsl}{OT1}{cmss}{m}{sl}
\renewcommand{\phi}{\varphi}
\newcommand{\R}{\mathbb{R}}
\newcommand{\argmin}{\operatorname*{arg\; min}}
\newcommand{\argmax}{\operatorname*{arg\; max}}
\newcommand{\Expect}{\operatorname{\mathbb{E}}}
\newcommand{\Sin}{S_{\mathrm{in}}}
\newcommand{\Sout}{S_{\mathrm{out}}}
\newcommand{\rank}{\operatorname{rank}}
\newcommand{\diag}{\operatorname{diag}}
\newcommand{\dist}{\operatorname{dist}}
\newcommand{\bx}{\boldsymbol{x}}
\newcommand{\bZ}{\boldsymbol{Z}}
\newcommand{\bz}{\boldsymbol{z}}
\newcommand{\bX}{\boldsymbol{X}}
\newcommand{\bE}{\boldsymbol{E}}
\newcommand{\bF}{\boldsymbol{F}}
\newcommand{\bW}{\boldsymbol{W}}
\newcommand{\bP}{\boldsymbol{P}}
\newcommand{\bQ}{\boldsymbol{Q}}
\newcommand{\sX}{\mathcal{X}}
\newcommand{\bSigma}{\boldsymbol\Sigma}
\newcommand{\bU}{\boldsymbol{U}}
\def\psdleq{\preceq}
\def\psdgeq{\succeq}
\def\calX{\mathcal{X}}
\def\cO{\mathcal{O}}
\def\reals{\mathbb{R}}
\def\bx{\boldsymbol{x}}
\def\bu{\boldsymbol{u}}
\def\bb{\boldsymbol{b}}
\def\bm{\boldsymbol{m}}
\def\bS{\boldsymbol{S}}
\def\b0{\mathbf{0}}
\def\bP{\boldsymbol{P}}
\def\bQ{\boldsymbol{Q}}
\def\bSigma{\boldsymbol\Sigma}
\def\bU{\boldsymbol{U}}
\def\bR{\boldsymbol{R}}
\def\bv{\boldsymbol{v}}
\def\bA{\boldsymbol{A}}
\def\bB{\boldsymbol{B}}
\def\ba{\boldsymbol{a}}
\def\bw{\boldsymbol{w}}
\def\bdelta{\boldsymbol{\delta}}
\def\bzero{\boldsymbol{0}}
\def\bTheta{\boldsymbol{\Theta}}
\def\bI{\boldsymbol{I}}
\def\im{\mathrm{im}}
\def\tr{\mathrm{tr}}
\def\nin{n_{\mathrm{in}}}
\def\nout{n_{\mathrm{out}}}
\def\Span{\mathrm{Span}}
\def\Sp{\mathrm{Sp}}
\newcommand{\di}{{\,\mathrm{d}}}
\def\scrG{\mathscr{G}}
\def\scrA{\mathscr{A}}
\newtheorem{lemma}{Lemma}
\newtheorem{theorem}{Theorem}
\newtheorem{corollary}{Corollary}
\newtheorem{proposition}{Proposition}
\newtheorem{remark}{Remark}
\newtheorem{assumption}{Assumption}
\begin{document}
\definecolor{darkgreen}{RGB}{0,101,0}

\begin{frontmatter}

\title{Global Convergence of Iteratively Reweighted Least Squares for Robust Subspace Recovery}

\author[umn]{Gilad Lerman}
\ead{lerman@umn.edu}
\author[ucf]{Kang Li}
\ead{kang.li@ucf.edu}
\author[brand]{Tyler Maunu}
\ead{maunu@brandeis.edu}
\author[ucf]{Teng Zhang}
\ead{teng.zhang@ucf.edu}
\affiliation[umn]{University of Minnesota}
\affiliation[ucf]{University of Central Florida}
\affiliation[brand]{Brandeis University}

\begin{abstract}
    Robust subspace estimation is fundamental to many machine learning and data analysis tasks. Iteratively Reweighted Least Squares (IRLS) is an elegant and empirically effective approach to this problem, yet its theoretical properties remain poorly understood. This paper establishes that, under deterministic conditions, a variant of IRLS with dynamic smoothing regularization converges linearly to the underlying subspace from any initialization. We extend these guarantees to affine subspace estimation, a setting that lacks prior recovery theory. Additionally, we illustrate the practical benefits of IRLS through an application to low-dimensional neural network training. Our results provide the first global convergence guarantees for IRLS in robust subspace recovery and, more broadly, for nonconvex IRLS on a Riemannian manifold.
\end{abstract}

\begin{keyword}%
  Robust subspace recovery, iteratively reweighted least squares, dynamic smoothing, nonconvex optimization
\end{keyword}
\end{frontmatter}

\section{Introduction}
 
Many real-world datasets in computer vision, machine learning, and bioinformatics exhibit underlying low-dimensional structure that can be effectively modeled using subspaces~\citep{basri2003lambertian,hartley2003multiple,lerman2015robust,ding2020robust,li2022low,yu2024subspace,wang2024diffusion,arous2024high}. A variety of methods have been proposed to estimate such a subspace. The most prevalent of these is Principal Component Analysis (PCA) \citep{pearson1901liii,hotelling1933analysis,jolliffe1986principal}, which finds the directions of maximum variance within a dataset. However, PCA is well-documented to be sensitive to outliers.

These limitations spurred research into robust variants of PCA that can handle outliers; see \cite{lerman2018overview} for a comprehensive review. We focus specifically on robust subspace recovery (RSR), where the goal is to identify a low-dimensional subspace that explains a subset of the data points (inliers) while ignoring the effect of corrupted data (outliers). Unlike classical PCA, which assumes Gaussian-distributed noise, RSR explicitly models the data as a mixture of inliers that lie near or on a low-dimensional subspace and outliers that may be arbitrarily positioned.

Formally, we consider a dataset of $n$ observations in $\R^D$, denoted by the multiset $\calX$. This multiset consists of an inlier multiset and an outlier multiset, $\calX=\calX_{\mathrm{in}}\cup\calX_{\mathrm{out}}$, where the inliers lie on a $d$-dimensional subspace $L_\star$. For clarity, we focus on the noiseless inlier setting in this paper. The RSR problem seeks to recover $L_\star$ given only $\calX$.

In the following, we focus on global convergence. That is, we say that an iterative algorithm \emph{globally recovers} a point if it converges to that point from any initialization. To accomplish this, we revisit the IRLS method for RSR from \cite{lerman2018fast}, known as the Fast Median Subspace (FMS) algorithm. 
FMS seeks to minimize a robust, nonconvex, least absolute deviations energy function over the Grassmannian manifold $\scrG(D,d)$, which is the set of $d$-dimensional linear subspaces in $\R^D$. 

The focus of this paper is efficient solutions to the RSR problem when the subspace $L_\star$ and the dataset $\calX$ satisfy certain conditions. We say that a subspace $L$ and dataset $\calX$ satisfy \emph{deterministic conditions} if there exist nonrandom functions $g_i$ such that $g_i(\calX_{\mathrm{in}}, \calX_{\mathrm{out}}, L) \geq 0$, $i=1, \dots, k$. These functions encode conditions that compare the statistics of the inliers and outliers, and we formulate our specific conditions later in Section \ref{sec:mainresults}. At a high level, such conditions end up being sufficient for solving the RSR problem efficiently, while in general it is NP-hard \citep{clarkson2015input}. 

The rest of this section is organized as follows: Section \ref{subsec:contributions} presents the primary contributions of our work. Then, Section \ref{subsec:paperorg} discusses the organization of the paper, and Section \ref{subsec:notation} introduces the necessary notation.

\subsection{Contributions}
\label{subsec:contributions}

The goals of our work are threefold: (i) to develop strong theoretical guarantees for FMS, (ii) to generalize these guarantees to affine subspaces,  and (iii) to provide further practical motivation for FMS.  We have made significant progress in all three directions. In particular, our key contributions are:
\begin{enumerate}
    \item Under deterministic conditions on $\calX$ alone, FMS with a dynamically decreasing smoothing linearly converges to $L_\star$ from arbitrary initialization. 
    To the best of our knowledge, this is the first global convergence result for IRLS on a Riemannian manifold in a nonconvex setting. As a corollary, under the same deterministic conditions on $\calX$ alone, the FMS algorithm with fixed regularization converges to a point close to $L_\star$ from arbitrary initialization. 
    \item We extend FMS to the estimation of affine subspaces. For this new algorithm, we establish local linear convergence under a modified set of deterministic conditions on $\calX$ and the initialization.
    \item We run numerical experiments that demonstrate the advantages of dynamic smoothing for robust subspace recovery in adversarial settings.
    \item We demonstrate the practical utility of FMS through an application to low-dimensional neural network training.
\end{enumerate} 
These results address a long-standing question of providing broad theoretical guarantees that align with the strong empirical performance of FMS, while also enabling new applications in modern machine learning. They are likely to be of broad interest to the machine learning community. In particular, IRLS is a specific instance of a more general paradigm in optimization known as \emph{Majorize-Minimization (MM) algorithms}. This class of algorithms is based on upper bounding a complex objective function with a surrogate function whose minimum is simpler to compute. One can generate a sequence whose objective values monotonically decrease by iteratively minimizing the surrogate. MM algorithms have been widely used in signal processing and machine learning \citep{SIG-053,sun2016majorization}, and can provide insight into machine learning building blocks like transformers \citep{yang2022transformers}.

{ A key algorithmic and theoretical innovation is the use of \emph{dynamic smoothing} within the FMS algorithm. The application of these ideas within IRLS began with \citep{daubechies2010iteratively} in the context of sparse signal recovery.
Recent works have applied these ideas to IRLS for convex penalty subproblems \citep{burke2015iterative}, projection onto 
$\ell_p$ balls \citep{yang2022towards}, and robust regression
\citep{mukhoty2019globally,peng2023convergence,peng2022global}. Choosing the correct smoothing schedule leads to provable linear and super-linear convergence. Crucially, the use of dynamic smoothing allows one to show linear convergence of the IRLS algorithm by ensuring that the weights do not blow up too quickly while still being allowed to go to infinity.
}

Furthermore, our algorithms are specific instances of \emph{Riemannian optimization algorithms}, which operate when the constraint set is a Riemannian manifold. There has been broad interest in studying properties of these methods \citep{absil2007optimization_book,boumal2023introduction} and their practical application, for example, see \cite{Vandereycken13_completion_riemannian,jaquier2020bayesian,Hu20_manifold_optimization}.

Finally, there has also been widespread interest in analyzing the convergence of nonconvex optimization algorithms, which is notoriously difficult \citep{danilova2022recent}. Fortunately, in many settings, nonconvex problems exhibit benign properties \citep{li2019symmetry} that enable favorable algorithmic behavior \citep{maunu2019well}. { Most existing convergence analyses for nonconvex optimization on manifolds focus on methods that take \emph{explicit} steps along the Riemannian gradient (and sometimes the Hessian) of a smooth objective (e.g., gradient descent, trust-region, and related variants). In contrast, we analyze an IRLS scheme that updates $L$ by minimizing a quadratic majorizer $G_\epsilon(L,L_k)$. While the weights in $G_\epsilon(\cdot,L_k)$ implicitly encode first-order information of the smoothed energy $F_\epsilon$ at $L_k$, the surrogate is not the second-order Taylor model of $F_\epsilon$: it matches $F_\epsilon$ and its first-order behavior at the base point but induces a different curvature away from it. We provide a global convergence result for this nonconvex IRLS algorithm on a Riemannian manifold under a deterministic condition.
}

\subsection{Paper Organization}
\label{subsec:paperorg}

In Section \ref{sec:background}, we rigorously define the robust subspace recovery problem, provide the necessary mathematical background, and review related work. In Section~\ref{sec:method}, we propose our dynamic smoothing scheme in the FMS algorithm and its variant for estimating affine subspaces. In Section~\ref{sec:theoreticalanalysis}, we establish the theoretical guarantees for the proposed method. In Section~\ref{sec:numerics}, we run simulations on synthetic data that demonstrate the competitiveness of FMS with dynamic smoothing, and we also present the effective application of FMS in dimension-reduced training of neural networks.

\subsection{Notation}
\label{subsec:notation}

For a finite multiset $\calX$, $|\calX|$ denotes the number of points. We use bold uppercase letters for matrices and bold lowercase letters for column vectors. Let $\bI_k$ denote the identity matrix in $\mathbb{R}^{k \times k}$, where if $k$ is obvious, we simply write $\bI$. For $d \leq D$, $\cO(D,d)$ is the set of semi-orthogonal matrices in $\mathbb{R}^{D \times d}$, i.e., $\bU \in \cO(D,d)$ if and only if $\bU^\top \bU = \bI_{d}$.  
For a $d$-dimensional linear subspace $L$, we denote by $\bP_L$ the $D\times D$ matrix representing the orthogonal projector onto $L$, and $\bQ_L = \bP_{L^\perp}$ the projection onto the orthogonal complement of $L$. Projection matrices are symmetric and satisfy $\bP^2_L=\bP_L$ and $\im(\bP_L)=L$, where $\im(\cdot)$ denotes the column space of a matrix. For a matrix $\bA$, $\|\bA\|_2$ denotes its operator or spectral norm, and $\|\bA\|_{F}$ denotes its Frobenius norm. Further, $\sigma_j(\bA)$ will denote the $j$th singular value of $\bA$.  For our inlier-outlier dataset $\calX$, we let $n = |\calX|$, $\nin = |\calX_{\mathrm{in}}|$, and $\nout = |\calX_{\mathrm{out}}|$. Writing $f(x) \gtrsim g(x)$ means that $f(x) \geq c g(x)$ for some absolute constant $c$.

\section{Problem Setup and Related Work}\label{sec:background}

This section sets the stage for our algorithm and our theoretical results. In Section \ref{subsec:background}, we rigorously define the problem we seek to solve and provide the necessary mathematical background. After this, we review related work in Section \ref{subsec:relatedwork}.

\subsection{Problem Formulation and Background}
\label{subsec:background}

We briefly summarize the necessary background to understand the primary contributions of this work, building upon the setup provided in the introduction. For simplicity, we formulate the problem for linear subspaces. The treatment of affine subspaces is discussed in Section \ref{sec:affineIRLS}.

Let $\dist(\bx, L) = \|\bQ_L \bx\|$ be the orthogonal distance between $\bx \in \calX$ and $L \in \scrG(D,d)$, where $\bQ_L$ is the orthogonal projection onto the orthogonal complement of $L$. This paper investigates an IRLS approach to a nonconvex least absolute deviations formulation for RSR~\citep{ding2006r, Zhang_MKF09, maunu2019well}. This formulation minimizes the sum of distances from the data points to any $d$-dimensional subspace in $\scrG(D,d)$:
\begin{equation}\label{eq:problem2}
\hat{L}=\argmin_{L \in \scrG(D,d)}F(L):= \sum_{\bx\in\calX}\dist(\bx, L),
\end{equation}
This is an optimization problem over the Grassmannian manifold \citep{edelman1998geometry}. In fact, if we minimize the sum of squared distances $\sum_{\bx\in\calX}\dist^2(\bx,L)$, then we obtain the principal subspace. By using the non-squared distance instead, our estimator is less prone to the impact of large outliers. 

An IRLS method for \eqref{eq:problem2} iteratively applies weighted PCA. More specifically, at iteration $k$ its unregularized version minimizes $\sum_{\bx\in\calX} w_{\bx} \dist^2(\bx,L)$ with weights $w_{\bx}=1/\dist(\bx,L^{(k)})$ to find the updated subspace $L^{(k+1)}$. Here, $L^{(k)}$ is the subspace estimated in the previous iteration. While this method is elegant and widely used, it lacks global convergence and exact recovery guarantees, and this paper aims to fill this gap. Nevertheless, our theoretical analysis requires some modification of this simple procedure (see discussion of dynamic smoothing in Section \ref{sec:method}). This analysis advances the state of nonconvex IRLS for Riemannian manifolds, specifically the Grassmannian $\scrG(D,d)$. Furthermore, it is natural to extend \eqref{eq:problem2} by replacing $\scrG(D,d)$ with the set of affine subspaces. This is the subject of our algorithmic development in Section \ref{sec:affineIRLS}.

The analysis in this paper depends on the principal angles and vectors between subspaces as follows: For any $L_0, L_1\in \scrG(D,d)$, we denote their $d$  principal angles by $\{\theta_j(L_0,L_1)\}_{j=1}^d$ and let the respective principal vectors for $L_0$ and $L_1$ be $\{\bu_j(L_0,L_1)\}_{j=1}^d$ and $\{\bv_j(L_0,L_1)\}_{j=1}^d$. 
Given bases $\bW_0 \in \cO(D,d)$ for $L_0$ and $\bW_1 \in \cO(D,d)$ for $L_1$, one can compute the principal vectors and angles by the singular value decomposition $\bW_0^\top \bW_1 = \bR_0 \cos(\bTheta) \bR_1^\top$, in which case $\bTheta$ contains the principal angles, $\bW_0 \bR_0$ contains the vectors $\bu_j$ and $\bW_1 \bR_1$ contains the vectors $\bv_j$. See \citet[Section 3.2.1]{lerman2014lp} for a more detailed treatment.

\subsection{Review of Related Work}
\label{subsec:relatedwork}
 In the following, we separately discuss related work in robust subspace recovery, iterative reweighted least squares, and the least
absolute deviations-based RSR problem  \eqref{eq:problem2}.

\subsubsection{Robust Subspace Recovery (RSR)} 

RSR has been extensively studied in various contexts, with methods falling into several key categories. For a comprehensive review, see \citet{lerman2018overview}. The distinction between our problem and robust PCA (RPCA) \citep{candes2011robust} should be noted. While both deal with corrupted low-rank structures, RPCA assumes elementwise corruptions throughout the data matrix, whereas RSR assumes some samples are wholly corrupted. Consequently, algorithms designed for one problem typically do not perform well on the other.

A significant direction involves robust covariance estimation. Methods like Tyler's M-estimator (TME) \citep{tyler1987distribution,zhang2016robust} aim to robustly estimate an underlying covariance matrix from which the principal subspace can be extracted. These approaches have strong theoretical foundations regarding breakdown points, though their guarantees for subspace recovery are more limited in natural special settings. Stronger local guarantees were obtained for a subspace-constrained TME variant \citep{yu2024subspace,lerman_zhang2024}.

Many approaches frame RSR as an outlier or inlier identification problem, where the goal is first to remove outliers or find a set of inliers before applying standard PCA. Examples include RANSAC-based methods \citep{fischler1981random, ariascastro2017ransac, maunu2019robustsubspacerecoveryadversarial} and more recent approaches like Coherence Pursuit \citep{rahmani2017coherence} and Thresholding-based Outlier Pursuit \citep{cherapanamjeri2017thresholding}. While conceptually simple, these methods often require careful parameter tuning and may struggle with high outlier percentages and higher intrinsic subspace dimensions.

\subsubsection{Least
Absolute Deviations-based RSR} 

In this work, we focus on a specific approach to solving the RSR problem that relies on \emph{least absolute deviations}, i.e., the objective function in \eqref{eq:problem2}. This has been a focus of a major line of work, where one replaces the squared Euclidean distance in PCA with the absolute distance to achieve robustness. While this problem is computationally hard \citep{hardt2013algorithms,clarkson2015input}, several methods have been proposed to approximately solve it. These include convex relaxations like Outlier Pursuit \citep{xu2012robust} and REAPER \citep{lerman2015robust}, as well as nonconvex approaches like geodesic gradient descent \citep{maunu2019well}, which can be easily replaced with projected subgradient descent (PSGM) \citep{NEURIPS2018_af21d0c9}. The problem may also be addressed using Riemannian ADMM \citep{li2024riemannian}, a general framework for manifold optimization with nonsmooth objectives. In addition, PSGM and manifold proximal point algorithm \citep{9048840} have also been proposed for Dual Principal Component Pursuit (DPCP), which can be considered as a special case of \eqref{eq:problem2}, where
$d$ is replaced by $D-d$  \citep{NEURIPS2018_af21d0c9}.  Indeed, the
analysis of these methods is analogous to that of
\citet{maunu2019well}. We remark that the analysis pursued in \citet{NEURIPS2018_af21d0c9} only
holds for the special case $d=D-1$. It was later generalized for any $d$ in \citet{ding2021dual}, again with similar ideas to \citet{maunu2019well}, whose result holds for any subspace dimension.

This work investigates the FMS method proposed by \cite{lerman2018fast}, which solves \eqref{eq:problem2} using IRLS. This method is popular due to its simplicity and strong empirical performance. More recently, FMS has also been applied to hierarchical representations \citep{ding2024adacontour}. Another work that can be viewed as IRLS with 0-1 weighting is \cite{cherapanamjeri2017thresholding}.

However, the theoretical analysis of the FMS algorithm is somewhat limited, as it generally guarantees convergence only to stationary points.  In contrast, \cite{maunu2019well} investigates the landscape of the objective function and obtains a theoretical understanding of its associated gradient descent algorithm. However, they only use this analysis to prove local convergence of gradient descent, which requires significant parameter tuning. Also, until now, it has been unclear how this analysis can be extended to other optimization frameworks like IRLS, which we will discuss next.

\subsubsection{Iteratively Reweighted Least Squares (IRLS)} 

The method of IRLS is used to solve specific optimization problems with objective functions that are sums of loss functions applied to residuals \citep{holland1977robust}. In each step the method solves a weighted least squares problem.
This idea has been applied in numerous problems, such as geometric median \citep{weiszfeld1937point,Kuhn:1973vr}, robust regression \citep{holland1977robust,peng2022global}, M-estimator in robust statistics \citep{maronna2006robust}, $\ell_q$ norm-based sparse recovery \citep{daubechies2010iteratively,kummerle2020iteratively}, low-rank matrix recovery \citep{fornasier2011lowrank}, synchronization \citep{chatterjee2017robust}, and more. In particular, IRLS algorithms have also been used to solve the convex relaxation of \eqref{eq:problem2} in \citet{JMLR:v15:zhang14a,lerman2015robust}. { More recently, \cite{peng2023block} discuss the connection between IRLS and block-coordinate descent. Their general framework gives convergence to stationary points, but it does not give quantitative rates like those obtained in our work.}

{ 

A central application that shaped the modern development of iteratively reweighted least squares (IRLS) is sparse recovery, where one seeks a sparse vector by minimizing an $\ell_p$ penalty with $0<p\le 1$. Early influential instances include FOCUSS and related reweighting schemes \citep{gorodnitsky2002sparse} as well as affine-scaling methods \citep{rao2002affine}. Subsequent work began to clarify why IRLS succeeds for nonconvex sparse objectives and how to control the tendency of weights to blow up near zeros via smoothing and continuation strategies \citep{chartrand2008iteratively,daubechies2010iteratively}. These mechanisms enabled increasingly sharp convergence analyses and recovery guarantees alongside algorithmic refinements \citep{ba2013convergence,fornasier2016conjugate}. More recently, IRLS variants for basis pursuit and related convex sparse programs have been shown to enjoy global linear convergence under appropriate dynamic smoothing rules \citep{kummerle2020iteratively}.

}
 
Despite their strong empirical performance, straightforward IRLS algorithms generally lack strong theoretical guarantees. To illustrate this, for the geometric median problem \citep{weiszfeld1937point,Beck:2015vn}, which minimizes  
$F(\bx) = \sum_{i=1}^n \|\bx - \bx_i\|$, 
an IRLS method iteratively reweights the data points with weights  
$w_i^{(k)} = {1}/{\|\bx^{(k)} - \bx_i\|}$. 
If $\bx^{(k)} = \bx_1$, then $\bx^{(k+1)} = \bx_1$, as the weight for $\bx_1$ becomes infinite. 
\citet{daubechies2010iteratively} indicate the same problem of infinite weight in IRLS for sparse recovery. To address this issue, a common strategy is to introduce a smoothing parameter that bounds the weights from above, like choosing $w_i^{(k)} =1/(\epsilon^2+\|\bx_i-\bx\|^2)^{1/2}$ or $w_i=1/\max(\epsilon,\|\bx_i-\bx\|)$. This strategy is used heuristically in \citet{JMLR:v15:zhang14a,lerman2015robust,lerman2018fast}. Still, as pointed out in \citet{peng2022global}, this method only converges to an $\epsilon$-approximate solution, and no theory exists on its global convergence rate.

Researchers have used different insights to reach a consensus on dynamically updating the weights \citep{daubechies2010iteratively,fornasier2011lowrank}. Recent advances in IRLS \citep{kummerle2020iteratively,peng2022global} suggest applying a dynamic smoothing-parameter strategy that adaptively selects the regularization parameter based on the current weights.  
Under this strategy, they prove that the IRLS algorithm converges linearly to the solution of the original problem. { However, we note that such an analysis of IRLS has not been applied to manifold optimization problems such as RSR. Furthermore, the only theoretical guarantees in nonconvex optimization that arise in $\ell_p$ minimization for $p<1$ yield a local result in which the initialization must be very close to the solution. Initial results required that the initializer be very close to the solution \citep{daubechies2010iteratively}, while some more recent results have improved and extended these results \citep{kummerle2018harmonic,kummerle2021scalable,peng2022global,peng2023convergence}.
}

Another work that examines dynamic smoothing on manifolds is \cite{beck2023dynamic}, where the authors prove a sublinear convergence rate for a gradient method to a stationary point.

\section{FMS: Review and Extension to Affine Subspaces}\label{sec:method}

In this section, we outline our algorithmic contributions to the RSR problem. In Section \ref{subsec:IRLS_derivation}, we review the FMS algorithm and discuss the addition of dynamic smoothing. Then, in Section \ref{sec:affineIRLS}, we discuss a novel affine variant of the FMS algorithm.

\subsection{The FMS Algorithm}
\label{subsec:IRLS_derivation}

Here, we study the iteratively reweighted least-squares method (IRLS) to solve \eqref{eq:problem2}, which alternates between finding the top eigenvectors of a weighted covariance and updating the weights. We recall that the unregularized IRLS method at the $k$-th iteration is the solution to the weighted squared problem of
\begin{equation}
L^{(k+1)}= \argmin_{L}\sum_{\bx\in\calX}w^{(k)}_{\bx}\dist(\bx,L)^2,\,\,\,\text{where $w^{(k)}_{\bx}=\frac{1}{\dist(\bx,L^{(k)})}$}.
\end{equation}
This solution coincides with the standard PCA procedure:
\begin{equation}\label{eq:IRLS}
L^{(k+1)}=T(L^{(k)})=\text{span of top $d$ eigenvectors of $\sum_{\bx\in\calX} w_{\bx}^{(k)} \bx\bx^\top$}.
\end{equation}
To put forward an IRLS algorithm that gives a re-weighting without infinite components in the weight, it is natural to regularize the weights to be $w^{(k)}_{\bx}=1 / \max(\dist(\bx,L^{(k)}),\epsilon_k)$.  The FMS algorithm by  \citet{maunu2019well} adopts a fixed regularization 
$\epsilon_k$ for all $k\geq 0$. 

In this work, we propose to choose $\epsilon_k$ in each step adaptively by incorporating a ``Dynamic Smoothing'' procedure. Let $\gamma$ denote a hyperparameter, which can be related to the inlier percentage (see Section \ref{sec:theoreticalanalysis}). We denote by $q_\gamma$ the $\gamma$-quantile of the set of distances of the data points from the subspace computed at that iteration. That is, 
$$q_\gamma(\{\dist(\bx,L^{(k)})\}_{\bx\in\calX}) = \max \left\{ a \in \R : \  \frac{|\{\bx \in \calX : \dist(\bx, L^{(k)}) \leq a \}|}{|\calX|} \leq \gamma \right\}.$$
We then set the dynamic smoothing parameter
\begin{equation}\label{eq:epsilon_choose}
 \epsilon_k=\min(\epsilon_{k-1}, q_\gamma(\{\dist(\bx,L^{(k)})\}_{\bx\in\calX})),
{  \text{ where } \epsilon_{-1}:=\infty}.
 \end{equation}
 This choice, inspired by \citet{kummerle2020iteratively} and generalized to the subspace recovery setting, ensures that
$\epsilon_k$ forms a nonincreasing sequence while remaining sufficiently large relative to the distances ${\dist(\bx,L^{(k)})}_{\bx\in\calX}$. We summarize the regularized IRLS in Algorithm~\ref{alg:IRLS}, which we refer to as Fast Median Subspace with Dynamic Smoothing (FMS-DS).

Dynamic smoothing allows $\epsilon$ to decrease over the iterations. This leads to a procedure that can recover a solution to the unregularized problem by solving a sequence of regularized problems. This is made rigorous in Section \ref{sec:theoreticalanalysis}.

The FMS-DS algorithm requires two inputs: an initial guess for the subspace $L^{(0)}$, and a hyperparameter $\gamma$. This hyperparameter can be estimated using prior knowledge or cross-validation. Its choice for specific inlier-outlier models is discussed in Section \ref{sec:theoreticalanalysis}.

\begin{algorithm}
\caption{{{FMS-DS}
}}  
\label{alg:IRLS}
\begin{flushleft} 
  {\bf Input:}  Set of observations $\calX\subset\reals^D$, initial estimated subspace $L^{(0)}\subset \scrG(D,d)$, $\gamma$: hyperparameter.\\ 
  {\bf Output:} An estimate of the underlying subspace.\\
  {\bf Steps:}\\
    {\bf 1:} Let $k=0$. \\
  {\bf 2:} Choose $\epsilon_k$ suitably from  $L^{(k)}$ and $\calX$ by \eqref{eq:epsilon_choose}.\\ 
  {\bf 3:} $L^{(k+1)}=T_{\epsilon_k}(L^{(k)})$, where   \begin{equation}\label{eq:IRLS2}
T_{\epsilon}(L)=\text{span of top $d$ eigenvectors of \ $\sum_{i=1}^n\frac{\bx_i\bx_i^\top}{\max(\dist(\bx_i,L),\epsilon)}$}.
\end{equation}
  \\
  {\bf 4:} $k=k+1$.\\ 
  {\bf 5:} 
  Repeat Steps 2-4 until convergence.
  \end{flushleft} 
  \end{algorithm}

\subsection{The Affine FMS Algorithm}\label{sec:affineIRLS}

Most existing works on RSR (see Section~\ref{subsec:relatedwork}) focus on recovering a robust linear $d$-dimensional subspace in $\reals^D$. While our main theoretical results concern this linear case, our objective function~\eqref{eq:problem2} and its IRLS algorithm naturally extend to the affine setting, allowing for robust local recovery of an affine $d$-dimensional subspace.

Here we use $\scrA(D,d)$ to denote the set of all $d$-dimensional affine subspaces in $\reals^D$. We distinguish between a representative pair $A = (L, \bm) \in \scrG(D,d) \times \reals^D$, and the affine subspace $[A] = \{ \bm + \bx : \bx \in L \}$, which is the affine subspace formed by translating the $d$-dimensional linear subspace $L$ by $\bm$. The representative pair is not unique, since $(L, \bm)$ and $(L, \bm')$ refer to the same affine subspace whenever $\bm' = \bm + \bP_L \bdelta$ for some $\bdelta \in \reals^D$. In fact, $[A]$ is an equivalence class of representative pairs. For any $\bx \in \reals^D$, the distance to $[A]$ is given by $\dist(\bx, [A]) = \| \bQ_L (\bx - \bm) \|$, which is invariant under the choice of representative. Thus, we sometimes also write $\dist(\bx, A) = \| \bQ_L (\bx - \bm) \|$.

We consider the following natural generalization of \eqref{eq:problem2}:
\begin{equation}\label{eq:problem_affine}
    [\hat{A}]=\argmin_{[A] \in \scrA(D,d)}F^{(a)}([A]):=\sum_{\bx\in\calX}\dist(\bx,[A])= \sum_{\bx\in\calX}\|\bQ_L (\bx-\bm)\|.
\end{equation}
This problem is invariant to the choice of representation, and so we rewrite it in parametrized form as
\begin{equation*}
    \hat{A}=\argmin_{A \in \scrG(D,d) \times \R^D }F^{(a)}(A):=\sum_{\bx\in\calX}\dist(\bx,A)= \sum_{\bx\in\calX}\|\bQ_L (\bx-\bm)\|.
\end{equation*}

To solve this problem, we generalize Algorithm~\ref{alg:IRLS} based on a straightforward generalization of the argument in Section~\ref{subsec:IRLS_derivation}. The affine IRLS method iterates by solving the weighted least squares problem at the $k$th iteration, where we replace distances to linear subspaces with distances to affine subspaces. The affine setting of the FMS algorithm can thus be written as
\begin{equation}\label{eq:IRLS_affine}
A^{(k+1)}=T^{(a)}_{\epsilon_k}(A^{(k)}):= \argmin_{A \in \scrA(D,d)}\sum_{\bx\in\calX}w^{(k)}_{\bx}\dist(\bx,A)^2,\,\,\,\text{where $w^{(k)}_{\bx}=\frac{1}{\max(\dist(\bx,A^{(k)}), \epsilon_k)}$}.
\end{equation}
The solution can be obtained from the standard PCA procedure applied to $\bx\in\calX$ with weights $1/{\dist(\bx,A^{(k)})}$. Therefore, $T^{(a)}_\epsilon(A)=(L,\bm)$ is such that $\bm= \sum_{\bx\in\calX}w_{\bx}\bx / \sum_{\bx\in\calX}w_{\bx}$, where $w_{\bx}= 1/\max(\dist(\bx,A),\epsilon)$, and $L$ is the span of top $d$ eigenvectors of $\sum_{\bx\in\calX}w_{\bx}(\bx-\bm)(\bx-\bm)^\top$. 

Similar to the linear case in \eqref{eq:epsilon_choose}, we choose the regularization parameters using dynamic smoothing. In this case, the update for $\epsilon$ becomes
\begin{equation}\label{eq:epsilon_choose_affine}
 \epsilon_k=\min(\epsilon_{k-1}, q_\gamma(\{\dist(\bx,A^{(k)})\}_{\bx\in\calX})).
 \end{equation}
Again, $\gamma$ is a hyperparameter that we discuss more in our theoretical results. We refer to the resulting algorithm as Affine Fast Median Subspace with Dynamic Smoothing (AFMS-DS). The method with fixed regularization is just referred to as Affine FMS (AFMS).

\begin{remark}
    AFMS is rotation and translation equivariant, in the sense that if $T_{\epsilon}^{(a)}(A) = (L,\bm)$, then $T_{\epsilon}^{(a)}(\bR A + \ba) = (\bR L,\bR \bm + \ba)$ for any $\bR^\top \bR = \bI_D$ and $\ba \in \R^D$. Furthermore, FMS is a special case of AFMS with the mean parameter fixed to be zero.
\end{remark}

\section{Theoretical Analysis}\label{sec:theoreticalanalysis}

This section establishes theoretical guarantees for the (A)FMS-DS algorithms by showing that the algorithms recover an underlying linear or affine subspace under specific inlier-outlier models. We emphasize that, in the case of FMS-DS, we achieve a global result, namely, the algorithm converges to the underlying subspace from any initialization. The main theorem for FMS-DS is presented in Section~\ref{sec:mainresults}, with further discussion of this result presented in Sections \ref{subsubsec:assump_discuss} and \ref{subsubsec:theory_ext}. We then establish a local convergence theorem for AFMS-DS in Section~\ref{sec:affine_analysis}.

\subsection{Main Result for Linear Subspace Recovery}\label{sec:mainresults}

This section establishes a linear convergence guarantee for Algorithm~\ref{alg:IRLS}. The main result, Theorem~\ref{thm:global}, shows that the algorithm linearly converges to an underlying subspace from any initialization if certain conditions on the data are satisfied. 

To state our theorem, we first define a regularized objective $F_{\epsilon}(L)$:
\begin{align}\label{eq:Fepsilon}
F_{\epsilon}(L)&=\sum_{\bx\in\calX:\dist(\bx,L)>\epsilon}\dist(\bx,L) + \sum_{\bx\in\calX:\dist(\bx,L)\leq \epsilon}\left(\frac{1}{2}\epsilon+\frac{\dist(\bx,L)^2}{2\epsilon}\right).
\end{align}
This is a natural extension of \eqref{eq:problem2} as we view it as the objective function for IRLS with a fixed regularization parameter \citep{JMLR:v15:zhang14a,lerman2015robust,daubechies2010iteratively,lerman2018fast,kummerle2020iteratively,peng2022global}. 

Next, we introduce the two assumptions needed for our main theorem; we discuss them further in Section~\ref{subsubsec:assump_discuss}. Both of these assumptions are restrictions on the inlier-outlier dataset $\calX$. The first assumption requires that low-dimensional subspaces that are not $L_\star$ may not contain a significant number of points. 

\begin{assumption}\label{assump:lowerdim}
    Let $\gamma_\star = |\calX_{\mathrm{in}}| / |\calX|$ be the percentage of inliers and $\gamma$ be the hyperparameter for FMS-DS. For all ($d-1$)-dimensional linear subspaces $L_0\subset L_\star$ and $d$-dimensional linear subspaces $L\neq L_\star$, $|\calX \cap (L_0\cup L)|/|\calX| < \gamma \leq \gamma_\star/2$.
\end{assumption} 

\begin{remark}
    Assumption \ref{assump:lowerdim} can be weakened by instead assuming that $L=L_\star$ is the unique subspace such that $q_{\gamma}(\{\dist(\bx,L)\}_{\bx\in\calX})=0$. However, in this case, we can only establish convergence to $L_\star$, and the linear convergence rate guarantee no longer holds. 
\end{remark}

The second assumption requires the definition of two summary statistics $\Sin$ and $\Sout$. 
\begin{align}\label{eq:Sin}
\Sin&=\min_{\bu\in L_\star:\|\bu\|=1}\sum_{\bx\in\calX_{\mathrm{in}}}|\bu^\top\bx|, \ 
\Sout=\max_{L\in \scrG(D,d)}\left\|\sum_{\bx\in\calX_{\mathrm{out}}}\frac{\bP_{L}\bx\bx^\top\bP_{L^\perp}}{\dist(\bx,L)}\right\|_2.
\end{align}
Heuristically, the statistic $\Sin$ measures how well-spread the inliers are on the underlying subspace, while the statistic $\Sout$ shows how well-aligned the outliers are along any $d$-dimensional subspace. Both statistics are somewhat natural. A large $\Sin$ prevents inliers from lying on or close to low-dimensional subspaces of $L_\star$. The statistic $\Sin$ is zero when there is a direction in $L_\star$ that is orthogonal to all inliers, and it is large when every direction in $L_\star$ has many inliers significantly correlated with it. On the other hand, having a small $\Sout$ is also natural, since it prevents outliers from being too low-dimensional. We remark that the $\Sin$ statistic is the same as the permeance statistic in \cite{lerman2015robust}, and the $\Sout$ statistic is the alignment statistic of \cite{maunu2019well}. 

In line with the discussion in the previous paragraph, the second assumption ensures that the outlier statistic $S_{\mathrm{out}}$ is sufficiently smaller than the inlier statistic $S_{\mathrm{in}}$ and an additional regularized inlier statistic. This holds in a wide variety of cases, see the discussions in \cite{maunu2019well,maunu2019robustsubspacerecoveryadversarial} as well as Section \ref{subsubsec:assump_discuss}.
\begin{assumption}\label{assump:global}
For the inlier-outlier dataset $\calX$, there exists a $\theta_0 \in (0,\pi/2)$, the following deterministic condition is  satisfied: 
{ \begin{equation}\label{eq:Singlobal}\ \cos\theta_0 \Sin\geq {3}\sqrt{d}\Sout.
\end{equation} }
\end{assumption}

{ 
We also impose the following assumption, which is necessary to ensure that the iterates remain within a bounded neighborhood around $L_\star$ for $k \geq 1$. This assumption depends on a fixed parameter $\alpha_0$ in $(0,\gamma/\gamma_\star)$. For simplicity,  in the concrete examples presented in Section~\ref{subsubsec:assump_discuss} we set  
$\alpha_0 = 0.5\cdot\gamma/\gamma_\star.$ 

\begin{assumption}\label{assump:global2}
There exists $\alpha_0 \in (0,\gamma/\gamma_\star)$ such that the following conditions hold for all subspaces $L \in \scrG(D, d)$:

\begin{itemize}
    \item Dominance of a quantile of the inlier distances over their mean: There exists $\beta_1>0$ such that 
    \begin{equation}\label{eq:global21}
     \beta_1 \cdot   q_{\alpha_0}(\{\dist(\bx, L)\}_{\bx \in \calX_{\mathrm{in}}}) \geq \mathrm{mean}(\{\dist(\bx, L)\}_{\bx \in \calX_{\mathrm{in}}}).
    \end{equation}

    \item Possible dominance of a quantile of the outlier distances over the mean of the inlier norms: 
    There exists $\beta_2\geq 1$ such that if  $\gamma \leq 1-(1-\alpha_0)\gamma_\star$, then 
    \begin{equation}\label{eq:global22}
        \beta_2 \cdot q_{\frac{\gamma-\alpha_0\gamma_\star}{1 - \gamma_\star}}
        \left(\{\dist(\bx, L)\}_{\bx \in \calX_{\mathrm{out}}}\right)
        \geq
        q_{\alpha_0}\left(\left\{\|\bx\|\right\}_{\bx \in \calX_{\mathrm{in}}}\right).
    \end{equation}
If $\gamma> 1-(1-\alpha_0)\gamma_\star$, then \eqref{eq:global22} cannot hold, since the quantile percentage is larger than 1. In this case, we still define $\beta_2=1$ and use this parameter in the next condition.
   
    \item Dominance of the inliers over the outliers in a spectral sense:  
  For  $\theta_0$ defined in Assumption~\ref{assump:global} and $\beta :={10\beta_1}{\beta_2}$,       \begin{equation}\label{eq:global23}
        \sin \theta_0 \cdot 
        \min_{\calX_{\mathrm{in}}' \subseteq \calX_{\mathrm{in}} \,:\, |\calX_{\mathrm{in}}'| \geq \alpha_0|\calX_{\mathrm{in}}|} 
        \sigma_d\left(\sum_{\bx \in \calX_{\mathrm{in}}'} \bx \bx^\top \right)
        \geq 2\beta \cdot \sigma_1\left(\sum_{\bx \in \calX_{\mathrm{out}}} \bx \bx^\top \right).
    \end{equation}
\end{itemize}
\end{assumption}
Here, \eqref{eq:global21} implies that the inliers are well-distributed over $L_\star$, capturing a property analogous to a lower bound on $\Sin$. Equation~\eqref{eq:global22} ensures that no subspace in $\scrG(D, d)$ contains a certain fraction of the outliers, capturing a property analogous to Assumption~\ref{assump:lowerdim}. Lastly, \eqref{eq:global23} guarantees a spectral dominance of the inliers over the outliers.

\begin{remark}
    The appearance of $\cos(\theta_0)$ and $\sin(\theta_0)$ in Assumptions \ref{assump:global} and \ref{assump:global2} introduces a tradeoff in this parameter. Namely, this means that $\theta_0$ should be chosen not too close to $0$ or $\pi/2$, since in \eqref{eq:Singlobal} or \eqref{eq:global23} the left hand sides become zero. For simplicity, we set $\theta_0 = \pi/4$ in our later analysis of these conditions in Section \ref{subsubsec:assump_discuss}.
\end{remark} 
}

With these assumptions, we are now ready to present our main theorem. This theorem guarantees the global convergence of FMS-DS with a linear rate.
\begin{theorem}\label{thm:global}[Global Linear Convergence of FMS-DS]
Let $\calX$ be a dataset satisfying Assumptions \ref{assump:lowerdim}, \ref{assump:global}, and \ref{assump:global2} for the given constant $\gamma>0$. Then, the sequence $L^{(k)}$ generated by Algorithm~\ref{alg:IRLS} with dynamic smoothing hyperparameter $\gamma$ converges to $L_\star$ in the sense that $\lim_{k \to \infty}\|\bP_{L^{(k)}}-\bP_{L_\star}\|_2 \to 0$. In addition, the sequence $F_{\epsilon_k}(L^{(k)})$ is nonincreasing and there exists a $0 < c < 1$ such that 
\begin{equation}\label{eq:theorem1}F(L^{(k)}) - F(L_\star) \leq c^k (F(L^{(0)}) - F(L_\star)).\end{equation}
\end{theorem}
{We refer the reader to \eqref{eq:theorem1_c} for the rigorous statement.}
To prove Theorem~\ref{thm:global}, we rely on several lemmas. To improve readability, the proof of these lemmas is deferred to Section \ref{app:linthm_lemmaproofs}.

\begin{proof}[Proof of Theorem~\ref{thm:global}] 

We begin the proof with two lemmas.
The first lemma is a simple monotonicity property on the regularized objective function $F_{\epsilon}$ defined in \eqref{eq:Fepsilon}.
\begin{lemma}[Properties of smoothed objective function]\label{lemma:smooth_properties}
For any $\epsilon_{2}<\epsilon_1$, $F_{\epsilon_{2}}(L)\leq F_{\epsilon_1}(L)$. In addition, $F_0(L)=F(L)$.
\end{lemma}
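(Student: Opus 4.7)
The plan is to reduce the lemma to a pointwise claim about the scalar function that governs each datum's contribution to $F_\epsilon$. Setting
\[
h_\epsilon(t) = \begin{cases} t, & t > \epsilon, \\ \tfrac{1}{2}\epsilon + \tfrac{t^2}{2\epsilon}, & t \leq \epsilon, \end{cases}
\]
one reads off from \eqref{eq:Fepsilon} that $F_\epsilon(L) = \sum_{\bx \in \calX} h_\epsilon(\dist(\bx, L))$. Hence it suffices to check (i) $\epsilon \mapsto h_\epsilon(t)$ is nondecreasing on $(0,\infty)$ for each fixed $t \geq 0$, and (ii) $h_0(t) = t$ under the natural convention, at which point both conclusions of the lemma follow by summing over $\bx \in \calX$.

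For (i), I would fix $t \geq 0$ and split into the three regimes $t \leq \epsilon_1$, $\epsilon_1 < t \leq \epsilon_2$, and $t > \epsilon_2$. In the first regime both values lie on the quadratic branch, and the map $g(\epsilon) = \epsilon/2 + t^2/(2\epsilon)$ has derivative $\tfrac{1}{2}\bigl(1 - t^2/\epsilon^2\bigr) \geq 0$ whenever $\epsilon \geq t$, so $g(\epsilon_1) \leq g(\epsilon_2)$. In the third regime both values coincide with $t$. The middle regime is the only one with any content: the required inequality $t \leq \epsilon_2/2 + t^2/(2\epsilon_2)$ is equivalent, after multiplying by $2\epsilon_2$, to $(\epsilon_2 - t)^2 \geq 0$, which is immediate.

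For (ii), I note that at $\epsilon = 0$ the condition $\dist(\bx,L) \leq \epsilon$ forces $\dist(\bx,L) = 0$, and for such a point the natural convention (consistent with $\lim_{\epsilon \downarrow 0} h_\epsilon(0) = 0$) assigns contribution $0$; the remaining sum, taken over $\bx$ with $\dist(\bx,L) > 0$, uses the linear branch and so reproduces $\sum_{\bx \in \calX} \dist(\bx, L) = F(L)$. There is no real obstacle here: the argument is entirely elementary, and the only mildly delicate piece is the middle case in (i), which collapses to a perfect-square identity.
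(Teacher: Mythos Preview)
Your proof is correct and follows essentially the same pointwise, case-by-case approach as the paper, including the perfect-square identity $(\epsilon_2 - t)^2 \geq 0$ for the intermediate regime $\epsilon_1 < t \leq \epsilon_2$. Your treatment of the regime $t \leq \epsilon_1$ is in fact more careful than the paper's: the paper asserts the two terms are \emph{equal} there, whereas you correctly note that both lie on the quadratic branch with different parameters and verify the needed monotonicity via the sign of $\tfrac{d}{d\epsilon}\bigl(\epsilon/2 + t^2/(2\epsilon)\bigr)$.
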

The second lemma shows that the smoothed objective value is nonincreasing over iterations of Algorithm~\ref{alg:IRLS}, where the amount of decrease is quantified.
\begin{lemma}[Decrease over iterations]\label{lemma:decrease}
For all $k\geq 1$, we have
\begin{equation}\label{eq:decrease_equation0}
F_{\epsilon_k}(L^{(k)})-F_{\epsilon_k}(L^{(k+1)})\geq \frac{\|\bP_{L^{(k)}}\bS_{L^{(k)},\epsilon_k}\bQ_{L^{(k)}}\|_F^2}{2\|\bS_{L^{(k)},\epsilon_k}\|},
\end{equation}
where 
\begin{equation}\label{eq:decrease_equation}
\bS_{L,\epsilon}=\sum_{\bx\in\calX}\frac{\bx\bx^\top}{\max(\epsilon,\dist(\bx,L))}.
\end{equation}
\end{lemma}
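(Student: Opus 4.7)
The plan is to exploit the majorize-minimization (MM) structure of the IRLS step. Writing $\bS := \bS_{L^{(k)},\epsilon_k}$, $L := L^{(k)}$, $L' := L^{(k+1)}$, I first observe that the scalar smoothed loss $f_\epsilon(t)$ (equal to $t$ for $t > \epsilon$ and $\tfrac{1}{2}\epsilon + t^2/(2\epsilon)$ for $t \leq \epsilon$) is tightly majorized at any $t_0 \geq 0$ by the quadratic $m_{t_0}(t) = t^2/(2w) + w/2$ with $w := \max(t_0,\epsilon)$; the inequality $m_{t_0} \geq f_\epsilon$ reduces case-by-case to nonnegative squares such as $(t - t_0)^2 \geq 0$ or $(t_0 - \epsilon)(\epsilon t_0 - t^2) \geq 0$. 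Summing with weights $w_\bx = \max(\dist(\bx, L^{(k)}), \epsilon_k)$ gives the global majorizer
\[ Q^{(k)}(L) = \sum_{\bx \in \calX}\Bigl(\frac{\dist(\bx,L)^2}{2 w_\bx} + \frac{w_\bx}{2}\Bigr) = -\tfrac{1}{2}\tr(\bP_L \bS) + \mathrm{const}, \]
which is tight at $L^{(k)}$; its unique minimizer on $\scrG(D,d)$ is the top-$d$ eigenspace of $\bS$, namely $L' = T_{\epsilon_k}(L)$. The MM sandwich gives $F_{\epsilon_k}(L) - F_{\epsilon_k}(L') \geq Q^{(k)}(L) - Q^{(k)}(L') = \tfrac{1}{2}[\tr(\bP_{L'}\bS) - \tr(\bP_L\bS)]$, reducing the lemma to the eigenvalue inequality $\tr(\bP_{L'}\bS) - \tr(\bP_L \bS) \geq \|\bP_L \bS \bQ_L\|_F^2 / \|\bS\|_2$.

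For this eigenvalue inequality I would construct an explicit competitor subspace $\tilde L$ by rotating along the singular directions of the off-diagonal block. Pick orthonormal bases $\bU$ of $L$ and $\bV$ of $L^\perp$, set $\bB := \bU^T \bS \bV$, and take its SVD $\bB = \bP_1 \bSigma \bP_2^T$ with $\bSigma = \diag(\sigma_1,\dots,\sigma_r)$; extend $\bP_1, \bP_2$ to orthogonal matrices. Let $\bu_i$ and $\bz_i$ denote the $i$-th columns of $\bU\bP_1$ and $\bV\bP_2$, and define $\bw_i := \cos\theta_i \,\bu_i + \sin\theta_i \,\bz_i$ for $i \leq r$ (and $\bw_i := \bu_i$ for $i > r$). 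The 2-planes $\mathrm{span}\{\bu_i,\bz_i\}$ are mutually orthogonal ($\bu_i \in L$, $\bz_i \in L^\perp$, each family orthonormal), so the $\bw_i$ remain orthonormal, $\tilde L := \mathrm{span}\{\bw_1,\dots,\bw_d\}$ is a valid competitor, and $\tr(\bP_{\tilde L}\bS) - \tr(\bP_L \bS)$ decouples into a sum of independent 2-plane contributions. Each contribution equals $\lambda_1(\bS_i) - \alpha_i$, where $\bS_i$ is the $2\times 2$ PSD principal submatrix of $\bS$ with diagonal $\alpha_i,\beta_i$ and off-diagonal $\sigma_i$. The characteristic polynomial gives $(\lambda_1(\bS_i) - \alpha_i)(\lambda_1(\bS_i) - \beta_i) = \sigma_i^2$, hence $\lambda_1(\bS_i) - \alpha_i = \sigma_i^2/(\lambda_1(\bS_i) - \beta_i) \geq \sigma_i^2/\|\bS\|_2$ using $\beta_i \geq 0$ and Cauchy interlacing $\lambda_1(\bS_i) \leq \|\bS\|_2$. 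Summing over $i$ yields $\|\bB\|_F^2/\|\bS\|_2 = \|\bP_L \bS \bQ_L\|_F^2/\|\bS\|_2$, and transferring this bound to $L'$ via its optimality among $d$-dimensional subspaces completes the argument.

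The main obstacle I anticipate is obtaining the sharp constant $1/(2\|\bS\|_2)$ rather than a weaker constant like $1/(4\|\bS\|_2)$. A naive single-parameter perturbation $\bU \mapsto (\bU + s\bV\bB^T)(\bI + s^2 \bB\bB^T)^{-1/2}$ with $s$ of order $1/\|\bS\|_2$ yields the correct $\|\bB\|_F^2/\|\bS\|_2$ scaling but loses a factor of two, because the coupling of $\bA$ with $\bB\bB^T$ forces $s$ to be suboptimally small. The SVD-aligned per-direction rotations bypass this by decoupling the problem into $d$ independent $2\times 2$ eigenvalue problems with exact closed-form solutions, restoring the sharp constant. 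A minor technicality is the rank-deficient case $r < d$, handled by not rotating directions with $\sigma_i = 0$; these contribute zero to both $\|\bB\|_F^2$ and the trace increment.
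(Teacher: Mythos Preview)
Your proof is correct. The MM framework (the scalar majorizer $m_{t_0}$ and the resulting quadratic surrogate $Q^{(k)}$) is exactly the paper's auxiliary function $G_\epsilon(L,L_0)$, and the reduction to the trace inequality $\tr(\bP_{L'}\bS)-\tr(\bP_L\bS)\geq \|\bP_L\bS\bQ_L\|_F^2/\|\bS\|_2$ is identical. The genuine difference is in how that trace inequality is proved. The paper (its Lemma~\ref{lemma:quadratic}) uses a Schur-complement argument: the rank-$d$ PSD matrix
\[
\bS_0=\begin{pmatrix}[\bS]_{L,L} & [\bS]_{L,L^\perp}\\ [\bS]_{L^\perp,L} & [\bS]_{L^\perp,L}([\bS]_{L,L})^{-1}[\bS]_{L,L^\perp}\end{pmatrix}
\]
satisfies $\bS\succeq\bS_0$, so $\sum_{i=1}^d\sigma_i(\bS)\geq\tr(\bS_0)=\tr([\bS]_{L,L})+\|[\bS]_{L^\perp,L}([\bS]_{L,L})^{-1/2}\|_F^2$, and the last term is at least $\|[\bS]_{L^\perp,L}\|_F^2/\|\bS\|_2$. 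Your route instead constructs an explicit competitor $\tilde L$ by SVD-aligned rotations in orthogonal $2$-planes and reads off $\lambda_1(\bS_i)-\alpha_i\geq\sigma_i^2/\|\bS\|_2$ from the $2\times2$ characteristic polynomial. The paper's argument is slicker and avoids building $\tilde L$, but tacitly needs $[\bS]_{L,L}$ invertible; your decoupled $2\times2$ approach is more constructive, makes the geometry transparent, and handles degeneracy cleanly via the $\sigma_i=0$ bookkeeping you already noted. Both yield the sharp constant $1/(2\|\bS\|_2)$.
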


Using the fact that $\epsilon_k$ is nonincreasing, Lemmas~\ref{lemma:smooth_properties} and ~\ref{lemma:decrease} imply that the sequence $(F_{\epsilon_k}(L^{(k)}))_{k \in \mathbb N}$ is nonincreasing:
\begin{equation}\label{eq:local_convergence}
F_{\epsilon_k}(L^{(k)})-F_{\epsilon_{k+1}}(L^{(k+1)})\geq F_{\epsilon_k}(L^{(k)})-F_{\epsilon_{k}}(L^{(k+1)})\geq 0.
\end{equation}
As a result, by basic monotonic convergence theory, the sequence $F_{\epsilon_k}(L^{(k)})$ must converge. Consequently, $F_{\epsilon_k}(L^{(k)})-F_{\epsilon_{k+1}}(L^{(k+1)})$ converges to zero.

The next two lemmas imply that the sequence $\epsilon_k$ converges to zero. The third lemma helps bound the right-hand side (RHS) of \eqref{eq:decrease_equation0} by a quantity proportional to $\epsilon_k$, but depending also on $L$, whereas the fourth lemma removes the dependence on $L$. 
We note that the first lower bound in the next lemma is on the Frobenius norm of $\bP_{L}\bS_{L,\epsilon}\bQ_{L}$, which is a Riemannian gradient of $F_{\epsilon}(L)$, and we thus view it as establishing nonzero gradient.
\begin{lemma}[Nonzero gradient, local guarantee]\label{lemma:gradient} If $L \in G(D,d)$, $\epsilon>0$ and at least half of the inliers are at distance greater than $\epsilon$ from $L$, that is, $|\{\bx\in\calX_{\mathrm{in}}:\dist(\bx,L)>\epsilon\}|\geq \frac{1}{2}|\calX_{\mathrm{in}}|$, then
\[
\|\bP_{L}\bS_{L,\epsilon}\bQ_{L}\|_F\geq  \frac{\cos\theta_1(L,L_\star)}{2\sqrt{d}}\Sin -\Sout \ \text{ and } \ \|\bS_{L,\epsilon}\|\leq \frac{\|\bX\|^2}{\epsilon}.
\]\end{lemma}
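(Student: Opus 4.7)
The plan is to handle the two bounds separately. The second bound is immediate: since $\max(\epsilon,\dist(\bx,L))\ge\epsilon$ for every $\bx$, the PSD ordering $\bS_{L,\epsilon}\preceq\tfrac{1}{\epsilon}\sum_{\bx\in\calX}\bx\bx^T = \tfrac{1}{\epsilon}\bX\bX^T$ holds termwise, and taking spectral norms yields $\|\bS_{L,\epsilon}\|\le\|\bX\|^2/\epsilon$.

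For the first bound, I would split $\bS_{L,\epsilon}=\bS^{\mathrm{in}}_{L,\epsilon}+\bS^{\mathrm{out}}_{L,\epsilon}$ into inlier and outlier contributions and apply the reverse triangle inequality in Frobenius norm, $\|\bP_L\bS_{L,\epsilon}\bQ_L\|_F\ge\|\bP_L\bS^{\mathrm{in}}_{L,\epsilon}\bQ_L\|_F-\|\bP_L\bS^{\mathrm{out}}_{L,\epsilon}\bQ_L\|_F$. For the inlier lower bound, the plan is to use the principal vector decomposition of $L$ and $L_\star$: orthonormal $\bu_j\in L$, $\bv_j\in L_\star$, $\bw_j\in L^\perp$ with angles $\theta_1\le\cdots\le\theta_d$ and $\bv_j=\cos\theta_j\,\bu_j+\sin\theta_j\,\bw_j$. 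Since the Frobenius norm dominates any unit-unit bilinear form, $\|\bP_L\bS^{\mathrm{in}}_{L,\epsilon}\bQ_L\|_F\ge\bu_1^T\bS^{\mathrm{in}}_{L,\epsilon}\bw_1$. For $\bx\in L_\star$ the identities $\bu_1^T\bx=\cos\theta_1\,(\bv_1^T\bx)$ and $\bw_1^T\bx=\sin\theta_1\,(\bv_1^T\bx)$ reduce this to $\cos\theta_1\sin\theta_1\sum_{\bx\in\calX_{\mathrm{\mathrm{in}}}}(\bv_1^T\bx)^2/\max(\epsilon,\dist(\bx,L))$. The half-inliers hypothesis, combined with the principal-angle bound $\dist(\bx,L)\le\sin\theta_d\|\bx\|$, then lets me convert this second-moment sum into the first-moment sum that defines $\Sin$; the $1/2$ reflects the half-fraction and the $1/\sqrt{d}$ arises via Cauchy--Schwarz across the $d$ principal directions.

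For the outlier upper bound, I aim to show $\|\bP_L\bS^{\mathrm{out}}_{L,\epsilon}\bQ_L\|_F\le\Sout$ by comparison with the unregularized sum $\sum_{\bx\in\calX_{\mathrm{\mathrm{out}}}}\bP_L\bx\bx^T\bQ_L/\dist(\bx,L)$, whose spectral norm is at most $\Sout$ by definition. I would split the outliers into $\{\dist>\epsilon\}$, where the regularized and unregularized weights coincide, and $\{\dist\le\epsilon\}$, where $\|\bP_L\bx\bx^T\bQ_L\|_2/\epsilon = \|\bP_L\bx\|\,\dist(\bx,L)/\epsilon\le\|\bP_L\bx\|$, allowing the regularization to be absorbed into a sum of outlier projections controlled via the flexibility of the $\max$ over subspaces in the definition of $\Sout$.

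The hardest step is the outlier bound. Since $\bP_L\bx\bx^T\bQ_L$ is rank-one but not PSD, the pointwise inequality $1/\max(\epsilon,\dist)\le 1/\dist$ does not transfer to the off-diagonal block via PSD monotonicity; regularization can in principle inflate a sum by dampening large cancelling terms. The case split above is designed to circumvent this, but closing the argument so that only a single copy of $\Sout$ (rather than $\sqrt{d}\,\Sout$ from a crude Frobenius-to-spectral conversion) appears on the right-hand side will require careful use of the rank-$d$ structure of $\bP_L\bS^{\mathrm{out}}_{L,\epsilon}\bQ_L$.
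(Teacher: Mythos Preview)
Your proposal has a genuine gap in the inlier lower bound. Picking a single principal direction $j=1$ and bounding $\bu_1^T\bS^{\mathrm{in}}_{L,\epsilon}\bw_1=\cos\theta_1\sin\theta_1\sum_{\bx}(\bv_1^T\bx)^2/\max(\epsilon,\dist(\bx,L))$ does not close: after using $\dist(\bx,L)\le\sin\theta_{\max}\|\bx\|$ you are left with the factor $\sin\theta_1/\sin\theta_{\max}$, which can be arbitrarily small (or zero if $\theta_1=0$), and the remaining second-moment sum $\sum(\bv_1^T\bx)^2/\|\bx\|$ does not control the first-moment sum $\sum|\bv_1^T\bx|$ in the right direction. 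No fixed choice of $j$ avoids this.

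The paper's key idea is to take the $\tan\theta_j$-weighted combination over all principal directions:
\[
\sum_{j=1}^d\tan\theta_j\,\bu_j^T\bS^{\mathrm{in}}_{L,\epsilon}\bw_j
=\sum_{\bx\in\calX_{\mathrm{in}}}\frac{\sum_j\sin^2\theta_j(\bv_j^T\bx)^2}{\max(\epsilon,\dist(\bx,L))}
=\sum_{\bx\in\calX_{\mathrm{in}}}\frac{\dist^2(\bx,L)}{\max(\epsilon,\dist(\bx,L))}.
\]
The numerator collapses to $\dist^2(\bx,L)$, so on the half of inliers with $\dist(\bx,L)>\epsilon$ this is simply $\dist(\bx,L)$, giving $\ge\tfrac12\sum_{\bx\in\calX_{\mathrm{in}}}\dist(\bx,L)\ge\tfrac12\Sin\sqrt{\sum_j\sin^2\theta_j}$. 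Dividing by $\sum_j\tan\theta_j$ and using $\sqrt{\sum\sin^2\theta_j}\big/\sum\tan\theta_j\ge\cos\theta_1/\sqrt{d}$ (with $\theta_1$ the \emph{largest} angle, the paper's convention) yields the existence of some $j_0$ with $\bu_{j_0}^T\bS^{\mathrm{in}}_{L,\epsilon}\bw_{j_0}\ge\tfrac{\cos\theta_1}{2\sqrt d}\Sin$.

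This also affects the outlier step. By working with the bilinear form at the specific $j_0$, the paper only needs $|\bu_{j_0}^T\bS^{\mathrm{out}}_{L,\epsilon}\bw_{j_0}|\le\Sout$, i.e.\ a spectral-norm bound on the off-diagonal outlier block, not a Frobenius bound. Your reverse-triangle-in-$\|\cdot\|_F$ route forces you to control $\|\bP_L\bS^{\mathrm{out}}_{L,\epsilon}\bQ_L\|_F$, which is up to $\sqrt d$ larger and is exactly the difficulty you flag in your last paragraph. (Your concern that regularization could inflate the outlier sum is legitimate; the paper asserts the spectral bound by appeal to the definition of $\Sout$ without addressing the $\epsilon$-truncation, so that step is handled loosely there as well.) The second bound $\|\bS_{L,\epsilon}\|\le\|\bX\|^2/\epsilon$ is fine as you wrote it.
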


The fourth lemma shows that under Assumption~\ref{assump:global2}, all iterations of FMS lie in a neighborhood of the true subspace $L_\star$. The size of this neighborhood is controlled by $\theta_0$, which appears in Assumption~\ref{assump:global2}.  
\begin{lemma}[$L^{(k)}$ is in a local neighborhood of $L_\star$]\label{lem:assumpholds2}
Under Assumption~\ref{assump:global2}, for all iterations $L^{(k)}$,   
 $\theta_1(L^{(k)},L_\star)\leq \theta_0$.
\end{lemma}

Combining \eqref{eq:local_convergence},  Lemmas~\ref{lemma:decrease}-\ref{lem:assumpholds2}, and the first inequality in \eqref{eq:Singlobal},  we have 
\begin{align}\label{eq:local_convergence2}
F_{\epsilon_k}(L^{(k)})-F_{\epsilon_{k+1}}(L^{(k+1)})
\geq \frac{\|\bP_{L^{(k)}}\bS_{L^{(k)},\epsilon_k}\bQ_{L^{(k)}}\|_F^2}{2\|\bS_{L^{(k)},\epsilon_k}\|}
\geq \epsilon_k\frac{\left(\frac{\Sin\cos\theta_0}{2\sqrt{d}}- \Sout\right)^2}{2\|\bX\|^2}\geq \epsilon_k\frac{\Sout^2}{8\|\bX\|^2}.
\end{align}
We concluded from \eqref{eq:local_convergence} that $F_{\epsilon_k}(L^{(k)})-F_{\epsilon_{k+1}}(L^{(k+1)})$ converges to zero. Thus \eqref{eq:local_convergence2} implies that $\epsilon_k$ converges to zero as well.

{
We will divide our proof of convergence into two parts. In the first part, we give a shorter proof that establishes the convergence of $L^{(k)}$, and in the second part, we give a longer proof that establishes the convergence rate of $F(L^{(k)})$. Both parts depend on the following lemma. It establishes a bound on the ratio of outlier and inlier objective values based on $\Sin$ and $\Sout$. Let the inlier and outlier objectives be written as 
$$F_{\mathrm{in}}(L)=\sum_{\bx\in\calX_{\mathrm{in}}}\dist(\bx,L) \text{ and } F_{\mathrm{out}}(L)=\sum_{\bx\in\calX_{\mathrm{out}}}\dist(\bx,L).$$
\begin{lemma}[Bound on the objective value]\label{lemma:objectivevalue_bound}
If the principal angles between $L$ and $L_\star$ are $\theta_1,\cdots,\theta_d$ and $F_{\mathrm{in}}$ and $F_{\mathrm{out}}$ are as above, then
\begin{align}\label{eq:lemma:objectivevalue_boundb1}
&\frac{|F_{\mathrm{out}}(L)-F_{\mathrm{out}}(L_\star)|}{F_{\mathrm{in}}(L)-F_{\mathrm{in}}(L_\star)} \leq \frac{\sqrt{d} \Sout\sum_{j=1}^d\theta_j}{\Sin \sum_{j=1}^d\sin\theta_j }.
\end{align}
\end{lemma}

\noindent
\textbf{Proof of convergence:} The definition \eqref{eq:epsilon_choose} implies that there exists a subsequence $1\leq k_1\leq k_2\leq \cdots$ such that $\epsilon_{k_l}=q_\gamma(\{\dist(\bx,L^{(k_l)}\}_{\bx\in\calX})$ and $\hat{L}=\lim_{l\rightarrow\infty}L_{k_l}$ exists. Since $q_\gamma(\{\dist(\bx,L^{(k_l)}\}_{\bx\in\calX}) =\epsilon_{k_l} \to 0$, by Assumption 1, $\hat{L}=L_*$. By the monotonicity of $F_{\epsilon_k}(L^{(k)})$, $\lim_{k\rightarrow\infty}F_{\epsilon_k}(L^{(k)})=\lim_{l\rightarrow\infty}F_{\epsilon_{k_l}}(L^{(k_l)})=F(L_*)$. By Lemma~\ref{lemma:objectivevalue_bound} and Assumption~\ref{assump:global}, this implies that $\lim_{k\rightarrow\infty}F_{in}(L^{(k)})\leq F_{in}(L_*)=0$, which implies that $L^{(k)}$ converges to $L_*$.

\noindent
\textbf{Proof of convergence rate:}  We now proceed with the proof of linear convergence of $F(L^{(k)})$ by showing that  $F_{\epsilon_k}(L^{(k)})$ converges linearly.} To establish that $F_{\epsilon_k}(L^{(k)})$ converges linearly, it suffices to show the existence of a constant $C_0>0$ such that 
\begin{equation}\label{eq:local_convergence3}
F_{\epsilon_k}(L^{(k)})-F(L_\star)<\epsilon_k \underbrace{\Bigg(\Big(1+\frac{\pi \sqrt{d}\Sout}{{2} \Sin}\Big)|\calX_{\mathrm{in}}|/c_3 +|\calX|/2\Bigg)}_{C_0},
\end{equation}
where $c_3$ will be specified in Lemma \ref{lemma:assumption1}.
The combination of \eqref{eq:local_convergence2} and \eqref{eq:local_convergence3} implies
\[
F_{\epsilon_k}(L^{(k)})-F_{\epsilon_{k+1}}(L^{(k+1)})\geq \frac{F_{\epsilon_k}(L^{(k)})-F(L_\star)}{C_0}\cdot \frac{\Sout^2}{8\|\bX\|^2}.
\]
{This implies $F_{\epsilon_{k+1}}(L^{(k+1)}) - F(L_*)\leq (1-\frac{\Sout^2}{8C_0\|\bX\|^2})(F_{\epsilon_{k}}(L^{(k)}) - F(L_*))$. As a result,
\begin{equation}\label{eq:natural_bound_k}
    F_{\epsilon_{k}}(L^{(k)}) - F(L_*)\leq \Big(1-\frac{\Sout^2}{8C_0\|\bX\|^2}\Big)^k(F_{\epsilon_{0}}(L^{(0)}) - F(L_*)),
\end{equation} 
and $F_{\epsilon_k}(L^{(k)})$ converges linearly to $F(L_\star)$. We can flip this to linear convergence in $F$ as well. Using Lemma \ref{lemma:objectivevalue_bound},
\begin{align*}
    F(L^{(k)})-F(L_\star)&=F_{\mathrm{in}}(L^{(k)})-F_{\mathrm{in}}(L_\star)+F_{\mathrm{out}}(L^{(k)})-F_{\mathrm{out}}(L_\star) \\
    &\geq\left(1-\frac{\Sout\sum_{j=1}^d\theta_j}{\frac{\sum_{j=1}^d\sin\theta_j}{\sqrt{d}} \Sin}\right) \left(F_{\mathrm{in}}(L^{(k)})-F_{\mathrm{in}}(L_\star)\right)\\&\geq  \left(1-\frac{\sqrt{d}\pi\Sout}{2\Sin}\right) \left(F_{\mathrm{in}}(L^{(k)})-F_{\mathrm{in}}(L_\star)\right)\geq  \left(1-\frac{\sqrt{d}\pi\Sout}{2\Sin}\right) \frac{|\calX_{\mathrm{in}}|}{2}\epsilon_k.
\end{align*} 
Combining this with the natural bound in \eqref{eq:natural_bound_k}, \[F(L^{(0)})-F(L_*)\geq \frac{\left(1-\frac{\sqrt{d}\pi\Sout}{2\Sin}\right) \frac{|\calX_{\mathrm{in}}|}{2}}{\left(1-\frac{\sqrt{d}\pi\Sout}{2\Sin}\right) \frac{|\calX_{\mathrm{in}}|}{2}+\frac{|\calX|}{2}}(F_{\epsilon_0}(L^{(0)})-F(L_*)),\] 
we obtain
\begin{align}\label{eq:theorem1_c}F(L^{(k)}) - F(L_\star)&\leq F_{\epsilon_k}(L^{(k)}) - F(L_\star)\leq \left(1-\frac{\Sout^2}{8C_0\|\bX\|^2}\right)^k (F_{\epsilon_0}(L^{(0)}) - F(L_\star)) \\ \nonumber 
&\leq \frac{\left(1-\frac{\sqrt{d}\pi\Sout}{2\Sin}\right) \frac{|\calX_{\mathrm{in}}|}{2}+\frac{|\calX|}{2}}{\left(1-\frac{\sqrt{d}\pi\Sout}{2\Sin}\right) \frac{|\calX_{\mathrm{in}}|}{2}} \left(1-\frac{\Sout^2}{8C_0\|\bX\|^2}\right)^k (F(L^{(0)}) - F(L_\star)).
\end{align}

We also observe that
\[
\sum_{\bx\in\calX_{\mathrm{out}}}\frac{\bP_{L}\bx\bx^\top\bP_{L^\perp}}{\dist(\bx,L)}
= \bZ_1^\top \bZ_2,
\]
where $\bZ_1$ is a matrix with $|\calX_{\mathrm{out}}|$ rows given by $\bP_L \bx$ for $\bx \in \calX_{\mathrm{out}}$, and $\bZ_2$ is a matrix with $|\calX_{\mathrm{out}}|$ rows consisting of the unit vectors
\[
\frac{\bP_{L^\perp}\bx}{\dist(\bx,L)}.
\]
Consequently, $\|\bZ_1\| \le \|\bX\|$ and $\|\bZ_2\| \le \|\bZ_2\|_F \le \sqrt{|\calX_{\mathrm{out}}|} \le \sqrt{|\calX|}$, and
\[
\Sout \leq \|\bZ_1\|\|\bZ_2\|\leq \|\bX\|\sqrt{|\calX|}.
\]
Moreover, since $C_0 \ge |\calX|/2$, it follows that
\[
\frac{1}{2}\leq 1-\frac{\Sout^2}{8C_0\|\bX\|^2} <1.
\]
Therefore, ~\eqref{eq:theorem1_c} implies that the convergence is linear.

We establish \eqref{eq:local_convergence3} through our final lemma, which establishes} a connection between the quantiles of the set of distances and the inlier objective value that is useful for controlling the iterates $\epsilon_k$.
\begin{lemma}\label{lemma:assumption1} Under Assumption 1, there exists $c_3>0$ such that for all $L\in \scrG(D,d)$,
\begin{equation}\label{eq:assumption12}
q_{\gamma}(\{\dist(\bx,L)\}_{\bx\in\calX})\geq c_3\frac{F_{\mathrm{in}}(L)}{|\calX_{\mathrm{in}}|}.
\end{equation}
\end{lemma}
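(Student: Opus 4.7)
The plan is to argue by contradiction and compactness. Suppose no $c_3>0$ satisfies \eqref{eq:assumption12}; then there is a sequence $\{L_n\}\subset\scrG(D,d)$ with $q_\gamma(\{\dist(\bx,L_n)\}_{\bx\in\calX})\cdot|\calX_{\mathrm{in}}|/F_{\mathrm{in}}(L_n)\to 0$. Since $\scrG(D,d)$ is compact, I would extract a subsequence with $L_n\to L^\infty$ and split on whether $L^\infty=L_\star$.

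If $L^\infty\neq L_\star$, Assumption~\ref{assump:lowerdim} forces $|\calX\cap L^\infty|<\gamma|\calX|<\nin/2$, so by continuity $F_{\mathrm{in}}(L_n)\to F_{\mathrm{in}}(L^\infty)>0$. Thus the vanishing ratio gives $q_\gamma(\{\dist(\bx,L_n)\})\to 0$, and by the definition of the quantile, more than $\gamma|\calX|$ points of $\calX$ have distance to $L_n$ going to zero. Since $\calX$ is finite, a further subsequence makes this close-point set constant, yielding a subset $A\subset L^\infty$ with $|A|>\gamma|\calX|$, contradicting Assumption~\ref{assump:lowerdim}.

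The case $L^\infty=L_\star$ is more delicate because both numerator and denominator vanish, so a rescaling is needed. Setting $\tau_n=\|\bP_{L_n}-\bP_{L_\star}\|_F$ and using principal angles and vectors $(\theta_j^{(n)},\bu_j^{(n)})$ of $(L_n,L_\star)$, the identity $\dist(\bx,L_n)^2=\sum_j\sin^2\theta_j^{(n)}(\bx^T\bu_j^{(n)})^2$ for inliers lets me extract a subsequence along which $\sin\theta_j^{(n)}/\tau_n\to\mu_j^\star\in[0,1]$ (with $\max_j\mu_j^\star>0$) and $\bu_j^{(n)}\to\bu_j^\star\in L_\star$. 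Then $\dist(\bx,L_n)/\tau_n\to\tilde d^\star(\bx):=\bigl(\sum_j(\mu_j^\star)^2(\bx^T\bu_j^\star)^2\bigr)^{1/2}$, a nonzero seminorm on $L_\star$ whose kernel sits in a $(d-1)$-dimensional subspace of $L_\star$. By Assumption~\ref{assump:lowerdim}, strictly fewer than $\gamma|\calX|$ points of $\calX$ satisfy $\tilde d^\star(\bx)=0$. Because outliers off $L_\star$ contribute $\Theta(1)$ distances while all contributions from $\calX\cap L_\star$ scale as $\tau_n$, for large $n$ the $\gamma|\calX|$-th smallest distance is realized within $\calX\cap L_\star$, and continuity of sorting then gives $q_\gamma/\tau_n$ converging to a positive quantile of $\{\tilde d^\star(\bx)\}_{\bx\in\calX\cap L_\star}$ while $F_{\mathrm{in}}/\tau_n$ converges to the positive sum $\sum_{\bx\in\calX_{\mathrm{in}}}\tilde d^\star(\bx)$. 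The resulting strictly positive limit ratio contradicts the assumption that the ratio tends to zero.

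The main obstacle is this second case: one must simultaneously rescale numerator and denominator by the principal-angle magnitude, use continuity of sorting under coordinatewise limits, and invoke Assumption~\ref{assump:lowerdim} on $(d-1)$-dimensional subspaces of $L_\star$ to ensure that the limiting seminorm is nondegenerate on enough inliers to keep the quantile-to-mean ratio bounded below.
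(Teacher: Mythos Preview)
Your proof plan is correct and follows essentially the same high-level structure as the paper's argument: a case split between $L$ bounded away from $L_\star$ (handled via the $d$-dimensional part of Assumption~\ref{assump:lowerdim}) and $L$ approaching $L_\star$ (handled by rescaling with the principal angles and invoking the $(d{-}1)$-dimensional part of Assumption~\ref{assump:lowerdim}). The paper normalizes $F_{\mathrm{in}}(L)$ by $\sin\theta_1(L,L_\star)\,\mathrm{mean}_{\bx\in\calX_{\mathrm{in}}}\|\bx\|$ and then argues that $q_\gamma/\sin\theta_1$ is uniformly bounded below, whereas you run an explicit compactness--contradiction argument directly on the ratio $q_\gamma\cdot|\calX_{\mathrm{in}}|/F_{\mathrm{in}}$ and pass to a limiting seminorm $\tilde d^\star$ on $L_\star$. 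Your route is a bit more systematic (the subsequential limit cleanly produces the $(d{-}1)$-dimensional kernel subspace $L_0\subset L_\star$), while the paper's normalization by $\sin\theta_1$ is slightly more explicit about which scale governs the near-$L_\star$ regime; both are nonconstructive in the end and use Assumption~\ref{assump:lowerdim} in the same two places.
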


To prove \eqref{eq:local_convergence3}, we begin with the natural bound 
\begin{align}\nonumber
    F_{\epsilon_k}(L^{(k)})&= \sum_{\bx\in\calX:\dist(\bx,L^{(k)})>\epsilon_k}\dist(\bx,L^{(k)}) + \sum_{\bx\in\calX:\dist(\bx,L^{(k)})\leq \epsilon_k}\left(\frac{1}{2}\epsilon_k+\frac{\dist(\bx,L^{(k)})^2}{2\epsilon_k}\right)\\\nonumber
    &\leq \sum_{\bx \in \calX}\dist(\bx,L^{(k)}) + \sum_{\bx\in\calX:\dist(\bx,L^{(k)})\leq \epsilon_k}\epsilon_k\\\label{eq:natural_bound}
    &\leq F(L^{(k)})+\epsilon_k|\calX|/2,
\end{align}
{where the inequality $\sum_{\bx\in\calX:\dist(\bx,L^{(k)})\leq \epsilon_k}\epsilon_k\leq \epsilon_k|\calX|/2$ 
follows from $\gamma\leq \gamma_*/2\leq 1/2$ from Assumption~\ref{assump:lowerdim} and the definition of $\epsilon_k$ in \eqref{eq:epsilon_choose}: $|\{\bx\in\calX:\dist(\bx,L^{(k)}\}|\leq \gamma|\calX|\leq \gamma_*|\calX|/2$.
}
Second, Lemma \ref{lemma:objectivevalue_bound} implies
\begin{align*}
    F(L^{(k)})-F(L_\star)&=F_{\mathrm{in}}(L^{(k)})-F_{\mathrm{in}}(L_\star)+F_{\mathrm{out}}(L^{(k)})-F_{\mathrm{out}}(L_\star) \\
    &\leq\left(1+\frac{\Sout\sum_{j=1}^d\theta_j}{\frac{\sum_{j=1}^d\sin\theta_j}{\sqrt{d}} \Sin}\right) \left(F_{\mathrm{in}}(L^{(k)})-F_{\mathrm{in}}(L_\star)\right).
\end{align*}
Third, when $\epsilon_k = q_{\gamma}(\{\dist(\bx,L^{(k)})\}_{\bx\in\calX})$, Lemma \ref{lemma:assumption1} implies
\[
F_{\mathrm{in}}(L^{(k)})-F_{\mathrm{in}}(L_\star)= F_{\mathrm{in}}(L^{(k)})\leq |\calX_{\mathrm{in}}|\epsilon_{k}/c_3.
\]
Combining the three equations above,
\begin{align}\label{eq:local_convergence4}
F_{\epsilon_k}(L^{(k)})-F(L_\star)&\leq (F(L^{(k)})-F(L_\star))+\epsilon_k|\calX|/2
\\ \nonumber
&\leq \left(1+\frac{\Sout\sum_{j=1}^d\theta_j}{\frac{\sum_{j=1}^d\sin\theta_j}{\sqrt{d}} \Sin}\right) \left(F_{\mathrm{in}}(L^{(k)})-F_{\mathrm{in}}(L_\star)\right) + \epsilon_k|\calX|/2
\\ \nonumber
&\leq \left(\left(1+\frac{\Sout\sum_{j=1}^d\theta_j}{\frac{\sum_{j=1}^d\sin\theta_j}{\sqrt{d}} \Sin}\right)|\calX_{\mathrm{in}}|/c_3 +|\calX|/2\right)\epsilon_k.
\end{align}

Using $\theta \leq \frac{\pi}{2} \sin \theta$ for $\theta \in [0, \pi/2]$, and the definition of $C_0$ in \eqref{eq:local_convergence3},
\[
\left(\left(1+\frac{\Sout\sum_{j=1}^d\theta_j}{\frac{\sum_{j=1}^d\sin\theta_j}{\sqrt{d}} \Sin}\right)|\calX_{\mathrm{in}}|/c_3 +|\calX|/2\right)\leq C_0
\] 
and \eqref{eq:local_convergence3} holds.
When $\epsilon_k\neq q_{\gamma}(\{\dist(\bx,L^{(k)})\}_{\bx\in\calX})$, then there exists $l<k$ such that $\epsilon_k=\epsilon_l = q_{\gamma}(\{\dist(\bx,L^{(l)})\}_{\bx\in\calX})$ and we have
\[
F_{\epsilon_k}(L^{(k)})\leq F_{\epsilon_l}(L^{(l)})\leq  C_0\epsilon_l=C_0\epsilon_k,
\]
where the second inequality follows from \eqref{eq:local_convergence4}.

\end{proof}

We conclude with a few remarks.

\begin{remark}[Choice of $c_3$ in Lemma~\ref{lemma:assumption1}]
By the proof of Lemma~\ref{lemma:assumption1}, it is sufficient to find $c_3$ such that no more than $\gamma/2\gamma_\star$ percentage of the inliers are in the set
 \[
\{\bx\in \calX_{\mathrm{in}}: |\bu_1^\top\bx|\leq c_3\cdot\mathrm{mean}_{\bx\in\calX_{\mathrm{in}}}\|\bx\|\},
\]
for any $\|\bu_1\|=1$, where $\mathrm{mean}$ is the average value of the set, and no more than $\gamma/2(1-\gamma_\star)$ percentage of the outliers are contained in the region
 \[
 \{\bx\in \calX_{\mathrm{out}}: |\bx^\top\bu_2|\leq c_3\cdot\mathrm{mean}_{\bx\in\calX_{\mathrm{in}}}\|\bx\|\}
 \]
 for any $\|\bu_2\|=1$.
\end{remark}

\begin{remark}
    Compared to recent guarantees for IRLS  \citep{daubechies2010iteratively,kummerle2018harmonic,kummerle2020iteratively,peng2022global}, our work represents a step forward in two key ways. We analyze an IRLS algorithm on a Riemannian manifold, $\scrG(D,d)$, and establish global convergence for a nonconvex problem, both of which are novel results in the literature. In contrast, most analyses rely on convex optimization to ensure global convergence. {Existing works on IRLS with nonconvex $\ell_p$ minimization ($p<1$) by  \cite{daubechies2010iteratively,kummerle2018harmonic,yang2022towards,peng2022global} achieve only local convergence.}
\end{remark}

\subsubsection{Discussion of Assumptions}
\label{subsubsec:assump_discuss}

Assumption \ref{assump:lowerdim} ensures that $L_\star$ contains more points than all other low-dimensional subspaces by some margin. In the Generalized Haystack Model discussed below, it is satisfied with probability 1.

Assumptions \ref{assump:global} and \ref{assump:global2} represent generic conditions on the inlier-outlier dataset, $\calX$, that can be satisfied in a wide variety of settings. We can illustrate distinct examples where the conditions of these assumptions hold using data models inspired by \cite{maunu2019well,maunu2019robustsubspacerecoveryadversarial}. To simplify our setting, we will assume Gaussian inliers and consider two models of outliers: Gaussian outliers and arbitrary outliers on the sphere. {  To further simplify, we set $\gamma = \gamma_*/2$, and we verify Assumptions~\ref{assump:global} and~\ref{assump:global2} with $\theta_0=\pi/4$ and $\alpha_0=\gamma / (2\gamma_\star) = 1/4$.}

The case of Gaussian inliers and Gaussian outliers is the Generalized Haystack Model \citep{maunu2019well}. In this model, we assume that $\bx\in\calX$ are i.i.d. sampled from a distribution $\mu=\alpha_{\mathrm{in}}\mu_{\mathrm{in}}+\alpha_{\mathrm{out}}\mu_{\mathrm{out}}$, that is, it is sampled from $\mu_{\mathrm{in}}$ with probability $\alpha_{\mathrm{in}}$ and sampled from $\mu_{\mathrm{out}}$ with probability $\alpha_{\mathrm{out}}$, where $\alpha_{\mathrm{in}}+\alpha_{\mathrm{out}}=1$. In particular, $\mu_{\mathrm{in}}$ is the distribution of inliers and is assumed to be $N(0,\bSigma_{\mathrm{in}}/d)$, and $\mu_{\mathrm{out}}$ is the distribution of outliers and is assumed to be $N(0,\bSigma_{\mathrm{out}}/D)$. In addition, we assume that $C\sigma_{\mathrm{in}}^2\bP_{L_\star}/d \psdgeq \bSigma_{\mathrm{in}}\psdgeq \sigma_{\mathrm{in}}^2\bP_{L_\star}/d$ and $c \sigma_{\mathrm{out}}^2\bI/D \psdleq \bSigma_{\mathrm{out}}\psdleq \sigma_{\mathrm{out}}^2\bI/D$, i.e., their condition numbers are $O(1)$. 
{ Under this model, we verify that Assumptions~\ref{assump:global} and~\ref{assump:global2} hold, as stated in the following proposition.

\begin{proposition}[Generalized Haystack Model]\label{prop:haystack_assump}

Suppose that inliers and outliers follow the Generalized Haystack Model, $n \gtrsim D^3 \log D$, $\gamma = \gamma_\star / 2$, $\alpha_0 = 1/4$, $\theta_0 = \pi/4$, and there exists an absolute constant $C$ such that $\gamma_\star \geq \min\left(\frac{C}{D - d},\, 0.01\right)$. Then, Assumptions~\ref{assump:global} and~\ref{assump:global2} hold with high probability as $n, D \to \infty$, provided that
\begin{equation}\label{eq:prop:haystack_assump}
\frac{\nin}{\nout} \gtrsim  \frac{\sigma_{\mathrm{out}}}{\sigma_{\mathrm{in}}}  \cdot \frac{d}{\sqrt{D(D - d)}}+  \frac{\sigma_{\mathrm{out}}^2}{\sigma_{\mathrm{in}}^2}  \cdot \frac{d}{D}.
\end{equation}

\end{proposition}
For a proof, see Section~\ref{sec:haystack}. This can be compared with the bound for small sample sizes derived for a different nonconvex method in~\cite{maunu2019well}, which states that
\[
\frac{n_{\mathrm{in}}}{n_{\mathrm{out}}} \gtrsim \frac{\sigma_{\mathrm{out}}}{\sigma_{\mathrm{in}}} \cdot \frac{d}{\sqrt{D(D - d)}}.
\]
The two bounds are of the same order when $\sigma_{\mathrm{out}} \approx \sigma_{\mathrm{in}}$ or $\sigma_{\mathrm{out}} \leq \sigma_{\mathrm{in}}$, and they only differ by a factor of $\sqrt{\frac{\sigma_{\mathrm{in}}}{\sigma_{\mathrm{out}}}}\cdot \sqrt{\frac{D-d}{D}}$ when $\sigma_{\mathrm{out}}\geq \sigma_{\mathrm{in}}$. One could also consider an asymptotic regime as in \cite{lerman2018fast,maunu2019well}, where it is possible to take the ratio $\nin/\nout$ to zero for fixed $d$ and $D$ as $n \to \infty$ in the Haystack model, where $\bSigma_{\mathrm{in}} = \bP_{L_\star}/d$ and $\bSigma_{\mathrm{out}} = \bI/D$.

In the case of adversarial outliers inspired by \cite{maunu2019robustsubspacerecoveryadversarial}, we allow outliers to follow an arbitrary distribution on the sphere, and we assume that inliers follow a Gaussian distribution $N(\bzero, \sigma_{\mathrm{in}}^2\bP_{L_\star}/d)$.  We refer to this as an adversarial model because outliers can follow any distribution. We restrict outliers to the sphere because our objective is not invariant to scale.  We  also assume that $\gamma_\star \geq 3/4$, so that in Assumption \ref{assump:global2} we only need to show that \eqref{eq:global21} and \eqref{eq:global23} hold. The following proposition verifies Assumptions~\ref{assump:global} and~\ref{assump:global2}.

\begin{proposition}[Adversarial Outliers]\label{prop:adv_assump}
    Suppose that $\calX \subset S^{D-1}$, $\gamma = \gamma_\star/2$, $\alpha_0 = 1/4$, $\theta_0 = \pi/4$, and $\gamma_\star \geq 3/4$. Then, Assumptions \ref{assump:global} and \ref{assump:global2} hold with high probability provided that 
    \begin{equation}\label{eq:adv_condition}
       \frac{\nin}{\nout} \gtrsim  \frac{d}{\sigma_{\mathrm{in}}^2}+\frac{\sqrt{d}}{\sigma_{\mathrm{in}}}.  
    \end{equation}
\end{proposition}
The proof of this proposition is given in Section \ref{subsec:prop_adv_proof}. We see that both Assumptions \ref{assump:global} and \ref{assump:global2} are satisfied once $\nin/\nout = O(d)$. This matches the result of \cite{maunu2019robustsubspacerecoveryadversarial}. 
}

\subsubsection{Extensions of Theory}
\label{subsubsec:theory_ext}

There are a couple of straightforward extensions of our theoretical results that we do not include in this paper.

\noindent
\textbf{Stability to noise:} The FMS-DS algorithm exhibits empirical stability to noise. Our analysis can be extended to show that FMS-DS converges to a neighborhood of $L_\star$ since any $L$  outside this neighborhood cannot be a fixed point. Technical analysis is deferred to future work.

\noindent
\textbf{Fixed smoothing parameters:}
The FMS algorithm \citep{maunu2019well} uses the fixed value $\epsilon_k=\epsilon$ for $k\geq 1$. However, they are only able to guarantee convergence for the Haystack model. A corollary of Theorem \ref{thm:global} allows us to give a more general guarantee of linear convergence of FMS under our generic conditions. Its proof follows from the proof of Lemma~\ref{lemma:gradient}, and it also implies that all stationary points of the regularized objective function $F_{\epsilon}(L)$ lie within a small neighborhood of $L_\star$. 
\begin{corollary}[Global convergence of FMS with fixed $\epsilon$]\label{cor:FMS}
Under the Assumptions \ref{assump:lowerdim}, \ref{assump:global}, and \ref{assump:global2} and if $\epsilon_k=\epsilon$ for all $k\geq 1$, then in $
2\big\|\sum_{\bx\in\calX}\bx\bx^\top\big\|\frac{F_{\epsilon}(L^{(0)})-F(L_\star)}{\epsilon{\Sin\cos\theta_0- \Sout}}$
iterations, FMS reaches a subspace $L^{(k)}$ satisfying $q_{0.5}(\{\dist(\bx,L^{(k)})\}_{\bx\in\calX_{\mathrm{in}}}))\leq \epsilon$.
\end{corollary}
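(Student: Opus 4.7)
The plan is to argue by contradiction, reusing the per-iteration decrease already assembled for Theorem~\ref{thm:global}, only now with the regularization frozen at $\epsilon_k \equiv \epsilon$. First I would suppose that for every $k \leq N$, where $N$ denotes the iteration count claimed in the corollary, we have $q_{0.5}(\{\dist(\bx, L^{(k)})\}_{\bx \in \calX_{\mathrm{\mathrm{in}}}}) > \epsilon$. Unpacking the definition of $q_{0.5}$, this means strictly more than half of the inliers satisfy $\dist(\bx, L^{(k)}) > \epsilon$ at each such step, which is exactly the hypothesis needed to invoke Lemma~\ref{lemma:gradient}.

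Next I would combine the one-step decrease of Lemma~\ref{lemma:decrease} with the gradient lower bound of Lemma~\ref{lemma:gradient}, reproducing the derivation of \eqref{eq:local_convergence2} verbatim. Because $\epsilon_k$ is constant, the monotonicity in Lemma~\ref{lemma:smooth_properties} is trivial and the chain of inequalities collapses to a uniform per-step decrease
$$F_{\epsilon}(L^{(k)}) - F_{\epsilon}(L^{(k+1)}) \;\geq\; \frac{\epsilon \, (\Sin \cos\theta_0 - \Sout)}{2 \, \bigl\|\sum_{\bx\in\calX}\bx\bx^T\bigr\|}, \qquad k = 0, \dots, N-1.$$

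Telescoping this inequality and using $F_{\epsilon}(L^{(N)}) \geq F(L^{(N)}) \geq F(L_\star)$ from Lemma~\ref{lemma:smooth_properties}, I would obtain
$$F_{\epsilon}(L^{(0)}) - F(L_\star) \;\geq\; N \cdot \frac{\epsilon \, (\Sin \cos\theta_0 - \Sout)}{2 \, \bigl\|\sum_{\bx\in\calX}\bx\bx^T\bigr\|}.$$
Rearranging contradicts the value of $N$ stated in the corollary, so at some iteration $k \leq N$ the median inlier distance must already have dropped to $\epsilon$ or below, which is the claim.

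The main subtlety is reconciling the exact gradient lower bound from Lemma~\ref{lemma:gradient}, namely $\tfrac{\cos\theta_1(L,L_\star)}{2\sqrt{d}} \Sin - \Sout$, with the cleaner expression $\Sin \cos\theta_0 - \Sout$ appearing in both \eqref{eq:local_convergence2} and the corollary; this hinges on Assumption~\ref{assump:global} to absorb the $1/(2\sqrt{d})$ prefactor and to uniformly bound the principal angle away from $\pi/2$. Once that algebraic simplification is carried out the rest is a routine telescoping argument, and the same uniform gradient lower bound immediately yields the supplementary claim from the surrounding discussion that stationary points of $F_{\epsilon}$ lie within a neighborhood of $L_\star$ that shrinks as $\epsilon \to 0$.
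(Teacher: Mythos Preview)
Your proposal is correct and follows essentially the same approach as the paper: the paper merely remarks that the corollary ``follows from the proof of Lemma~\ref{lemma:gradient},'' and you have filled in precisely the intended details---negating the conclusion gives exactly the half-of-inliers hypothesis of Lemma~\ref{lemma:gradient}, which combined with Lemma~\ref{lemma:decrease} reproduces the uniform per-step decrease of \eqref{eq:local_convergence2}, and telescoping against the lower bound $F_{\epsilon}(L^{(N)}) \geq F(L_\star)$ gives the contradiction. Your flagging of the constant reconciliation between $\tfrac{\cos\theta_1}{2\sqrt{d}}\Sin - \Sout$ and $\Sin\cos\theta_0 - \Sout$ is apt; this is the same simplification the paper makes in passing to \eqref{eq:local_convergence2}.
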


\subsection{Affine Subspace Recovery}\label{sec:affine_analysis}

This section presents a local theoretical guarantee for the AFMS-DS algorithm in Section~\ref{sec:affineIRLS}. While weaker than Theorem~\ref{thm:global} due to stricter assumptions, it is, to our knowledge, the first theoretical guarantee for robust affine subspace estimation.

We now present our assumptions for the main theorem on affine subspace recovery. First, we assume without loss of generality (WLOG) that $\bm_\star=0$ due to the translation equivariance of the IRLS method.
Second, the following assumption generalizes Assumption \ref{assump:lowerdim} to the affine setting.
\begin{assumption}\label{assumption3}
    For all ($d-1$)-dimensional affine subspaces $[A_0]\subset [A_\star]$ and $d$-dimensional affine subspaces $[A]\neq [A_\star]$ in $\scrA(D,d)$, $|\calX \cap ([A_0]\cup [A])|/|\calX| < \gamma \leq \gamma_\star/2$.
\end{assumption} 
Finally, we also have a generalization of our generic condition in Assumption \ref{assump:global} to the affine setting. To do this, we define a notion of distance between a representative $A = (L, \bm) \in \scrG(D,d) \times \R^D$ and an equivalence class $[A'] = [(L', \bm')] \in \scrA(D,d)$:
\begin{equation}\label{eq:distAAprime}
\dist(A, [A'])=\sqrt{\sum_{i=1}^d\theta_i^2(L,L')+\dist^2(\bm,[A'])},
\end{equation}
where $\dist(\bm,[A'])=\dist(\bm-\bm',L')$. We discuss this more in Section \ref{subsubsec:affassump}. Due to the translation equivariance of the AFMS method, without loss of generality, we can assume that $[A_{\star}]$ is a linear subspace, so that $[A_\star] = L_\star$. This simplification allows us to write $\dist(A_\star,[A])=\dist((L_\star, 0), [A])$. 
\begin{assumption}\label{assumption4}
There exists $c_0< \pi/2$ such that for any $[A] \in \scrA(D,d)$ satisfying $\dist(A_\star,[A])\leq c_0$, 
\begin{align}\label{eq:assumption4_1}
\frac{1}{\cos c_0} &{\min_{\calX_{\mathrm{in}}' \subseteq \calX_{\mathrm{in}} \,:\, |\calX_{\mathrm{in}}'| \geq |\calX_{\mathrm{in}}|/2}}\sum_{\bx\in\calX_{\mathrm{in}}'}\dist(\bx,[A])\geq \\ \nonumber
&\max 4 \Big(\frac{\pi}{2}\sum_{\bx\in\calX_{\mathrm{out}}} \dist(\bP_{L_\star}\bx, [A]), \theta_1(L,L_\star)^2\Big\| P_{L_\star^\perp}\sum_{\bx\in\calX_{\mathrm{out}}}\bx\bx^\top P_{L_\star^\perp}\Big\|
\Big).
\end{align}

\end{assumption} 
In Assumption~\ref{assumption4}, \eqref{eq:assumption4_1} implies that the influence of the inliers is larger than that of the outliers. Compared with the assumptions for Theorem~\ref{thm:global}, these assumptions are more restrictive. 

Our main theoretical result on AFMS-DS  guarantees that it converges locally and can be viewed as a generalization of Theorem~\ref{thm:global}. 
\begin{theorem}\label{thm:affine}[Local convergence of AFMS-DS] 
Under Assumptions~\ref{assumption3} and~\ref{assumption4} and $\dist(A_\star,[A^{(k)}])\leq c_0$ 
 for all $k\geq 1$, where $A^{(k)}$ is the sequence generated by AFMS-DS, then $[A^{(k)}]$ converges to $[A_\star]$. In addition, the sequence $F_{\epsilon_k}(A^{(k)})$ is nonincreasing and converges to $F(A_\star)$ linearly. 
\end{theorem}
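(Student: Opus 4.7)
The plan is to mirror the architecture of the proof of Theorem~\ref{thm:global}, introducing affine analogs of Lemmas~\ref{lemma:smooth_properties}--\ref{lemma:assumption1}. First I would define the affine smoothed objective $F^{(a)}_\epsilon(A)$ by replacing $\dist(\bx,L)$ with $\dist(\bx,[A])$ in \eqref{eq:Fepsilon}; this is well-defined on equivalence classes in $\scrA(D,d)$. The analog of Lemma~\ref{lemma:smooth_properties} extends verbatim, since monotonicity in $\epsilon$ is pointwise. The analog of Lemma~\ref{lemma:decrease} also goes through because the affine IRLS step \eqref{eq:IRLS_affine} is the exact minimizer of a weighted quadratic majorizer of $F^{(a)}_{\epsilon_k}$; the joint minimization in $(\bm,L)$ does not interfere with the majorization-minimization property, and the per-iteration decrease can be quantified in terms of a first-order residual at $A^{(k)}$ that now has both a ``translational'' component $\sum_{\bx} w_\bx^{(k)}(\bx-\bm^{(k)})$ and a ``rotational'' component analogous to $\bP_L \bS_{L,\epsilon}\bQ_L$.

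The crucial step is the affine counterpart of Lemma~\ref{lemma:gradient}. The optimality conditions of the weighted problem split into a mean component $\sum_{\bx} w_\bx^{(k)}(\bx-\bm^{(k+1)}) \in L^{(k+1)}$ and a subspace component equating $L^{(k+1)}$ to the top-$d$ eigenspace of the centered weighted covariance. Under the local hypothesis $\dist(A_\star,[A^{(k)}])\leq c_0$, the weighted empirical mean and centered covariance lie close to their inlier-only analogs, and Assumption~\ref{assumption4}, specifically~\eqref{eq:assumption4_1}, then forces a decrease of order $\epsilon_k$ whenever $[A^{(k)}]\neq[A_\star]$. The term $\theta_1(L,L_\star)^2\|\bP_{L_\star^\perp}\sum_{\bx\in\calX_{\mathrm{out}}}\bx\bx^T\bP_{L_\star^\perp}\|$ on the right of \eqref{eq:assumption4_1} is exactly the second-order ``tilting'' error that arises when outliers are projected through the shifting subspace; including it is what allows the inlier contribution to dominate.

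With the decrease estimate in hand, convergence of iterates follows as in the proof of Theorem~\ref{thm:global}: telescoping the decrease forces $\epsilon_k\to 0$, and then Assumption~\ref{assumption3} (applied via the affine quantile $q_\gamma(\{\dist(\bx,[A^{(k)}])\}_{\bx\in\calX})$) forces $[A^{(k)}]\to[A_\star]$ in the metric~\eqref{eq:distAAprime}. For the linear rate, I would extend Lemmas~\ref{lemma:objectivevalue_bound} and~\ref{lemma:assumption1} to the affine setting: the inlier objective $\sum_{\bx\in\calX_{\mathrm{in}}}\dist(\bx,[A])$ grows proportionally to $\dist(A_\star,[A])$ locally, while the outlier contribution is Lipschitz in the same quantity with a strictly smaller constant guaranteed by \eqref{eq:assumption4_1}. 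The quantile-to-inlier-objective bound carries over once one recenters at the weighted mean, giving $F^{(a)}_{\epsilon_k}(A^{(k)})-F(A_\star)\leq C_0\,\epsilon_k$, which together with the per-step lower bound produces the required linear contraction.

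The hard part will be the affine analog of Lemma~\ref{lemma:gradient}. Unlike the linear case, where the update is a pure eigenvector computation and the decrease is controlled by the single quantity $\|\bP_L \bS_{L,\epsilon}\bQ_L\|_F$, the affine step updates $\bm$ and $L$ simultaneously, and perturbation errors in the weighted mean propagate into the centered covariance and thence into the subspace update. Showing that these cross-terms are benign so that \eqref{eq:assumption4_1} still yields a decrease of order $\epsilon_k$ requires a careful first-order expansion of $T^{(a)}_\epsilon$ around $A_\star$, and this expansion only closes in a neighborhood of $[A_\star]$, which is precisely why only a local result is available here.
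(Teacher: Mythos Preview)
Your overall architecture---affine analogs of Lemmas~\ref{lemma:smooth_properties}--\ref{lemma:assumption1}, then the same telescoping and quantile arguments---matches the paper, and your treatment of the convergence-of-iterates and linear-rate steps is essentially what the paper does (via Lemma~\ref{lemma:assumption1_affine} and the bound $F^{(a)}_{\epsilon_k}(A^{(k)})-F^{(a)}(A_\star)\leq C\,F^{(a)}_{\mathrm{in}}(A^{(k)})\leq C'\epsilon_k$). The substantive difference is in how the per-step decrease is obtained.

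You propose to attack the affine analog of Lemmas~\ref{lemma:decrease}--\ref{lemma:gradient} by analyzing the optimality conditions of the weighted affine least-squares step, splitting into a translational residual $\sum_\bx w_\bx(\bx-\bm)$ and a rotational residual $\bP_L\bS\bQ_L$, and then doing a first-order expansion of $T^{(a)}_\epsilon$ near $A_\star$ to control the cross-coupling between the mean and subspace updates. The paper takes a different route that sidesteps exactly the difficulty you flag. It never analyzes where $T^{(a)}_\epsilon(A)$ lands. Instead it fixes an explicit path $C(t)$ from $A_\star$ to $A$ built from the principal vectors of $L,L_\star$ and the displacement $\bb$, sets $\phi(t)=G^{(a)}_\epsilon(C(t),A)$, and uses the one-dimensional inequality
\[
F_\epsilon(A)-F_\epsilon(T^{(a)}(A))\;\geq\;\phi(1)-\min_t\phi(t)\;\geq\;\frac{(\phi'(1))^2}{2\max_t\phi''(t)}.
\]
The second-derivative bound gives $\phi''(t)\leq C\dist^2(A_\star,[A])/\epsilon$, and the first-derivative bound at $t=1$ is where Assumption~\ref{assumption4} enters: the inlier part of $\phi'(1)$ is bounded below by $\cos^2\theta_1$ times the inlier objective, while the outlier part is bounded above by precisely the two terms on the right of \eqref{eq:assumption4_1}. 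The result is $F_\epsilon(A)-F_\epsilon(T^{(a)}(A))\geq c\,\epsilon\,\dist(A_\star,[A])$ directly, with no need for an affine version of Lemma~\ref{lemma:quadratic} and no tracking of mean--covariance cross-terms.

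Your approach is not obviously wrong, but it is harder: an affine analog of Lemma~\ref{lemma:quadratic} for the joint $(L,\bm)$ quadratic is not immediate, and the perturbation analysis of $T^{(a)}_\epsilon$ you describe would have to be made uniform over the whole $c_0$-ball, not just infinitesimally near $A_\star$. The paper's path-and-majorizer trick collapses all of this to two scalar derivative estimates, which is why the role of the second term in \eqref{eq:assumption4_1} is cleaner there than in your ``tilting error'' interpretation.
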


The proof is a natural extension of the argument presented in Theorem~\ref{thm:global} and is provided in Section~\ref{sec:proof_aff}. However, in the affine setting, Assumption \ref{assumption4} becomes more challenging to simplify to a more interpretable form. We conjecture that further analysis may yield results analogous to Assumptions~\ref{assump:global} and \ref{assump:global2}, potentially leading to global convergence guarantees. We leave this investigation for future work.

Our result in Theorem \ref{thm:affine} relies on an additional condition versus Theorem \ref{thm:global}, namely that $\dist(A_\star, [A^{(k)}]) \leq c_0$ for all $k \geq 1$, but it does not have an analog of Assumption \ref{assump:global2}. We note, however, that by following the proof of Theorem~\ref{thm:affine}, if the initialization is chosen such that $F_{\epsilon_{\mathrm{out}}}(A^{(0)})$ is sufficiently small (see definition of $F_{\epsilon}(A)$ in~\eqref{eq:Fepsilon_affine}), then this condition can be satisfied due to monotonicity of $F_{\epsilon_k}(A^{(k)})$. In particular, monotonicity of this sequence implies that the sequence $A^{(k)}$ is constrained to lie in a level set of $F_{\epsilon_{\mathrm{out}}}$. Consequently, a ``good initialization'' assumption is sufficient for the convergence of AFMS-DS.

\subsubsection{Discussion of Assumption 4}
\label{subsubsec:affassump}
Following the discussion in Section~\ref{subsubsec:assump_discuss}, we analyze Assumption~4 under both the adversarial and Haystack models. Unlike the earlier spherical setting, we do not consider spherized  models here, as they are not well-suited for affine subspace recovery.

Instead, we adopt a probabilistic framework in which a fraction $\alpha_{\mathrm{in}}$ of the samples are inliers and $\alpha_{\mathrm{out}} = 1 - \alpha_{\mathrm{in}}$ are outliers. Inliers are drawn from the distribution $\mu_{\mathrm{in}} = \mathcal{N}(\bm_\star, \bSigma_{\mathrm{in}}/d)$, where $\bSigma_{\mathrm{in}}$ is a positive semidefinite matrix with range in $L_\star$ and $\sigma_d(\bSigma_{\mathrm{in}}) \geq 1$. Outliers are either drawn from a general distribution $\mu_{\mathrm{out}}$ (affine adversarial model), or from a Gaussian $\mathcal{N}(\bm_\star, \bSigma_{\mathrm{out}}/D)$ with $\bSigma_{\mathrm{out}} \psdleq \sigma_{\mathrm{out}}^2 \bI$ (Affine Haystack model). As in the proof of our main theorem, we can assume without loss of generality that $\bm_\star = \bzero$ and identify $[A_\star]$ with $L_\star$.

\begin{proposition}[Affine Adversarial Model]\label{prop:affine_adversarial}
Under the affine adversarial model, Assumption~4 holds if
\[
\frac{\alpha_{\mathrm{in}}}{\alpha_{\mathrm{out}}} > O\left( \max\left( {\sqrt{d}}{ \int_{\bx \sim \mu_{\mathrm{out}}} \| P_{L_\star} \bx \|},\ 1,\ c_0 \cos^2(c_0) \sqrt{d}\left\| \int_{\bx \sim \mu_{\mathrm{out}}} P_{L_\star^\perp} \bx \bx^\top P_{L_\star^\perp} \right\| \right) \right).
\]
\end{proposition}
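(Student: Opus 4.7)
The plan is to verify Assumption~\ref{assumption4} by combining a uniform lower bound on the inlier term in \eqref{eq:assumption4_1} with concentration-based upper bounds on the two outlier terms, then invoking the stated ratio condition. I would parametrize the set $\{[A] : \dist(A_\star, [A]) \leq c_0\}$ using the principal angles $\theta_j(L,L_\star)$ together with the orthogonalized offset $\bm$ (choosing the representative with $\bm \in L^\perp$); these are exactly the parameters appearing implicitly in the definition \eqref{eq:distAAprime}. The three quantities in \eqref{eq:assumption4_1} are all linear combinations of $n$ i.i.d.\ terms (or in the last case a sum of rank-one random matrices), so after establishing deterministic bounds at the population level a standard concentration argument with an $\eps$-net over the $c_0$-ball yields the required uniform inequality.

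\textbf{Step 1 (inlier lower bound).} Since inliers are drawn from $N(0,\bSigma_{\mathrm{in}}/d)$ with range in $L_\star$ and $\sigma_d(\bSigma_{\mathrm{in}})\ge 1$, I would expand $\dist(\bx,[A])=\|\bQ_L(\bx-\bm)\|$ in the principal-vector basis of $(L,L_\star)$ and apply a Gaussian permeance-type lower bound analogous to the statistic $\Sin$ of Section~\ref{subsubsec:assump_discuss}, giving
\[
\sum_{\bx\in\calX_{\mathrm{in}}'}\dist(\bx,[A])\;\gtrsim\; \frac{\alpha_{\mathrm{in}} n}{\sqrt{d}}\,\dist(A_\star,[A])
\]
with high probability, uniformly over the $c_0$-ball. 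The $1/\sqrt d$ factor reflects that $\E|\bu^T\bx|\asymp 1/\sqrt d$ for a direction $\bu$ in $L_\star$.

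\textbf{Step 2 (outlier upper bounds).} Scalar Bernstein concentration gives
\[
\sum_{\bx\in\calX_{\mathrm{out}}}\dist(\bP_{L_\star}\bx,[A])\;\lesssim\;\alpha_{\mathrm{out}}\,n\,\int_{\bx\sim\mu_{\mathrm{out}}}\|\bP_{L_\star}\bx\|
\]
(after absorbing a small additive $O(\alpha_{\mathrm{out}} n\,c_0)$ term coming from the $\bm$-translation into the $1$ inside the $\max$), and matrix Bernstein gives
\[
\left\|\bP_{L_\star^\perp}\sum_{\bx\in\calX_{\mathrm{out}}}\bx\bx^T\bP_{L_\star^\perp}\right\|\;\lesssim\;\alpha_{\mathrm{out}}\,n\,\left\|\int_{\bx\sim\mu_{\mathrm{out}}}\bP_{L_\star^\perp}\bx\bx^T\bP_{L_\star^\perp}\right\|,
\]
both with high probability for $n$ large enough.

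\textbf{Step 3 (combine).} Using $\dist(A_\star,[A])\geq \theta_1(L,L_\star)$ and $\dist(A_\star,[A])\geq c_0\cos^2(c_0)\theta_1^2$ for $\theta_1\le c_0$ in the respective places, the inlier lower bound of Step~1 dominates $(4\pi/2)/\cos^2 c_0$ times each of the two outlier upper bounds of Step~2 precisely when $\alpha_{\mathrm{in}}/\alpha_{\mathrm{out}}$ exceeds the maximum of the three quantities in the proposition: the $\sqrt d$ factors arise from the inlier normalization in Step~1, the constant $1$ arises from the $\bm$-translation piece of the first outlier term, and the $c_0\cos^2(c_0)\sqrt d$ factor arises from pairing $\theta_1^2$ on the right of \eqref{eq:assumption4_1} with a single power of $\dist(A_\star,[A])\le c_0$ on the left.

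The main obstacle is the uniform inlier lower bound in Step~1: one must simultaneously capture the principal-angle perturbation and the translation perturbation in both $L_\star$ and $L_\star^\perp$, and in particular show that when $\theta_1$ is small but $\|\bm\|$ is comparable to $c_0$ the inlier contribution still exceeds the outlier term that carries the $\theta_1^2$ factor. A net argument over $\{[A]:\dist(A_\star,[A])\le c_0\}$ combined with Gaussian Lipschitz concentration should suffice, but the bookkeeping between the three types of perturbations (principal angles, $\bP_{L_\star}\bm$, $\bP_{L_\star^\perp}\bm$) and the $\cos^2 c_0$ safety factor on the left-hand side is the delicate piece.
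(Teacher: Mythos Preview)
Your three-step structure matches the paper's proof: lower-bound the inlier sum, upper-bound the two outlier quantities, compare term by term. The only substantive difference is in how you parametrize the Step~1 inlier lower bound, and this is exactly what controls the middle ``$1$'' in the proposition.

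The paper does not bundle angle and offset into a single $\dist(A_\star,[A])$; it writes $a=\dist(0,[A])$ and $\theta_1=\theta_1(L,L_\star)$ separately and obtains, at the population level,
\[
\sum_{\bx\in\calX_{\mathrm{in}}'}\dist(\bx,[A])\;\asymp\;\alpha_{\mathrm{in}}\, n\Big(\frac{\theta_1}{\sqrt d}+a\Big),
\]
with the $1/\sqrt d$ attached only to the principal-angle contribution. Your bound $\gtrsim \frac{\alpha_{\mathrm{in}} n}{\sqrt d}\dist(A_\star,[A])$ sacrifices a factor $\sqrt d$ on the translation piece; after matching it against the first outlier term, which the paper bounds by $\alpha_{\mathrm{out}} n\big(\theta_1\!\int\|\bP_{L_\star}\bx\|+a\big)$, you would arrive at $\alpha_{\mathrm{in}}/\alpha_{\mathrm{out}}\gtrsim\sqrt d$ rather than $\gtrsim 1$ in the offset-dominated regime. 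The fix is precisely the bookkeeping you flag in your last paragraph: use the symmetry of the Gaussian inlier law to obtain $\E\|\bQ_L(\bx-\bm)\|\ge\max\big(\E\|\bQ_L\bx\|,\ \|\bQ_L\bm\|\big)$, which separates the two contributions with the correct scaling. The paper also dispenses with the concentration-plus-$\eps$-net apparatus and works directly at the population (law-of-large-numbers) level, which is all that the $O(\cdot)$ statement requires.
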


Proposition~\ref{prop:affine_adversarial} suggests that if $c_0$ is chosen sufficiently small and the distribution $\mu$ has support of magnitude $O(1)$, then the outlier ratio must be bounded by $O(1/\sqrt{d})$.  This is better than the linear case, where the outlier ratio is bounded by $O(1/d)$, because the analysis of the affine case does not use the extra Assumption \ref{assump:global2}; instead, we assume that the initialization is sufficiently good, specifically by requiring that $c_0$ is small enough.

\begin{proposition}[Affine Haystack Model]\label{prop:affine_haystack}
Under the Affine Haystack model, and assuming $\sigma_{\mathrm{out}} \leq \sqrt{D/d}$, Assumption~4 and equation~\eqref{eq:assumption4_1} hold provided that
\[
\frac{\alpha_{\mathrm{in}}}{\alpha_{\mathrm{out}}} \geq \max\left(1,\, O\left(c_0 {\frac{\sqrt{d}}{D}} \sigma_{\mathrm{out}}^2\right)\right).
\]
\end{proposition}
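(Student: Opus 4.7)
The plan is to verify equation~\eqref{eq:assumption4_1} uniformly over $[A]=(L,\bm)$ with $\dist(A_\star,[A])\leq c_0$ by replacing each empirical sum with its population expectation through Gaussian and matrix concentration, computing those expectations in terms of the two geometric parameters
$T \defeq \sqrt{\sum_{j=1}^{d}\sin^{2}\theta_{j}(L_\star,L)}$ and $M \defeq \|\bQ_L\bm\|$, and comparing orders. Using $\sin\theta\geq(2/\pi)\theta$ on $[0,c_0]$, the natural scale $\sqrt{T^{2}/d+M^{2}}$ satisfies $\dist(A_\star,[A])/\sqrt{d}\lesssim\sqrt{T^{2}/d+M^{2}}\leq\dist(A_\star,[A])$, so it suffices to show that both RHS terms are dominated by $\alpha_{\mathrm{in}}n\sqrt{T^{2}/d+M^{2}}$.

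For the LHS, an inlier $\bx\sim N(\bzero,\bSigma_{\mathrm{in}}/d)$ lies in $L_\star$, and, since $\bSigma_{\mathrm{in}}\psdgeq\bP_{L_\star}$, the CS decomposition of $(L,L_\star)$ yields
\[
\E\|\bQ_L\bx-\bQ_L\bm\|^{2} = \tfrac{1}{d}\tr(\bQ_L\bSigma_{\mathrm{in}}\bQ_L)+M^{2}\geq T^{2}/d+M^{2},
\]
because the cross term vanishes in expectation. Writing $\bQ_L\bx=\sum_{j}a_{j}\sin\theta_{j}\bw_{j}$ in the orthonormal basis $\{\bw_{j}\}$ supplied by the CS decomposition reduces the $L^{1}$ norm to the expectation of a weighted chi-type sum, which is comparable to $\sqrt{\sum_{j}\sin^{2}\theta_{j}/d}=T/\sqrt{d}$ regardless of how the $\sin\theta_j$ are spread. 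Incorporating the shift $\bQ_L\bm$ and then applying sub-Gaussian concentration of the sample average over $\calX_{\mathrm{in}}$ gives
\[
\tfrac{1}{\cos^{2}c_{0}}\sum_{\bx\in\calX_{\mathrm{in}}}\dist(\bx,[A]) \gtrsim \alpha_{\mathrm{in}}\,n\sqrt{T^{2}/d+M^{2}}.
\]

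For the first RHS term, $\bP_{L_\star}\bx$ for an outlier $\bx\sim N(\bzero,\bSigma_{\mathrm{out}}/D)$ is Gaussian on $L_\star$ with covariance trace at most $d\sigma_{\mathrm{out}}^{2}/D$, so $\bQ_L\bP_{L_\star}\bx$ has covariance trace at most $(\sigma_{\mathrm{out}}^{2}/D)T^{2}$ and $\E\|\bQ_L\bP_{L_\star}\bx-\bQ_L\bm\|\lesssim \sigma_{\mathrm{out}}T/\sqrt{D}+M$. Under $\sigma_{\mathrm{out}}\leq\sqrt{D/d}$ this is $\lesssim\sqrt{T^{2}/d+M^{2}}$, so after concentration,
\[
\tfrac{\pi}{2}\sum_{\bx\in\calX_{\mathrm{out}}}\dist(\bP_{L_\star}\bx,[A])\lesssim \alpha_{\mathrm{out}}\,n\sqrt{T^{2}/d+M^{2}}.
\]
For the second RHS term, matrix Bernstein gives $\|\bP_{L_\star^{\perp}}\sum_{\bx\in\calX_{\mathrm{out}}}\bx\bx^{T}\bP_{L_\star^{\perp}}\|_{2}\lesssim\alpha_{\mathrm{out}}\,n\,\sigma_{\mathrm{out}}^{2}/D$, while $\theta_{1}^{2}\leq c_{0}\theta_{1}\leq c_{0}\sqrt{d}\sqrt{T^{2}/d+M^{2}}$, so this term is $\lesssim c_{0}\sqrt{d}(\sigma_{\mathrm{out}}^{2}/D)\,\alpha_{\mathrm{out}}\,n\sqrt{T^{2}/d+M^{2}}$. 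Demanding that the LHS exceed $4$ times each RHS term yields exactly $\alpha_{\mathrm{in}}/\alpha_{\mathrm{out}}\geq\max\{1,\ O(c_{0}\sqrt{d}\,\sigma_{\mathrm{out}}^{2}/D)\}$, matching the proposition.

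The hard part is turning these pointwise-in-$[A]$ expectations into a uniform bound over the compact set $\{[A]:\dist(A_\star,[A])\leq c_{0}\}$; since this is a bounded piece of a finite-dimensional manifold (the Grassmannian paired with a bounded translation), a standard $\eps$-net argument combined with Lipschitz control of each quantity in the pair $(L,\bm)$ suffices, and the ambient sample complexity ($n=\Omega(D^{3}\log D)$) used elsewhere in the paper is more than enough. A secondary subtlety lies in the LHS step: a blanket Gaussian Lipschitz bound $(\E\|\cdot\|)^{2}\geq \E\|\cdot\|^{2}-\|\mathrm{Cov}\|_{2}$ can degenerate when the operator norm of the covariance of $\bQ_L\bx$ matches its trace (single-dominant-direction case), but the explicit representation in the CS basis sidesteps this because $\E\sqrt{\sum_{j}\beta_{j}\chi_{1,j}^{2}}$ is always comparable to $\sqrt{\sum_{j}\beta_{j}}$ regardless of the spread of the weights $\beta_{j}$.
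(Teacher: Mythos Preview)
Your argument is correct, but it follows a more computational route than the paper's. The paper's key observation for the first RHS term is a clean covariance-domination trick: since $\dist(\cdot,[A])$ is convex, and since under $\sigma_{\mathrm{out}}\leq\sqrt{D/d}$ one has the ordering $\bSigma_{\mathrm{in}}/d \psdgeq \bP_{L_\star}/d \psdgeq \bP_{L_\star}\bSigma_{\mathrm{out}}\bP_{L_\star}/D$, Jensen's inequality (write the larger Gaussian as the smaller plus an independent increment) gives directly
\[
\int_{\mu_{\mathrm{in}}}\dist(\bx,[A]) \;\geq\; \int_{\mu_0}\dist(\bx,[A]) \;\geq\; \int_{\mu_{\mathrm{out}}}\dist(\bP_{L_\star}\bx,[A]),
\]
so the inlier and projected-outlier terms can be compared \emph{without} ever computing either one in terms of your parameters $(T,M)$. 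The second RHS term is then handled by the same operator-norm bound you use, combined with the inlier scaling $\sum_{\calX_{\mathrm{in}}}\dist(\bx,[A])\asymp \alpha_{\mathrm{in}}n(\theta_1/\sqrt{d}+a)$ already derived in the adversarial proposition.

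Your approach, by contrast, makes the common scale $\sqrt{T^2/d+M^2}$ explicit and bounds each side against it; this is longer but has the merit of being self-contained and of making the role of the geometric parameters transparent. You also explicitly address concentration and the $\eps$-net for uniformity over $[A]$, which the paper's proof leaves implicit (working at the population level and appealing to standard arguments). Both routes land on the same condition; the paper trades explicitness for brevity via the Jensen/covariance-ordering shortcut.
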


We note that the required inlier fraction for the Affine Haystack model is $O(1)$, in contrast to the $O(d/\sqrt{D(D-d)})$ requirement in the Generalized Haystack Model discussed in Section~\ref{subsubsec:assump_discuss}. However, the Affine Haystack model allows outliers to have magnitudes up to a factor of $\sqrt{D/d}$ larger.

The $O(1)$ inlier requirement is intuitive. For instance, in the case of estimating a $0$-dimensional affine subspace (i.e., computing the median), the breakdown point is $0.5$, meaning that at least half of the data must be inliers for robust recovery. However, since the outliers in the Affine Haystack model are relatively benign, we expect the breakdown point of AFMS to be lower. We leave a precise characterization of this threshold to future work.

\textbf{Discussion of Distance:} For our analysis, we assume that the set of inliers $\sX_{\mathrm{in}}$ lies on a $d$-dimensional affine subspace $[A_\star]=[(L_\star,\bm_\star)]$.  This defines a distance between a representative for $[A]$ and the affine subspace $[A']$. To our knowledge, this notion of distance is new in the literature and may be of independent interest. This is not a true metric on the affine Grassmannian since it depends on the choice of $\bm$ in the representation for $[A]$. Note, however, that $\dist(A, [A']) = 0$ if and only if $[A] = [A']$.

\section{Numerical Simulations}
\label{sec:numerics}

In this section, we present experiments on synthetic and real data. Our first experiment in Section \ref{subsec:exp1} compares the FMS algorithm with various competitive RSR methods. The second and third experiments in Sections \ref{subsec:exp2} and \ref{subsec:exp3} investigate the performance of FMS under various regularization strategies, highlighting the effectiveness of the proposed dynamic smoothing approach in escaping saddle points. The final experiment in Section \ref{sec:practical} demonstrates the practical utility of FMS for dimensionality reduction in training neural networks. The code used for our experiments and implementations is available at: \url{https://github.com/alp-del/Robust-PCA}.

\subsection{Experiment 1: Subspace Recovery Across Methods}
\label{subsec:exp1}

For the first experiment, we evaluate the performance of four competitive RSR algorithms on synthetic datasets generated under a semi-adversarial inlier-outlier model. The algorithms chosen are RANSAC~\citep{maunu2019robustsubspacerecoveryadversarial}, STE~\citep{yu2024subspace}, TME~\citep{tyler1987distribution,zhang2016robust}, and FMS. We also include PCA as a baseline subspace recovery method. Comparisons with other RSR algorithms have been conducted in prior work and consistently show inferior performance across a range of regimes (see, e.g., \cite{lerman2018overview}), especially when one is interested in exact recovery. Although RANSAC and TME were not originally developed for RSR, we use their RSR-adapted versions in our experiments. Finally, we note that the most common implementation of DPCP \citep{NEURIPS2018_af21d0c9} is equivalent to FMS, but targets the orthogonal complement of the desired subspace. As a result, it yields the same solution as FMS when using the same initialization and parameters.

The inlier-outlier models used here aim to demonstrate the performance of these RSR algorithms in a semi-adversarial setting. The datasets consist of two components: inliers lying on a subspace of dimension \( d \in \{3, 10, {50}\} \) and sampled from a standard Gaussian distribution supported on that subspace, and outliers lying on a separate subspace of dimension \( d_{\mathrm{out}} \in \{1, 5, 10, {50}\} \) and independently sampled from a standard Gaussian distribution supported on the outlier subspace.
WLOG, we assume that the ambient dimension is $D=d+d_{\mathrm{out}}$. As a final step, we normalize all points so that they lie on the unit sphere in $\mathbb{R}^{D}$ to remove effects of scale. 

The goal is to compare the performance of the RSR algorithms in these semi-adversarial settings. Therefore, we fix the total number of data points at $n=160$, and we vary the proportion of inliers and outliers to see where each algorithm breaks down.

Since we are comparing iterative algorithms, we ensure a fair comparison of running times by setting the maximum number of iterations to 200 for the RANSAC, FMS, TME, and STE methods. In particular, we emphasize that RANSAC is executed with only 200 candidate samples. This choice is motivated by the fact that the computational complexity of a single iteration of FMS or STE is comparable to that of evaluating one candidate sample in RANSAC.

For the STE method, we use a gamma parameter of 2, and for both STE and TME we use regularization of $\epsilon = 10^{-10}$. We run FMS with fixed $\epsilon = 10^{-10}$, as well as dynamic smoothing with two choices of $\gamma=0.5$ and $\gamma = 0 .1$. To ensure statistical reliability, we generate 200 datasets for each pair $d$ and $d_{\mathrm{out}}$, and the average performance is reported. FMS is initialized using the PCA subspace, and TME and STE are initialized with the sample covariance.

Let the orthonormal matrix \( \bP \in \mathbb{R}^{D \times d} \) represent the basis of the true inlier subspace, and let \( \widetilde{\bP} \in \mathbb{R}^{D \times d} \) denote the basis of the estimated subspace. We measure the subspace estimation error by \( \left\| \widetilde{\bP} \widetilde{\bP}^\top - \bP \bP^\top \right\|_2,\) which corresponds to the sine of the largest principal angle between the true subspace and its estimate. Due to the fact that we are in a noiseless regime, we plot all errors on a log scale. Since error is reported as the geometric mean of the errors over the 200 datasets, the result is the average log-error.

\begin{figure}[!ht]
  \centering
  \setlength{\tabcolsep}{3mm}
  \renewcommand{\arraystretch}{0}

\begin{tabular}{@{}>{\centering\arraybackslash}m{.05\linewidth}
                @{\hspace{2.5mm}}
                >{\centering\arraybackslash}m{.3\linewidth}
                @{\hspace{2.5mm}}
                >{\centering\arraybackslash}m{.3\linewidth}
                @{\hspace{2.5mm}}
                >{\centering\arraybackslash}m{.3\linewidth}@{}}
      & \textbf{$d = 3$}
      & \textbf{$d = 10$} & \textbf{$d = 50$} \\  \noalign{\vskip 3mm} 
      \rowlabel{$d_{\mathrm{out}} = 1$} &
        \plot{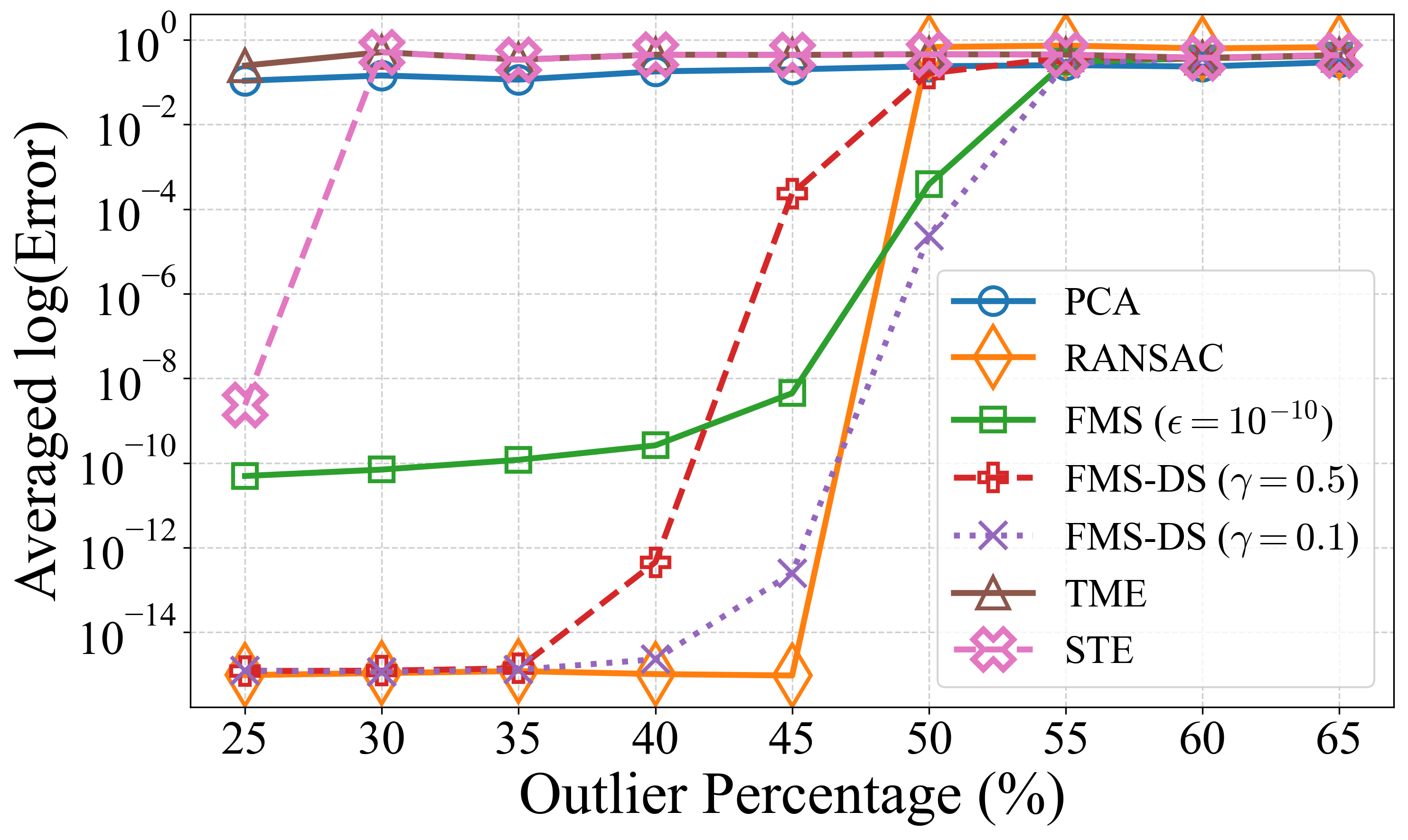} &
        \plot{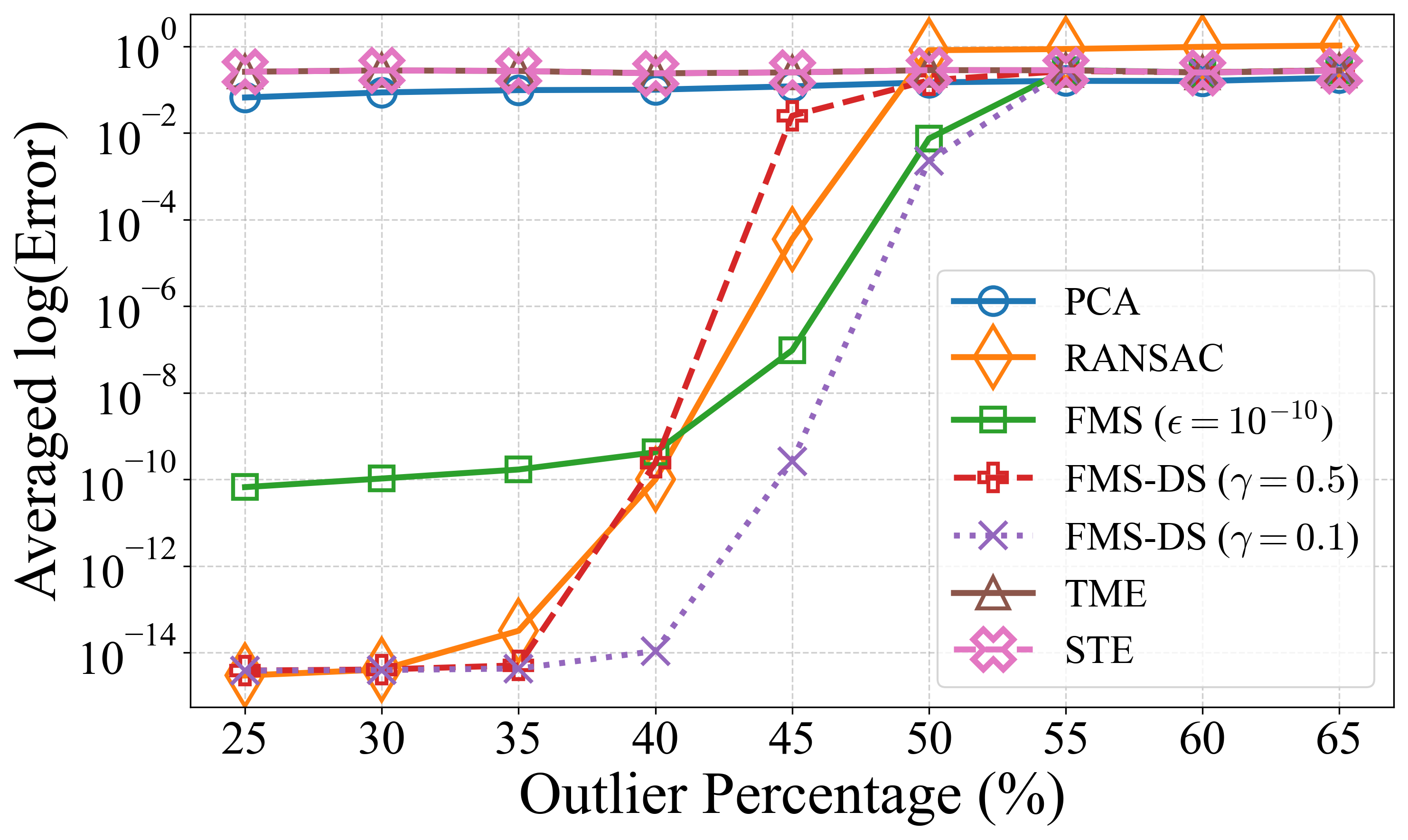}&
        \plot{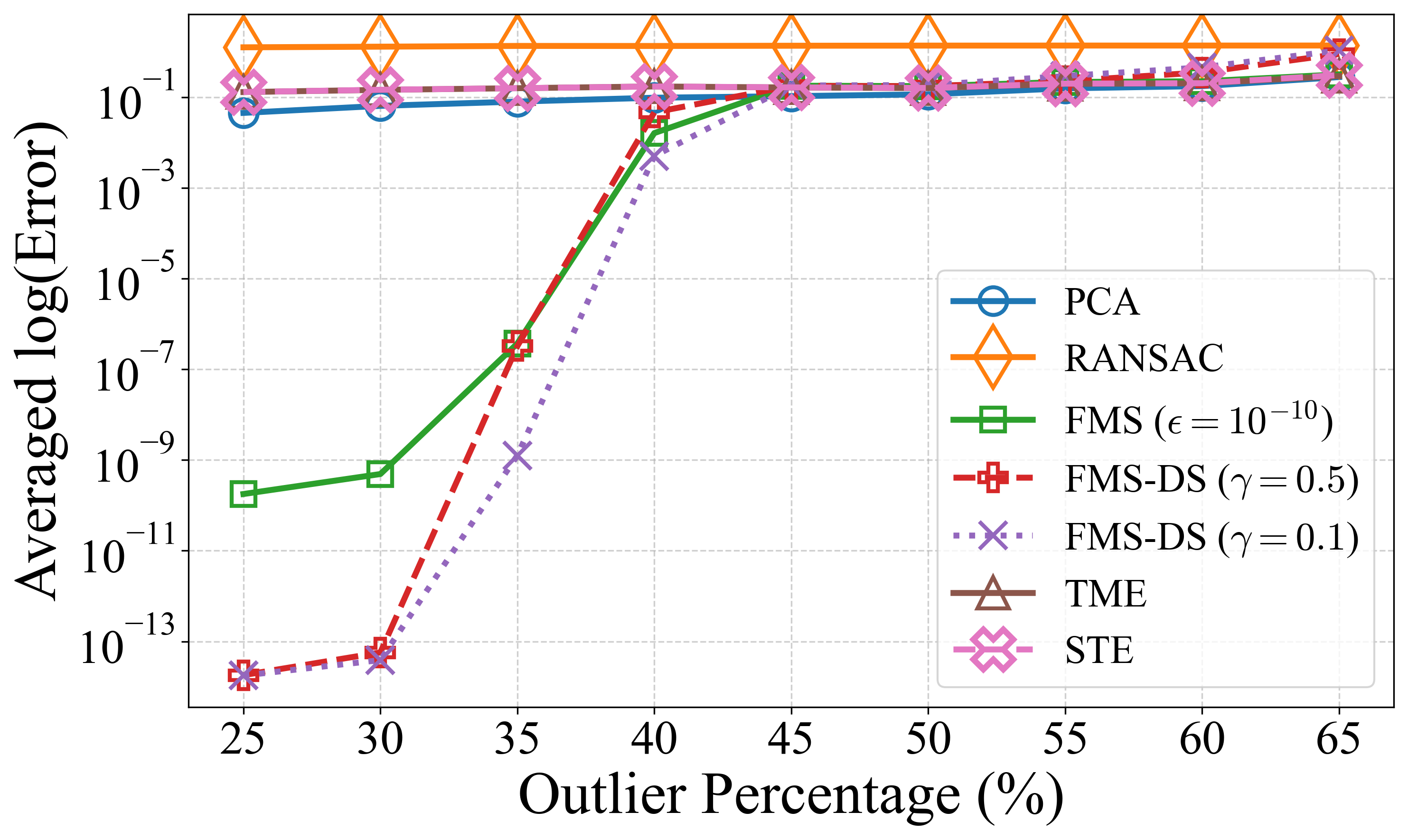} \\
      \noalign{\vskip 6pt}  

      \rowlabel{$d_{\mathrm{out}} = 5$} &
        \plot{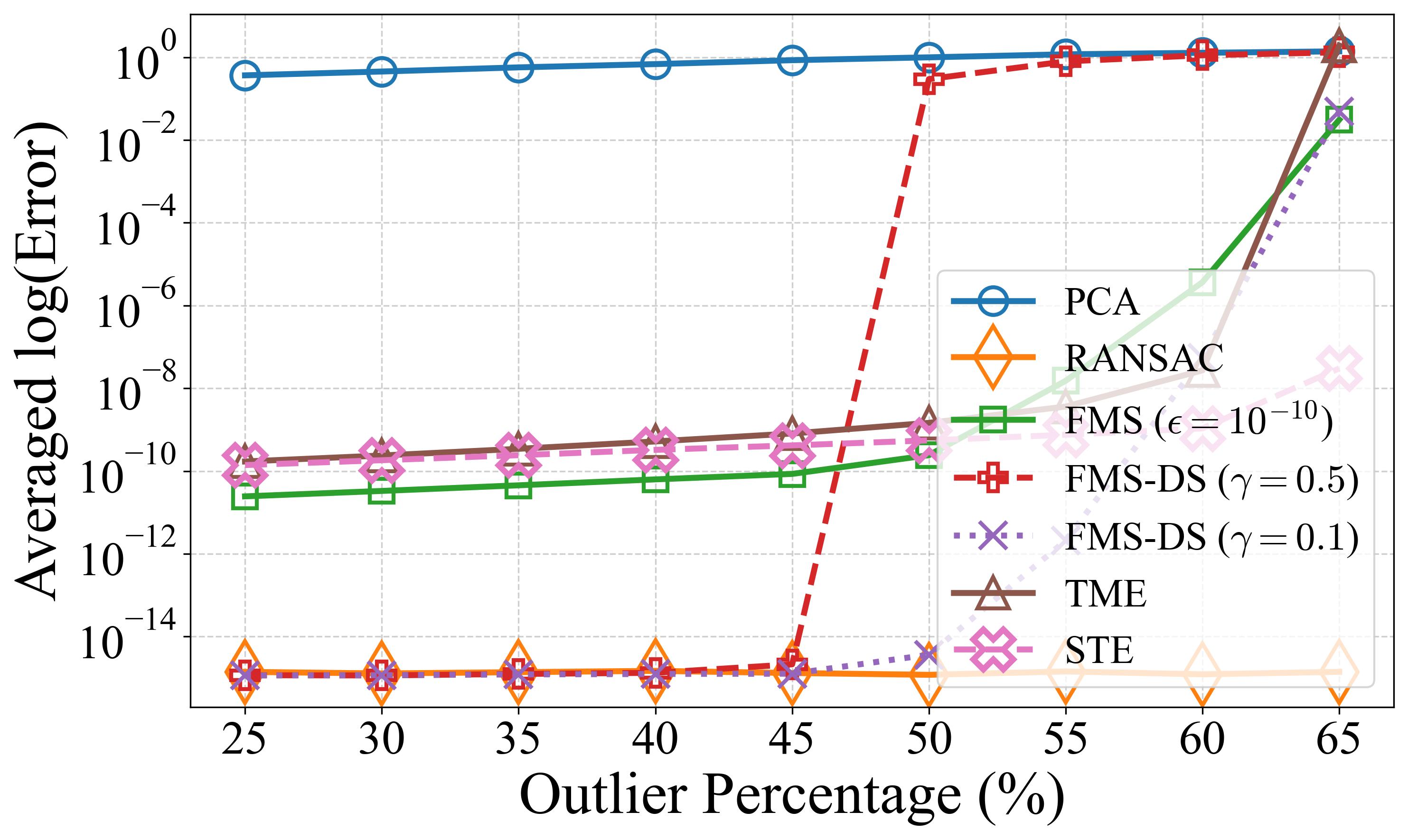} &
        \plot{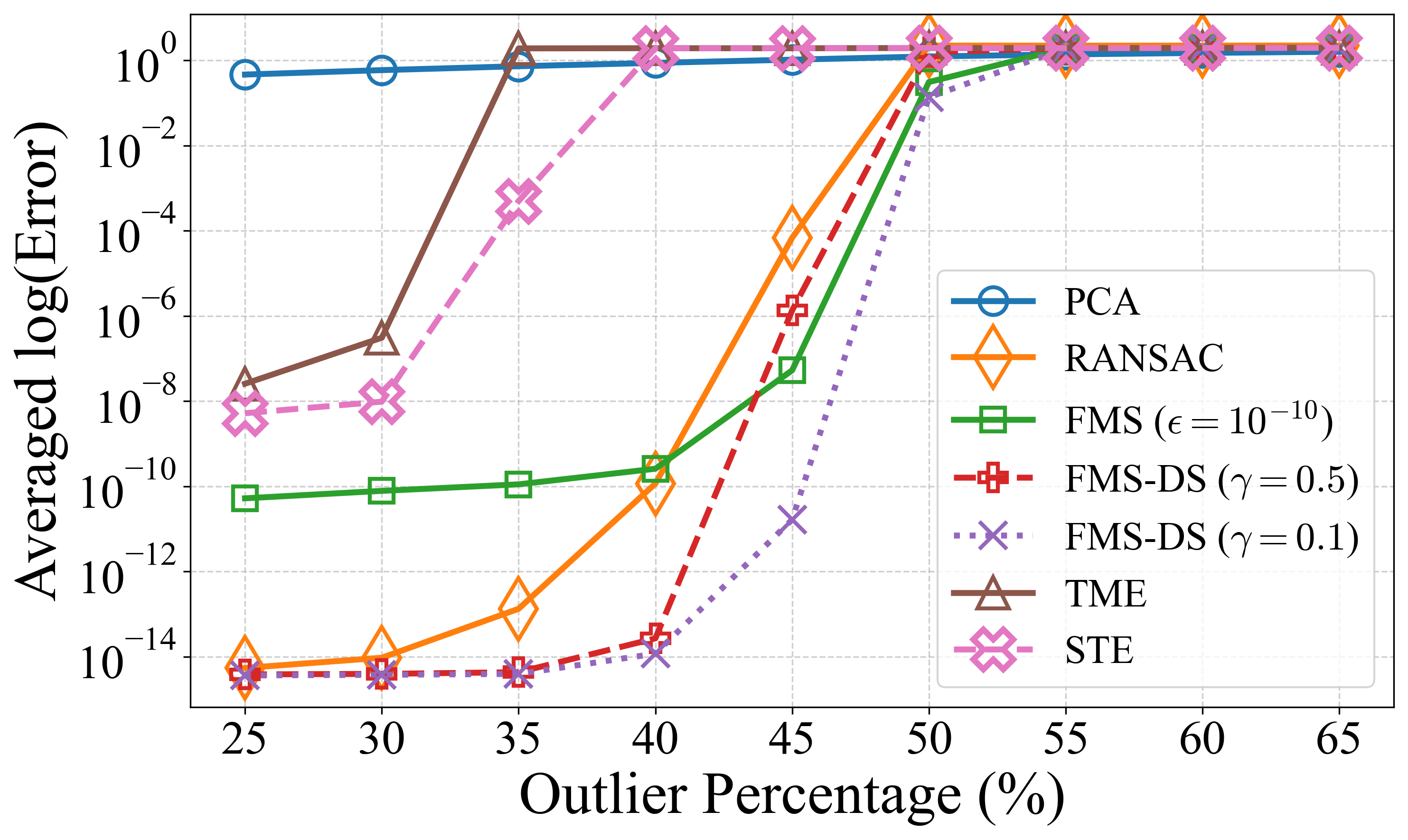}&
        \plot{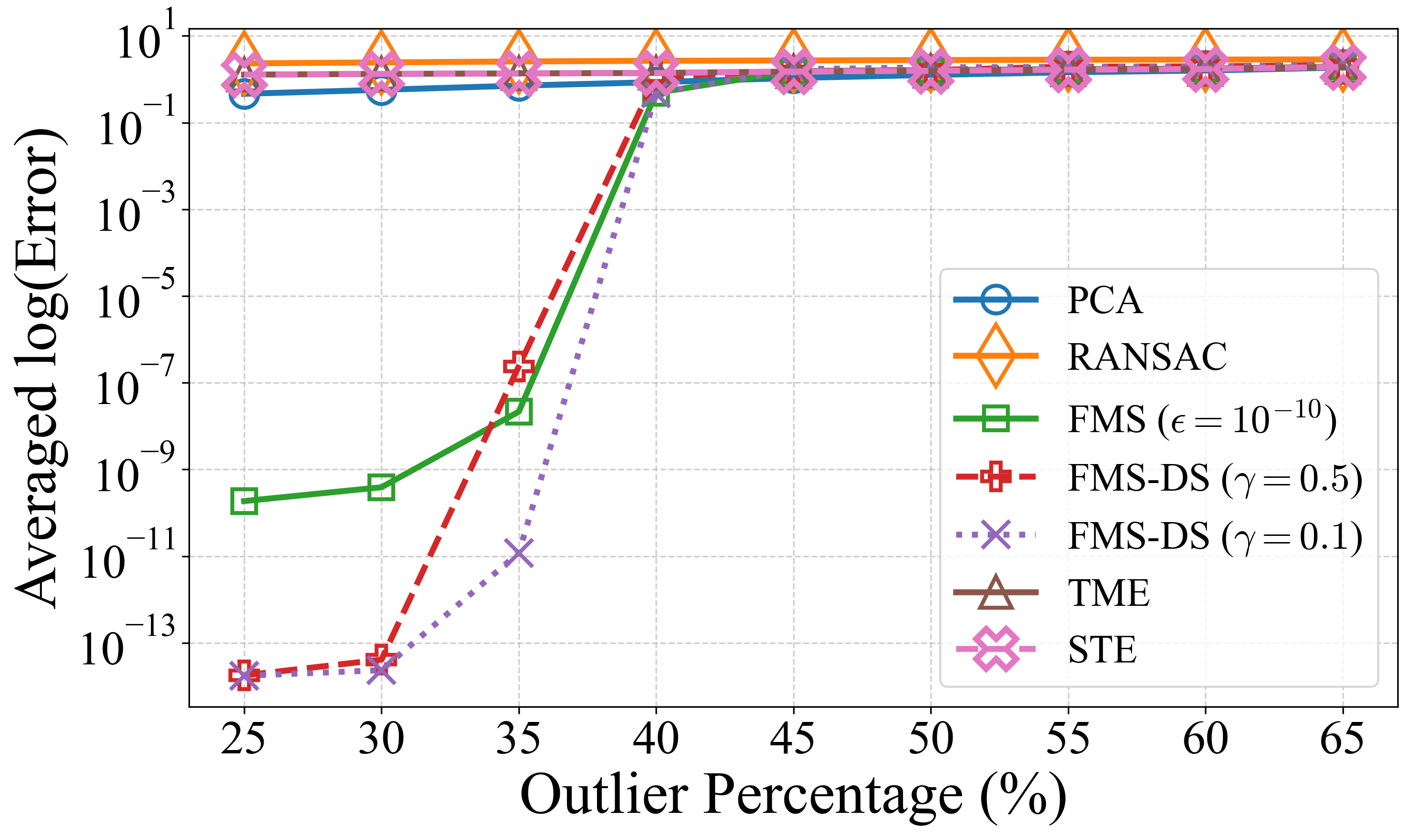} \\
      \noalign{\vskip 6pt}

      \rowlabel{$d_{\mathrm{out}} = 10$} &
        \plot{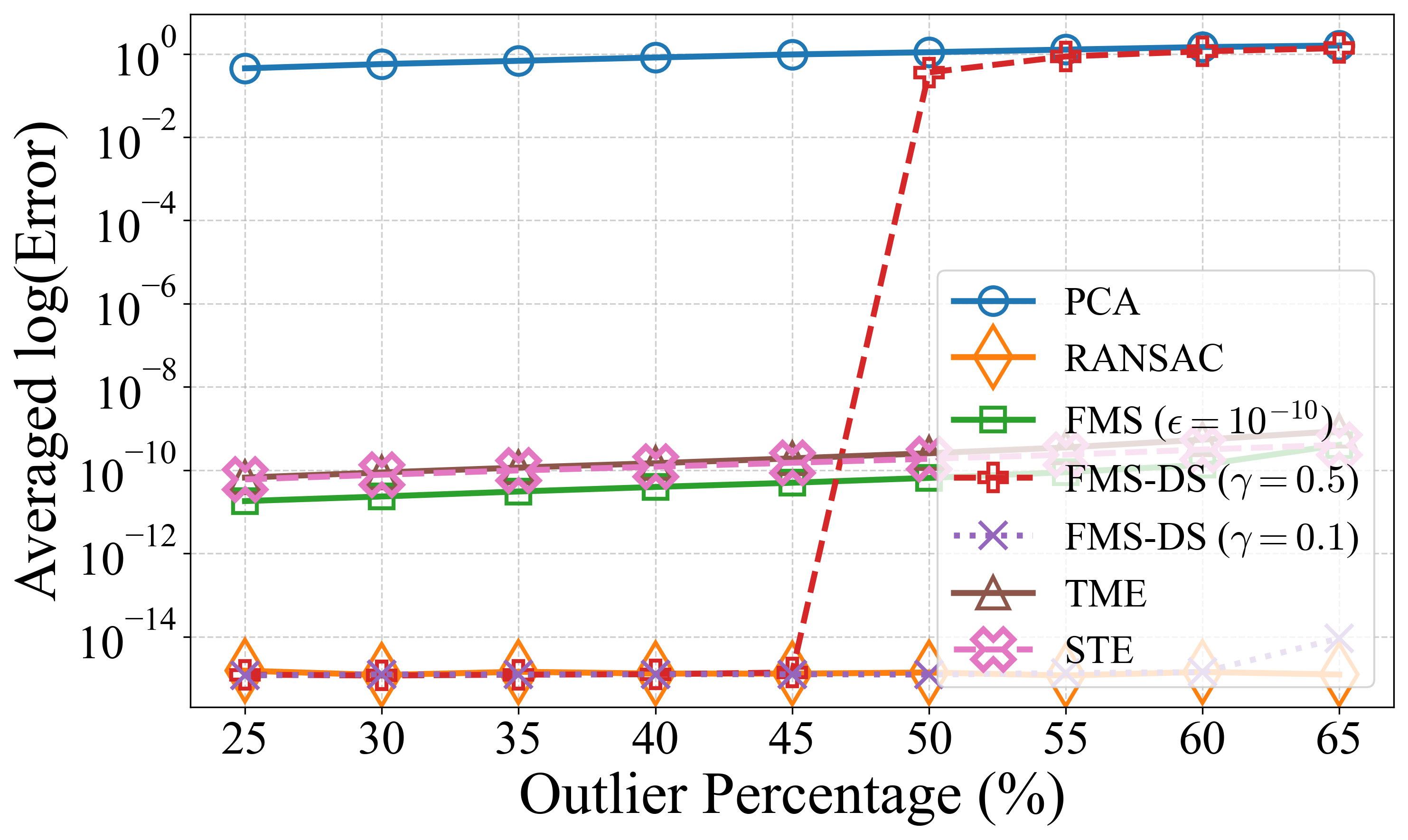} &
        \plot{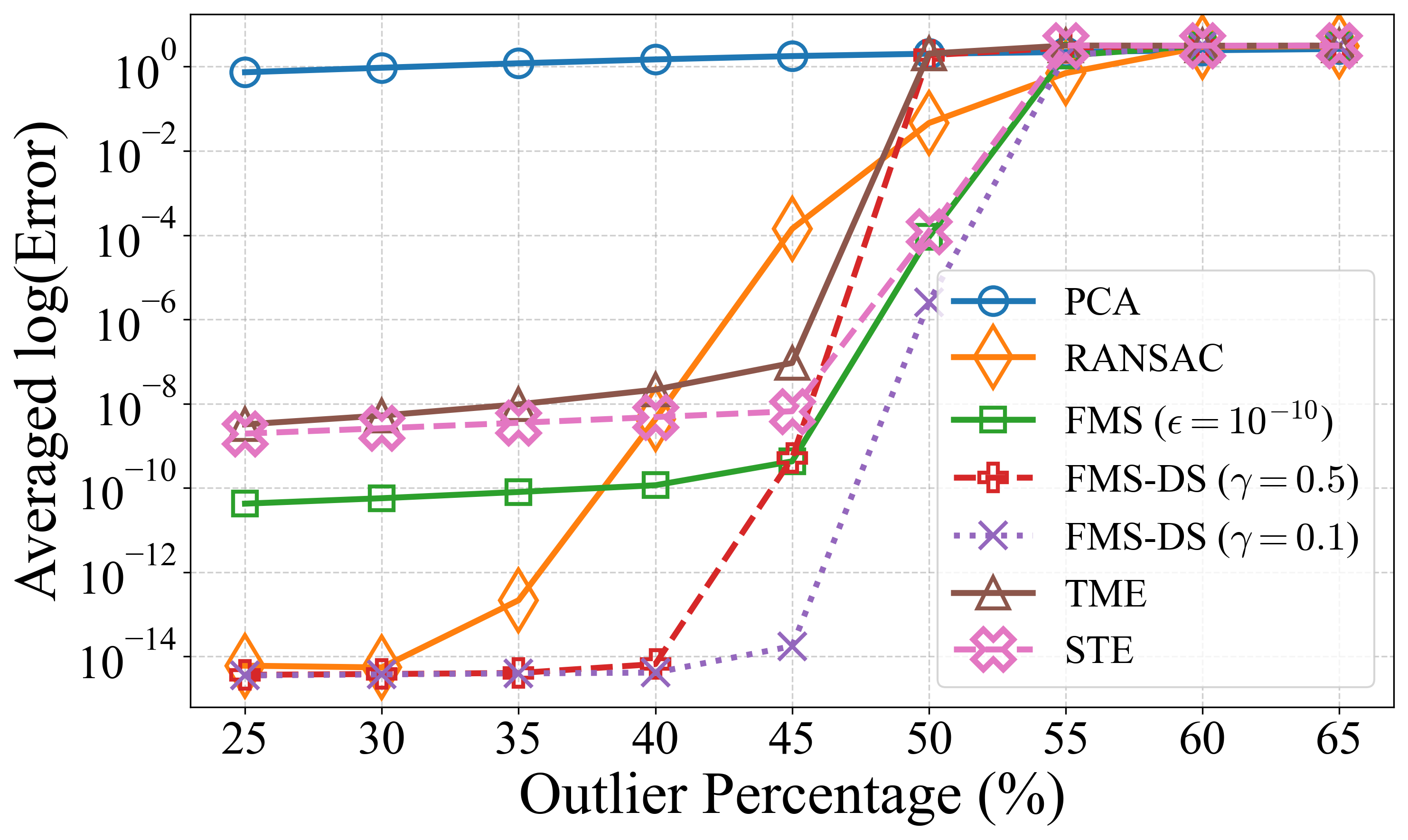}&
        \plot{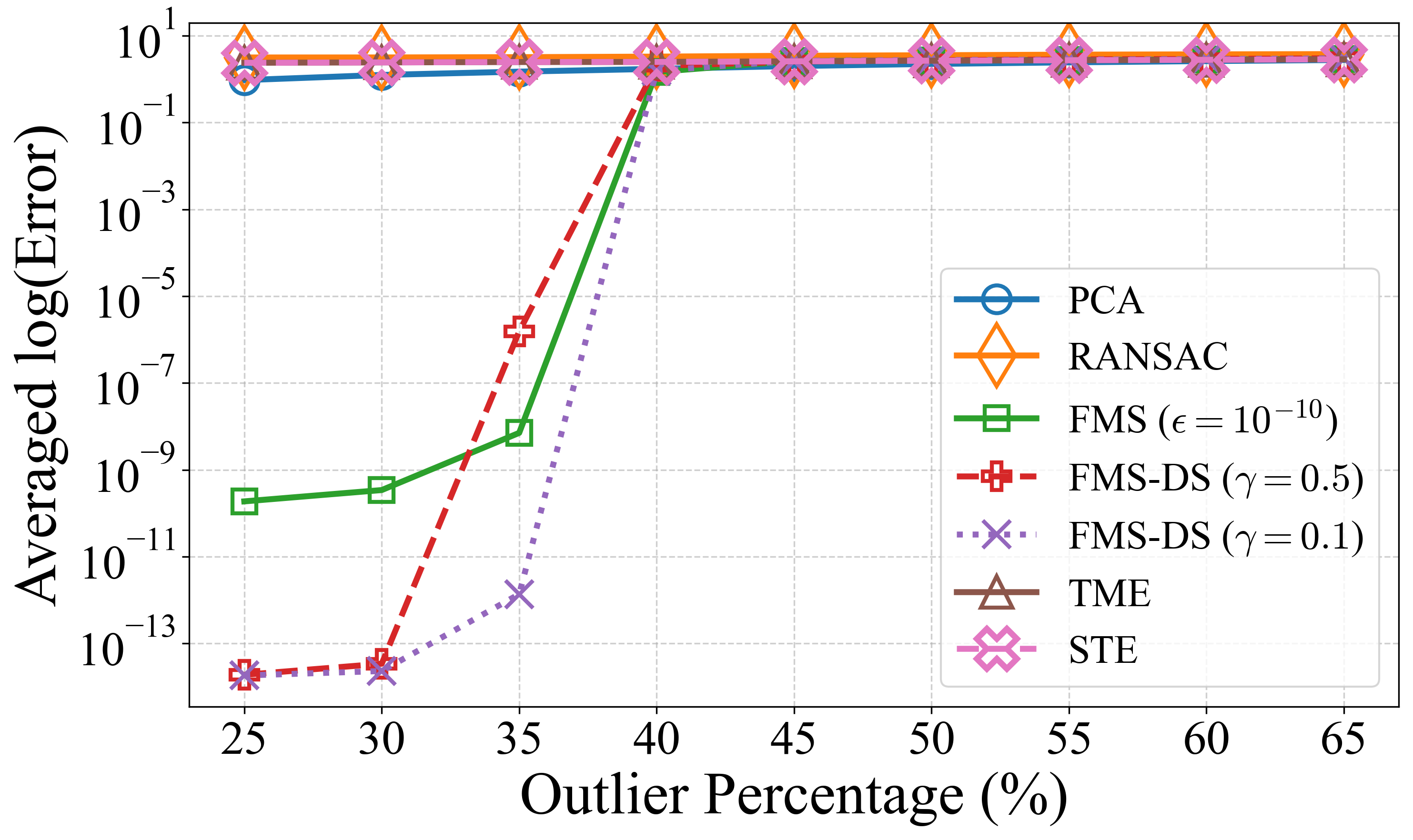} \\   \noalign{\vskip 6pt}

      \rowlabel{$d_{\mathrm{out}} = 50$} &
        \plot{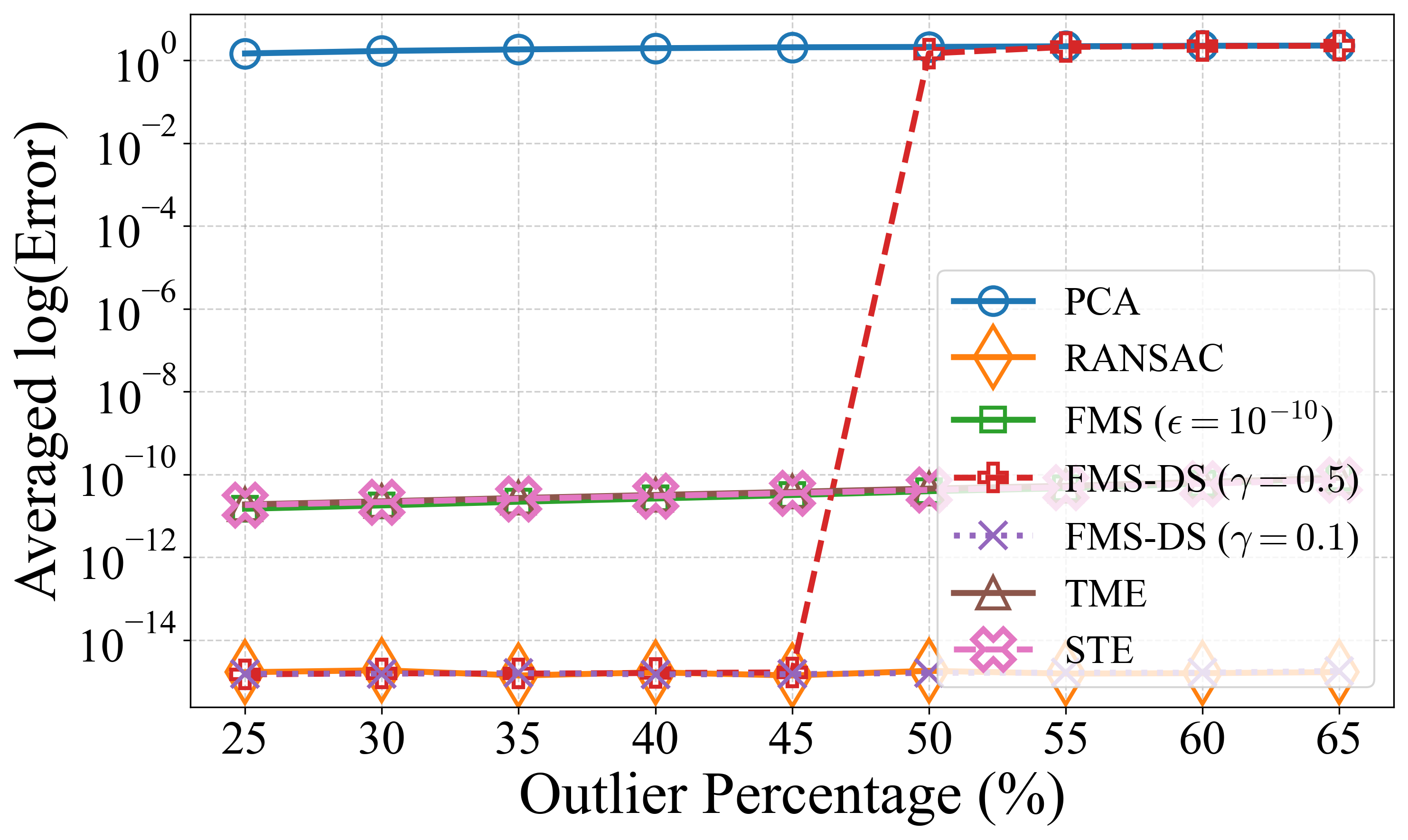} &
        \plot{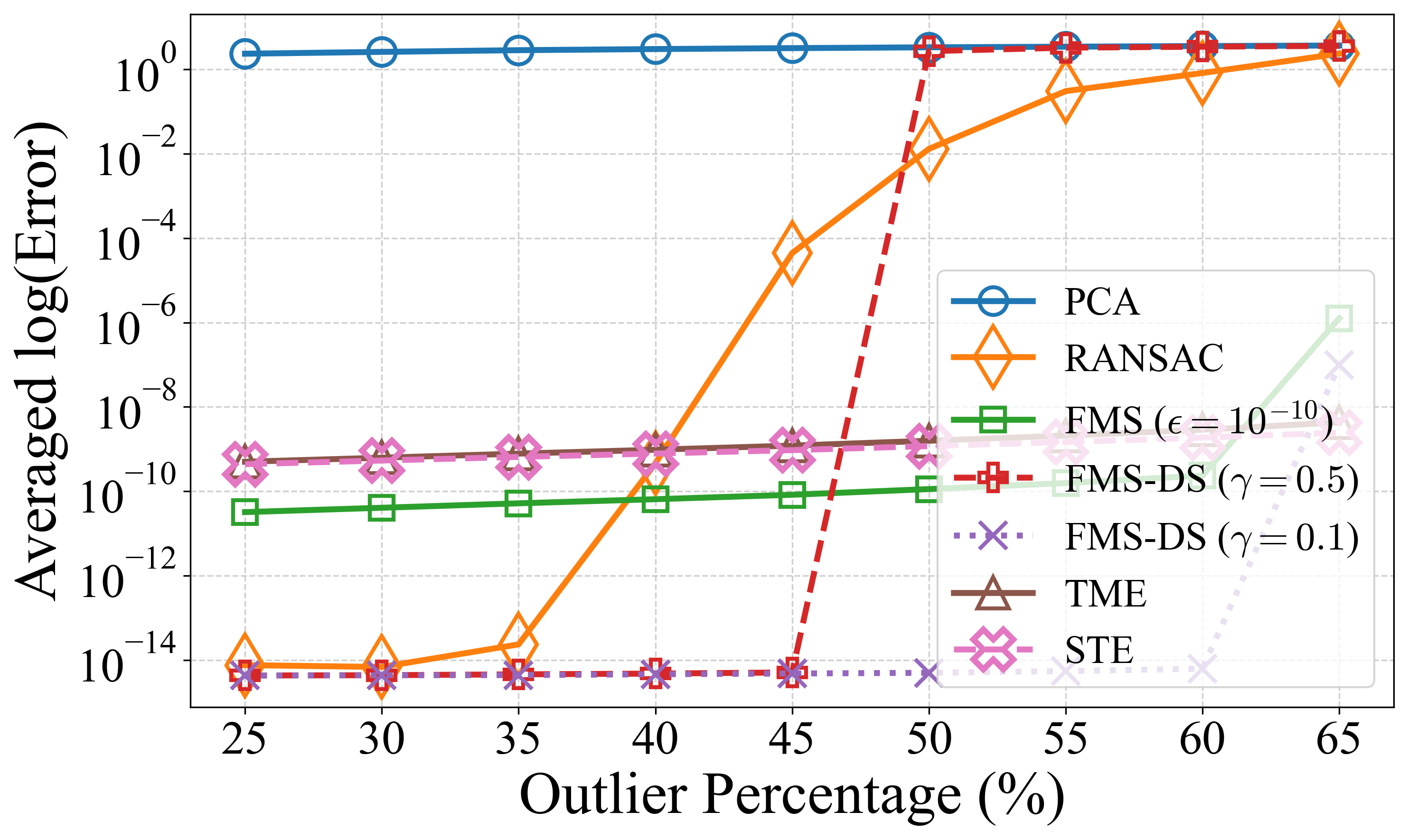}&
        \plot{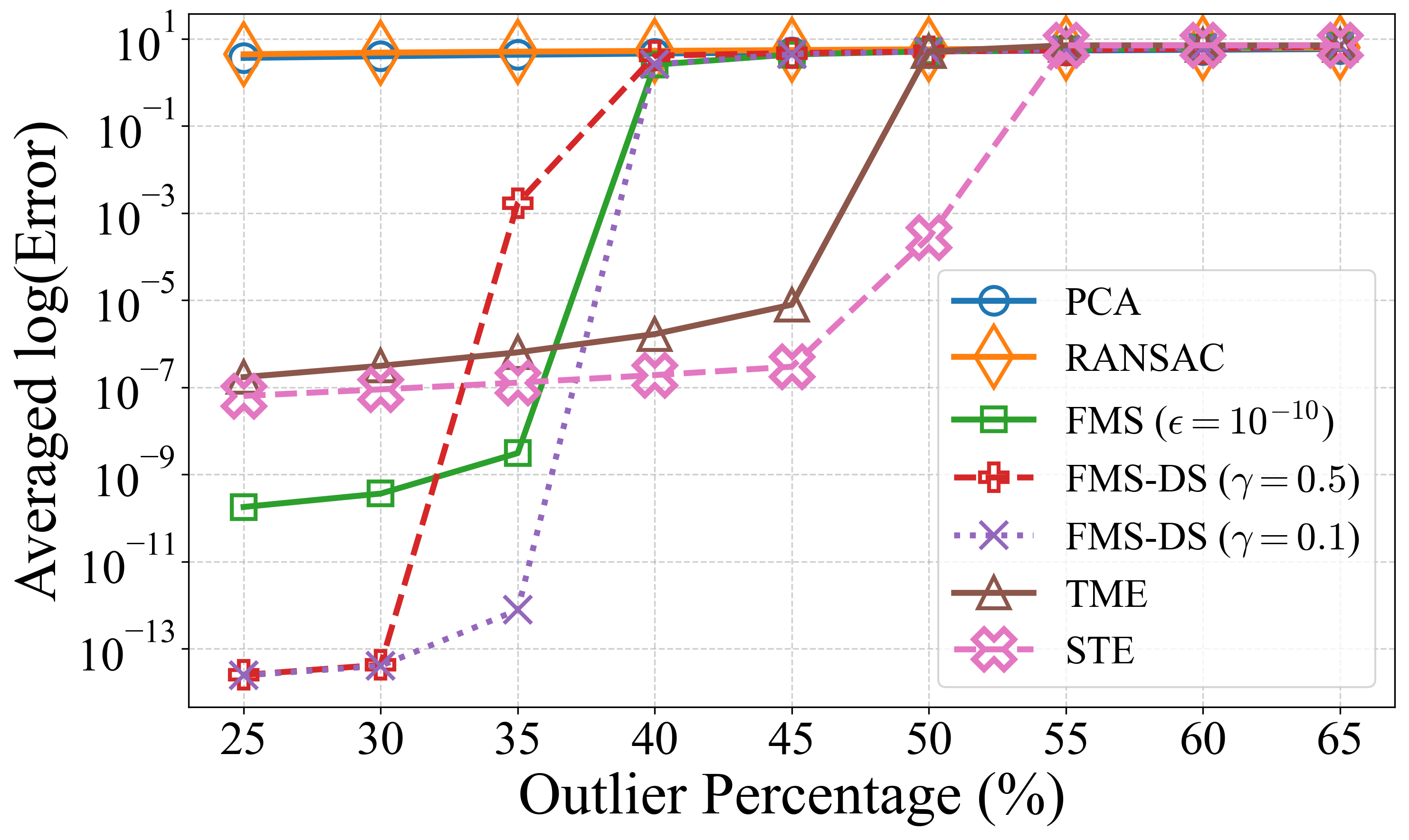} \\
  \end{tabular}
  
  \caption{Performance comparison of PCA, TME, RANSAC, STE, and FMS on synthetic data, reporting the geometric mean of the error over 200 repetitions on a log scale. This corresponds to the averaged log-error. We vary outlier dimension across rows and inlier dimension across columns as follows: top row: outlier dimension \( d_{\mathrm{out}} = 1 \), middle rows: \( d_{\mathrm{out}} = 5,10 \), bottom row: \( d_{\mathrm{out}} = 50 \). Left column: inlier dimension \( d = 3 \), middle column: \( d = 10 \), right column: \(d=50\). As we can see, FMS-DS with small $\gamma$ performs well across a range of settings. RANSAC performs well for small $d$, but fails for larger $d$ due to the fact that its runtime is exponential in $d$, and we cap the number of iterations.} \label{fig:exp1}
  \end{figure}
   
As illustrated in Figure~\ref{fig:exp1}, FMS performs comparably to STE and TME and usually outperforms them, particularly when the inlier dimension is higher. RANSAC, which follows a fundamentally different approach, performs well in low-dimensional settings but degrades significantly as dimensionality increases. This is because RANSAC estimates candidate subspaces from randomly selected sets of $d$ samples, and its success depends on all $d$ samples being inliers. This condition becomes exponentially unlikely as $d$ grows larger.

{
The only setting in which FMS underperforms relative to STE and TME is the scenario $d = d_{\mathrm{out}} = 50$. This behavior is likely due to the relatively small sample size ($n = 160$) compared to the ambient dimension, which results in a data matrix with a large condition number. In Figure~\ref{fig:exp1_2}, we repeat the simulation with larger sample sizes of $n = 200, 300, 400$. The results in Figure~\ref{fig:exp1_2} show that as $n$ increases, the performance of FMS and FMS-DS improves substantially and becomes comparable to that of STE and TME, while RANSAC consistently performs poorly across all settings.

\begin{figure}[!ht]
  \centering
  \setlength{\tabcolsep}{3mm}
  \renewcommand{\arraystretch}{0}

\begin{tabular}{@{}>{\centering\arraybackslash}m{.05\linewidth}
                @{\hspace{2.5mm}}
                >{\centering\arraybackslash}m{.3\linewidth}
                @{\hspace{2.5mm}}
                >{\centering\arraybackslash}m{.3\linewidth}
                @{\hspace{2.5mm}}
                >{\centering\arraybackslash}m{.3\linewidth}@{}}
      & \textbf{$n=200$}
      & \textbf{$n=300$} & \textbf{$n=400$} \\  \noalign{\vskip 3mm}

      \rowlabel{$d=d_{\mathrm{out}} = 50$} &
        \plot{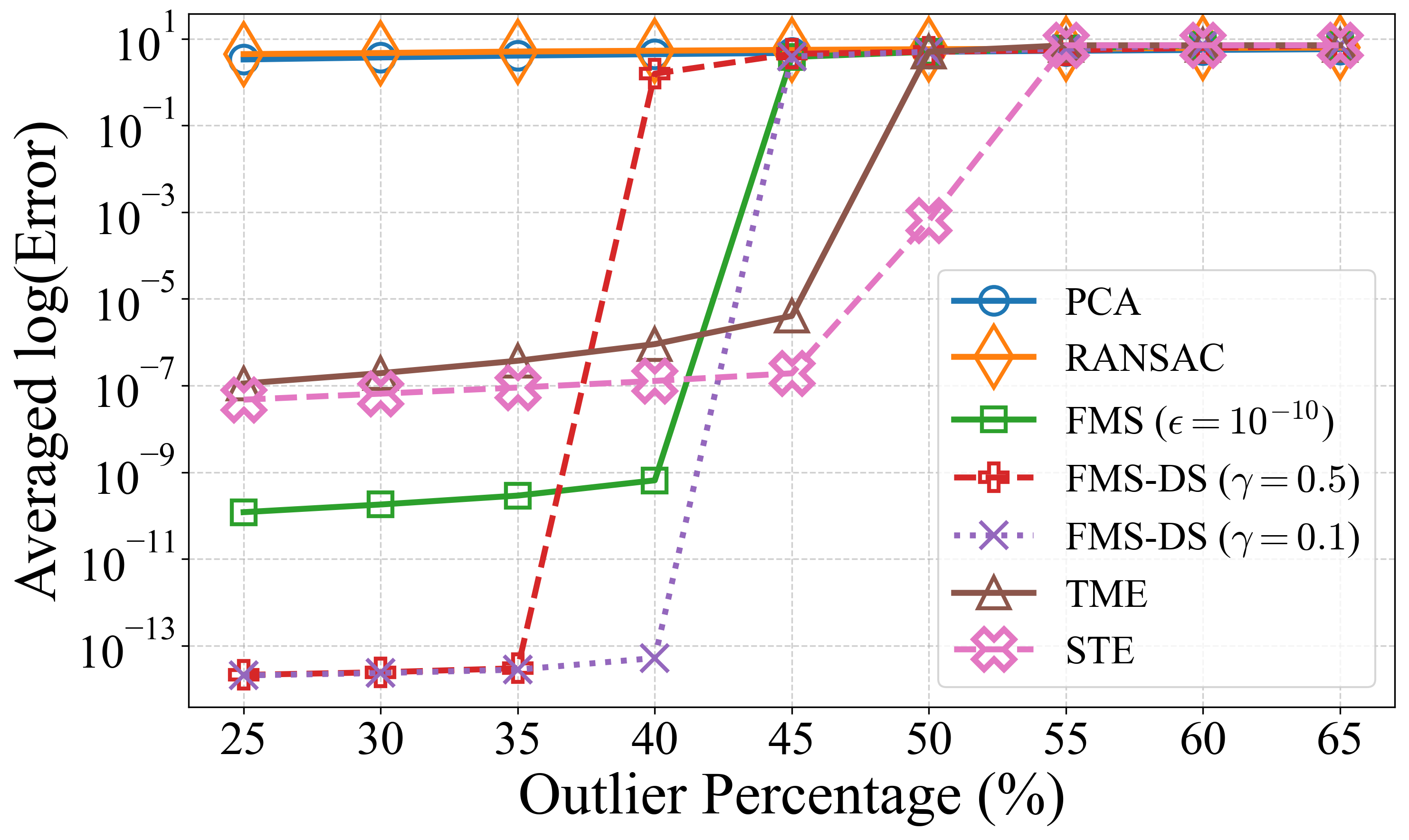} &
        \plot{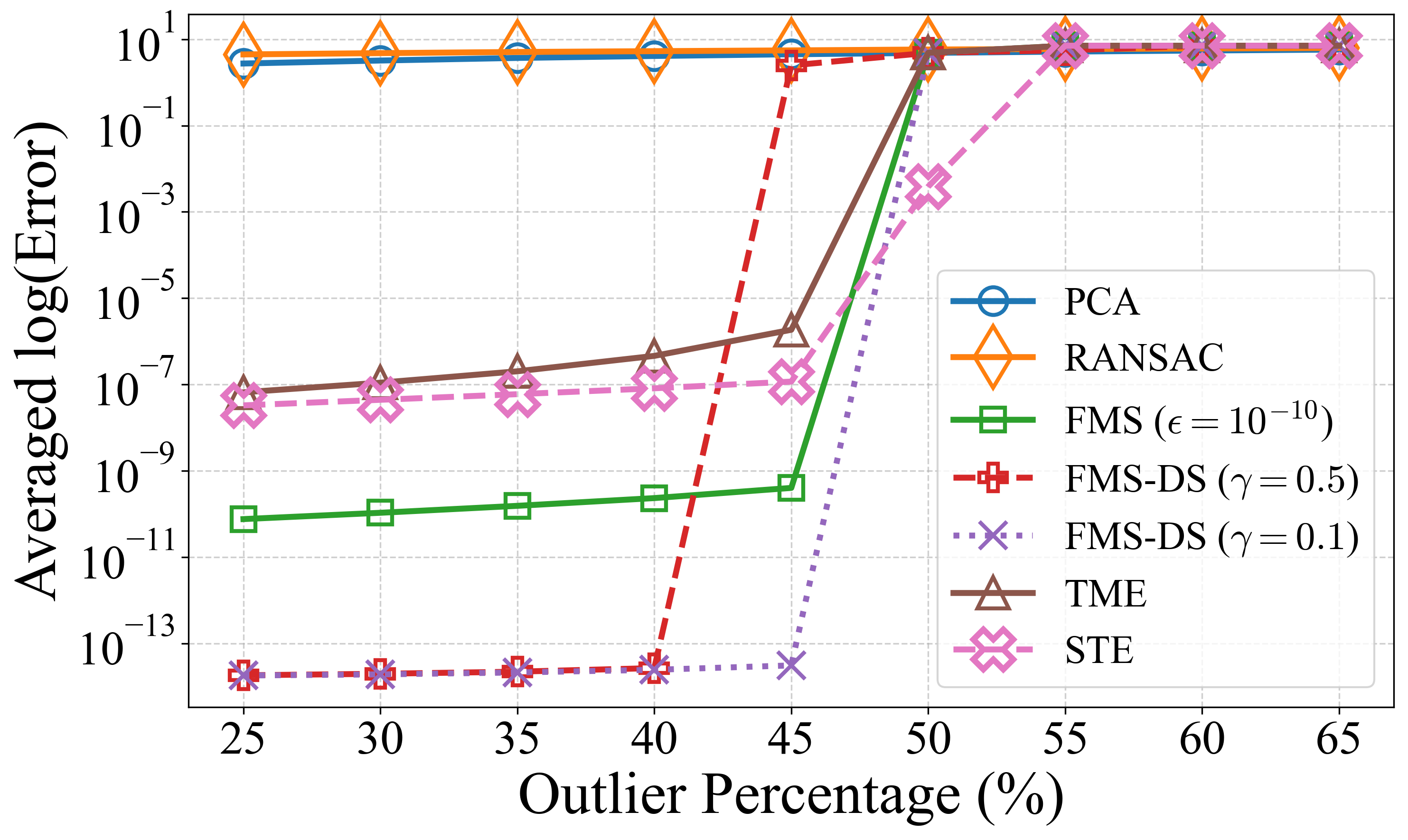}&
        \plot{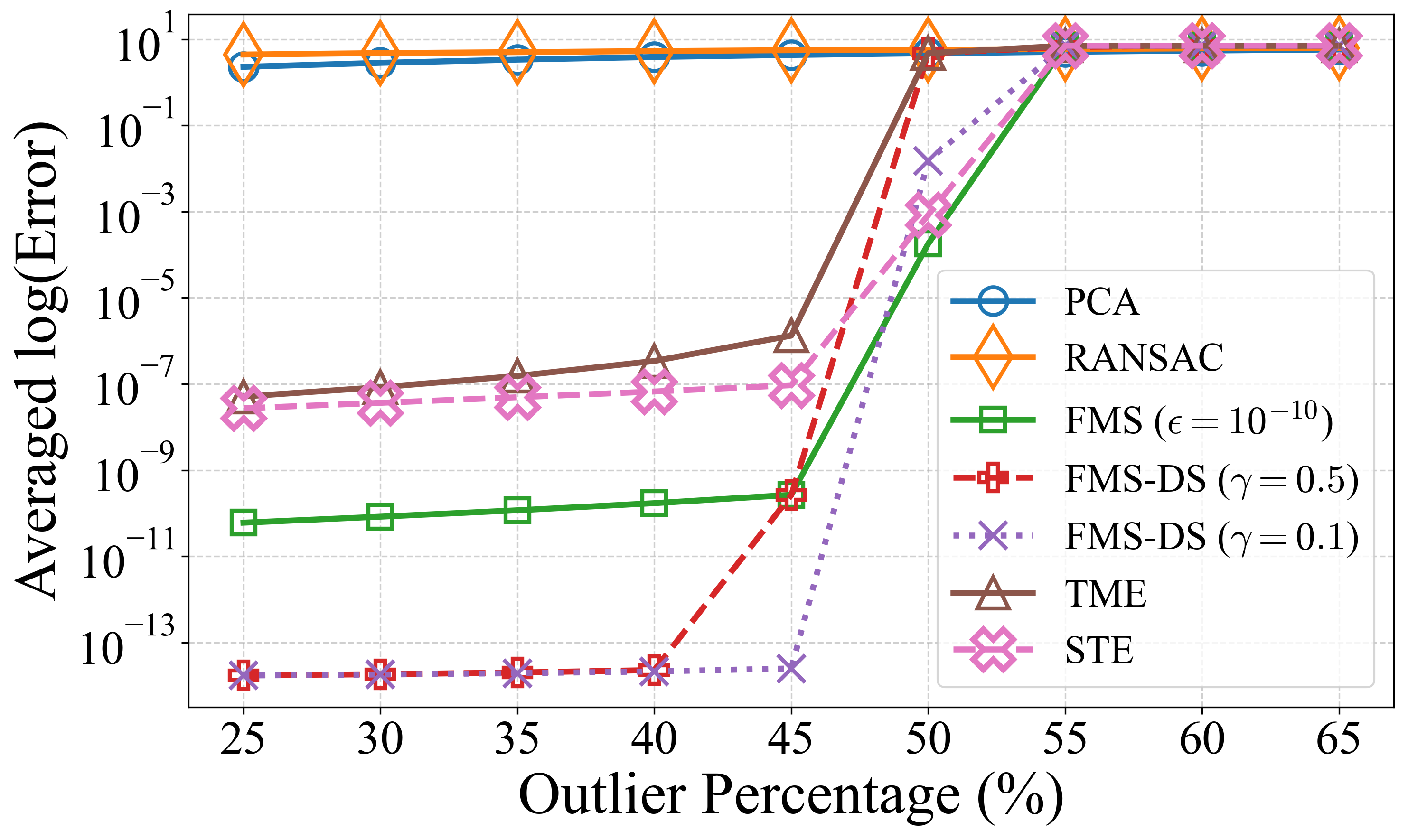} \\
  
  \end{tabular}
  
  \caption{{Performance comparison of PCA, TME, RANSAC, STE, and FMS on synthetic data, reporting the geometric mean of the error over 200 repetitions on a log scale. This corresponds to the averaged log-error. The inlier and outlier dimensions are fixed at 
$d = 50$ and $d_{\mathrm{out}} = 50$, while the number of samples $n$ varies: 
left: $n = 200$, middle: $n = 300$, right: $n = 400$. 
As $n$ increases, the performance of FMS and FMS-DS improves substantially and becomes comparable to STE and TME, 
whereas RANSAC performs poorly across all values of $n$.}} \label{fig:exp1_2}
  \end{figure}}

Since STE, FMS, and TME are applied with fixed regularization parameters, as no dynamic smoothing schemes are available for them, they do not reach machine precision and have performance comparable to FMS with fixed regularization $\epsilon = 10^{-10}$. On the other hand, applying FMS with fixed $\epsilon=10^{-15}$, which we do not display to prevent clutter, has the same performance as FMS-DS with $\gamma = 0.1$, and thus it can reach machine precision.

\subsection{Experiment 2: Effect of Regularization in FMS}
\label{subsec:exp2}

We investigate how different regularization strategies influence the performance of FMS in a setting where the initialization is chosen adversarially. In particular, we aim to demonstrate that dynamic smoothing can avoid saddle points that FMS with fixed regularization may become stuck at. 

We compare the following variants:
\begin{itemize}
    \item FMS with fixed regularization parameters of $\epsilon=10^{-3}, 10^{-10}$, and $10^{-15}$.
    \item FMS-DS where $\epsilon$ is chosen by \eqref{eq:epsilon_choose} with \( \gamma = 0.1 \) and $0.5$. Here, we set the initial value $\epsilon^{(0)}$ to be the $\gamma$-quantile of the $\{\dist(\bx,L_{0})\}_{\bx\in\calX}$, where $L_{0}$ represents the initial subspace.
\end{itemize}
Similar to Experiment 1, the synthetic dataset is constructed with an inlier subspace dimension of \( d = 3 \) and an outlier dimension of \( d_{\mathrm{out}} = 1 \). The ambient space has dimension \( D = d + d_{\mathrm{out}} = 4 \). The dataset contains a total of 200 data points. To demonstrate that the dynamic smoothing strategy can escape local minima or stationary points, we initialize the algorithm at a stationary point of the FMS objective---specifically, a subspace with two directions in the inlier subspace and one direction orthogonal to it. We again generate 200 datasets in this model and average the results for each algorithm. In this case, FMS is initialized with two directions within the inlier subspace and one direction orthogonal to it that contains the outliers.

Figures~\ref{fig:exp2} and \ref{fig:exp2_3} report the results of this experiment. In Figure \ref{fig:exp2}, we report the averaged log-error of the different methods as a function of outlier percentage. We also report the failure rate, which corresponds to FMS getting stuck at a bad stationary point, as a function of outlier percentage in Figure \ref{fig:exp2_3}.
As shown in these figures, FMS-DS with a larger $\gamma$ significantly outperforms both fixed regularization approaches, regardless of the fixed regularization strength. 

It is worth noting that FMS with a large regularization parameter consistently results in higher error due to the overly strong regularization. On the other hand, while FMS with small regularization can perform well in some cases, it begins to fail when the outlier percentage is sufficiently large (around $15\%$), in part because it becomes trapped in stationary points. In contrast, FMS with dynamic smoothing achieves consistently better performance across a range of outlier ratios.

\begin{figure}[ht]
    \centering
    \includegraphics[width=0.49\linewidth]{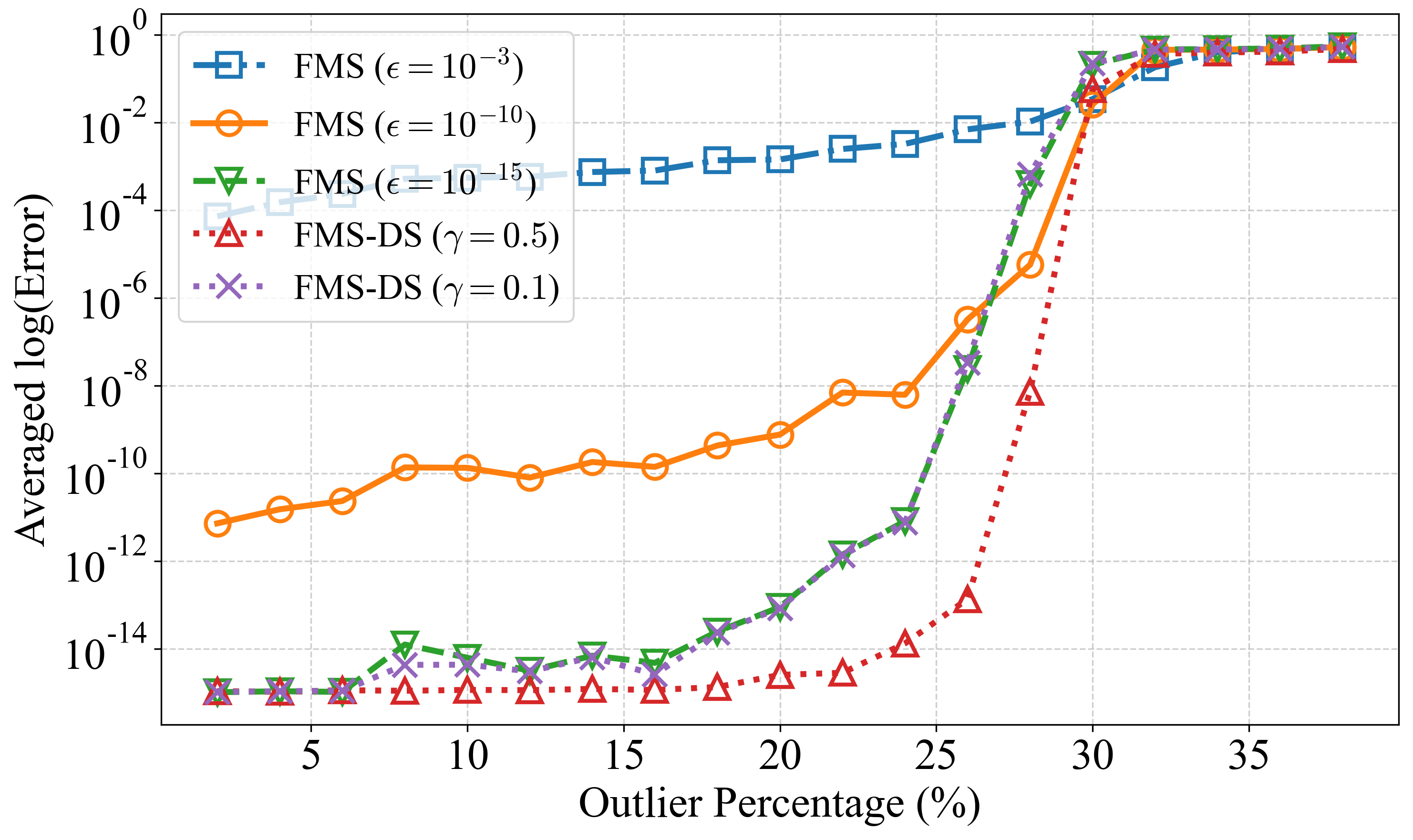}
    \caption{Averaged log-error versus outlier percentage for the experiment with adversarial initialization. Here, the inlier subspace dimension is 3 and the outlier subspace is a 1-dimensional subspace orthogonal to it. FMS is initialized with two directions within the inlier subspace and one direction orthogonal to it. 
    }
    \label{fig:exp2}
\end{figure}

\begin{figure}[ht]
    \centering
    \begin{minipage}{.49\linewidth}
        \centering
        Full View \\ \vspace{.1cm}
        \includegraphics[width=\linewidth]{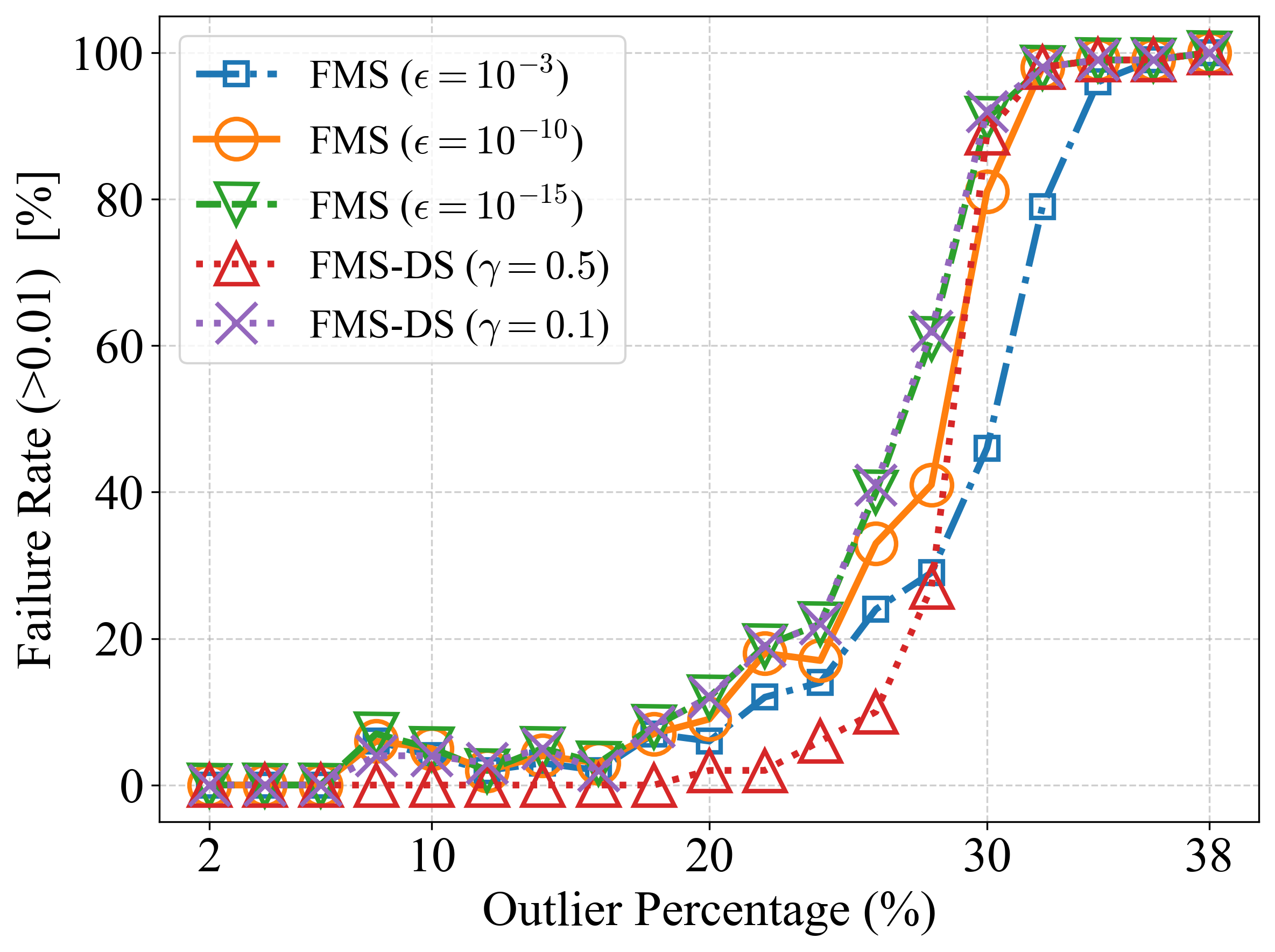}
    \end{minipage}
    \begin{minipage}{.49\linewidth}
        \centering
        Zoomed-in View \\ \vspace{.1cm}
        \includegraphics[width=\linewidth]{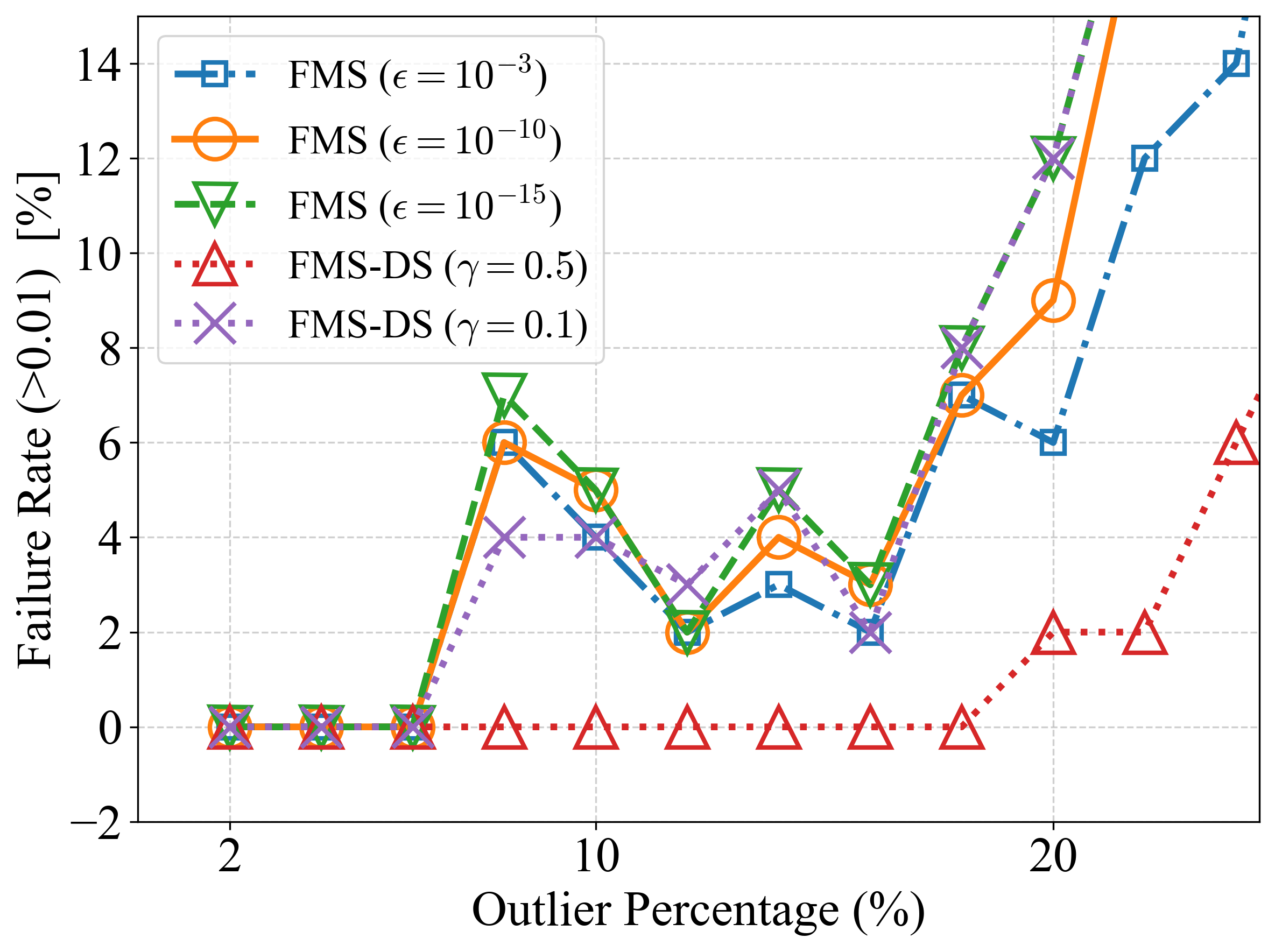}
    \end{minipage}
    \caption{Failure rate of the various FMS algorithms under the same setting as Figure \ref{fig:exp2}, for different outlier percentages. Left: full range of outlier percentages. Right: zoomed-in view.}
    \label{fig:exp2_3}
\end{figure}

\subsection{Experiment 3: Iteration-Wise Error Analysis}
\label{subsec:exp3}

We further investigate the convergence behavior of FMS by plotting the estimation error over iterations for each regularization strategy. We follow the settings of \( d \) and \( d_{\mathrm{out}} \) from Experiment 2, but fix the number of inliers at 100 and outliers at 30. The algorithm is initialized using the PCA subspace as in Experiment 1, or the orthogonal subspace as in Experiment 2. 

In Figure \ref{fig:exp3}, we plot the log-error versus iteration for FMS with the different regularization schemes. The left display of Figure~\ref{fig:exp3} uses PCA initialization and shows that it is comparable to FMS with a fixed small regularization of $\epsilon = 10^{-15}$. The rate of convergence is slightly slower than that of FMS with small regularization ($\epsilon = 10^{-15}$).  In contrast, for FMS with larger fixed regularization, the estimation error plateaus once it reaches the same order of magnitude as the regularization parameter. The right display of Figure \ref{fig:exp3} demonstrates the performance of the various regularization strategies with the poor initialization of Experiment 2. As we can see, FMS-DS with $\gamma = 0.5$ and FMS with fixed $\epsilon = 10^{-3}$ are more effective at escaping the saddle point. However, the fixed regularization is not able to adapt and plateaus at the fixed error of $\approx 10^{-3}$, while FMS-DS converges to the global solution. FMS-DS with $\gamma=0.1$ and FMS with small fixed $\epsilon=10^{-15}$ perform comparably.

\begin{figure}[ht]
     \centering
    \begin{minipage}{0.49\linewidth}
        \centering
        PCA Initialization (Exp.~1) \\ \vspace{0.1cm} 
        \includegraphics[width=\linewidth]{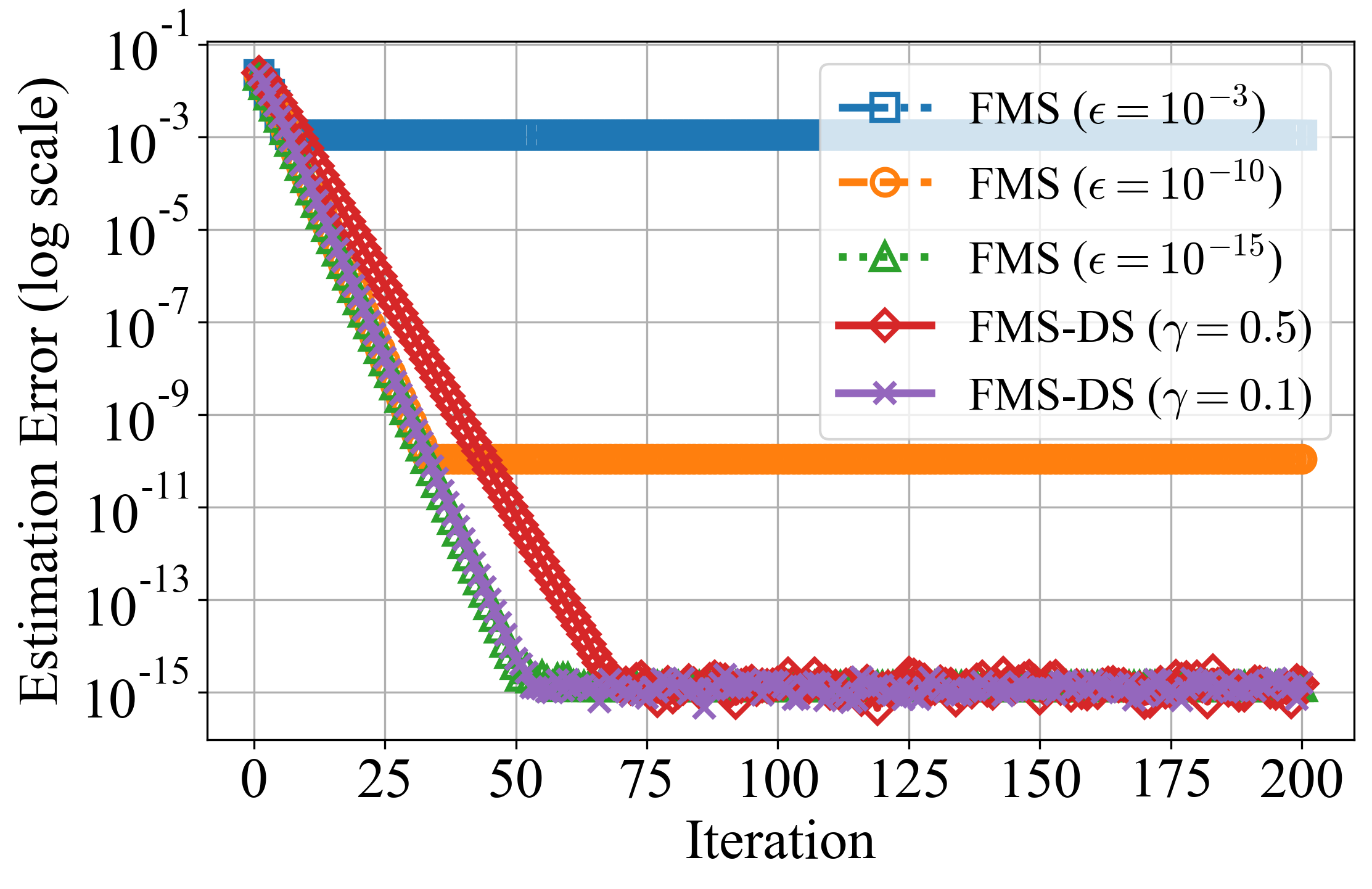}
    \end{minipage}
    \begin{minipage}{0.49\linewidth}
        \centering
        Adversarial Initialization (Exp.~2) \\ \vspace{0.1cm}
        \includegraphics[width=\linewidth]{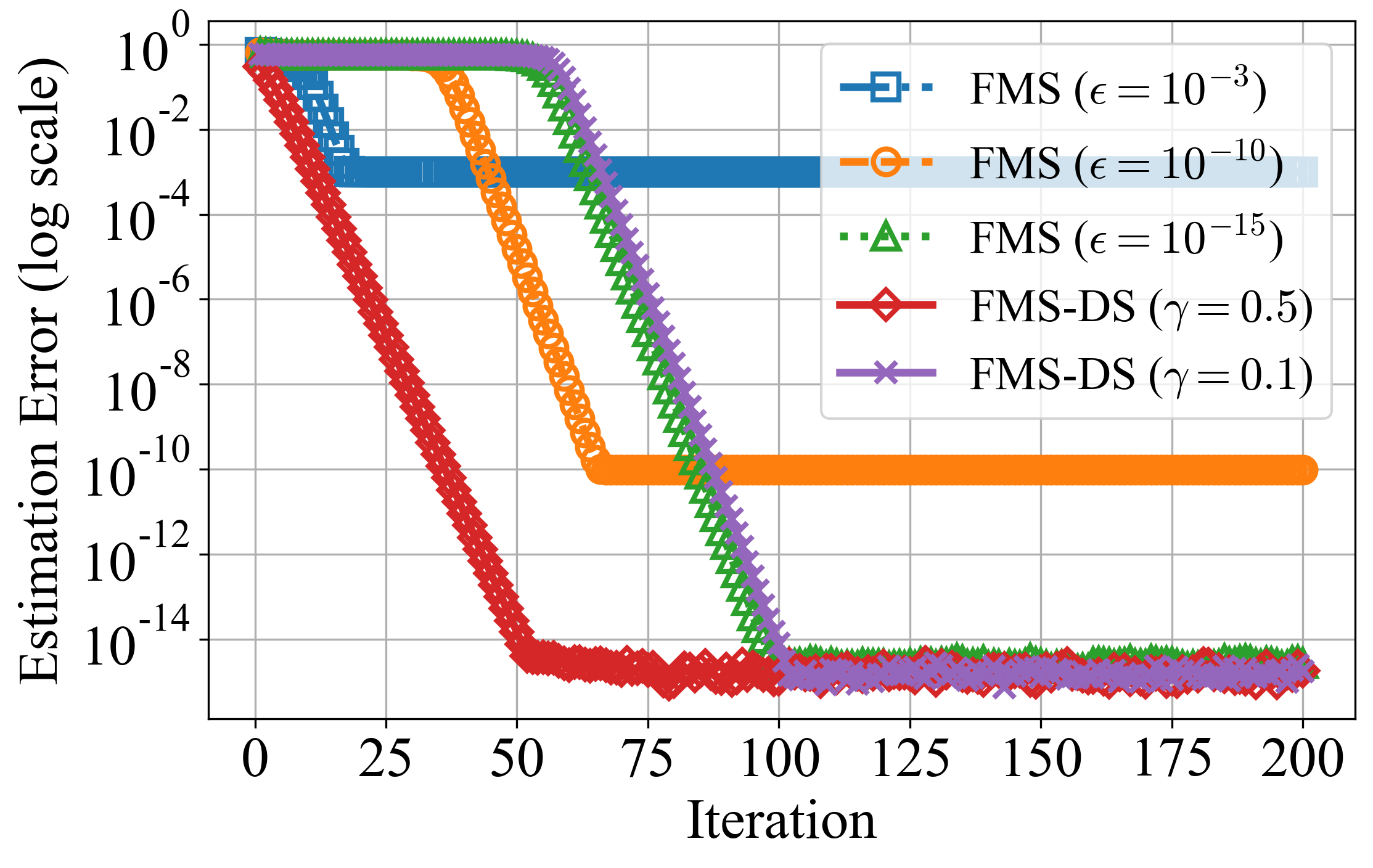}
    \end{minipage}
    
    \caption{Averaged log-error versus iteration for FMS with different regularization strategies. On the left, we demonstrate convergence with PCA initialization as in Experiment 1, and on the right, we demonstrate convergence with the orthogonal initialization of Experiment 2. The left plot demonstrates that in settings with good initialization and lack of bad stationary points, FMS with small fixed $\epsilon=10^{-15}$ and FMS-DS with small $\gamma=0.1$ both perform well. On the other hand, for bad initialization, we see that FMS with larger $\epsilon$ and FMS-DS with larger $\gamma$ escape the saddle point faster.}
    \label{fig:exp3}
\end{figure}

\subsection{A Practical Demonstration of the FMS Algorithm}\label{sec:practical}

While the previous section provides a new theoretical analysis of the FMS algorithm and extends it to affine settings, this analysis applies only in inlier-outlier settings. However, it is typically unknown a priori whether a dataset satisfies this condition. 
To further motivate the FMS algorithm and its role in robust subspace estimation, we present an application related to low-dimensional neural network training. 

A line of work studies subspace-constrained optimization. One example is neural network training, where \cite{li2022low} use Dynamic Linear Dimensionality Reduction (DLDR) to show that training in a PCA subspace can improve generalization. 
Unfortunately, this procedure of subspace estimation lies at odds with the current understanding of neural network training. 
{
We argue that using vanilla PCA may be misguided since recent evidence indicates
that the stochastic gradient noise encountered in neural network training
is heavy-tailed
\citep{simsekli2019tailindex,zhou2020towards,gurbuzbalaban2021heavy}.
Specifically, one can decompose the stochastic gradient at each iteration into
a deterministic component (the expected gradient) and a mean-zero noise term
given by the deviation of the stochastic gradient from its expectation.
While earlier work modeled this noise in the vanishing step-size limit as Gaussian,
leading to an SDE with Brownian increments \citep{jastrzkebski2017three},
empirical evidence suggests instead that this noise exhibits heavy-tailed behavior,
often consistent with $\alpha$-stable laws rather than Gaussian tails
\citep{simsekli2019tailindex}.
}

The implementation details of DLDR are as follows \citep{li2022low}. First, during training with SGD, we sample \( t \) steps of neural network parameters \( \{\bw_1, \bw_2, \dots, \bw_t\} \subset \R^D \). The weights are then centered by subtracting the mean, and the PCA subspace is computed.  
The network is then retrained, with gradients projected to the PCA subspace.

Motivated by this, we replace the PCA subspace in the previous procedure with FMS and AFMS. For completeness, we also report the application of TME, PCA, and spherical PCA (SPCA), which computes the PCA subspace of the normalized dataset ${\bx/|\bx| : \bx \in \calX}$ \citep{Locantore1999}. We do not report dynamic smoothing, as it did not change the performance of FMS. In particular, dynamic smoothing typically affects the algorithm's rate of convergence and precision, but neither is essential in this example.

Specifically, we trained CIFAR-10 \citep{Krizhevsky2009LearningML} on ResNet-20, CIFAR-100 \citep{Krizhevsky2009LearningML} on ResNet-32, and Tiny ImageNet \citep{imagenet_cvpr09} on ResNet-18. The \textbf{CIFAR-10} dataset consists of 60,000 color images of size 32×32 pixels, categorized into 10 classes. It contains 50,000 training images and 10,000 test images. The \textbf{CIFAR-100} dataset is an extension of CIFAR-10, containing 100 classes instead of 10. Each class has 600 images, making a total of 60,000 images. The dataset consists of 50,000 training images and 10,000 test images of size 32×32 pixels. The \textbf{Tiny ImageNet} dataset contains 200 classes with images resized to 64×64 pixels. It includes 100,000 training images and 10,000 validation images. 

For all three datasets, the images were normalized using their channel-wise means and variances. The deep neural networks were trained with the SGD optimizer using a weight decay of 1e-4, a momentum of 0.9, and a batch size of 128. The learning rate was initially set to 0.1 and decayed to 0.01 over 100 epochs. Model parameters were sampled at the end of every epoch, after which FMS, PCA, and TME were applied to estimate the subspace. When training within this subspace using projected-SGD, we maintained the same batch size and momentum as in the original SGD setup. The learning rate was set to 1 initially and decayed to 0.1 over 30 epochs.

To assess robustness, we introduced additional label corruption by randomly selecting a fraction of the training data and assigning random labels to them. The testing accuracies of our simulations are visualized in Figure \ref{fig:low3}. The first row illustrates the accuracies of all methods applied to three datasets reduced to 20 dimensions, while the second row shows the accuracies when 15\% of the labels are randomly corrupted. Note that AFMS and FMS have the same performance in all scenarios. PCA and TME also perform similarly. As shown in Figure~\ref{fig:low3}, label corruption significantly degrades the performance of SGD. In contrast, training in low-dimensional subspaces demonstrates that FMS outperforms the other methods.

\begin{figure}[htbp]
  \centering
  \begin{minipage}[b]{0.31\linewidth}
    \centering
    CIFAR-10\\ \vspace{.1cm}
    \refstepcounter{subfig} 
    \includegraphics[width=\linewidth]{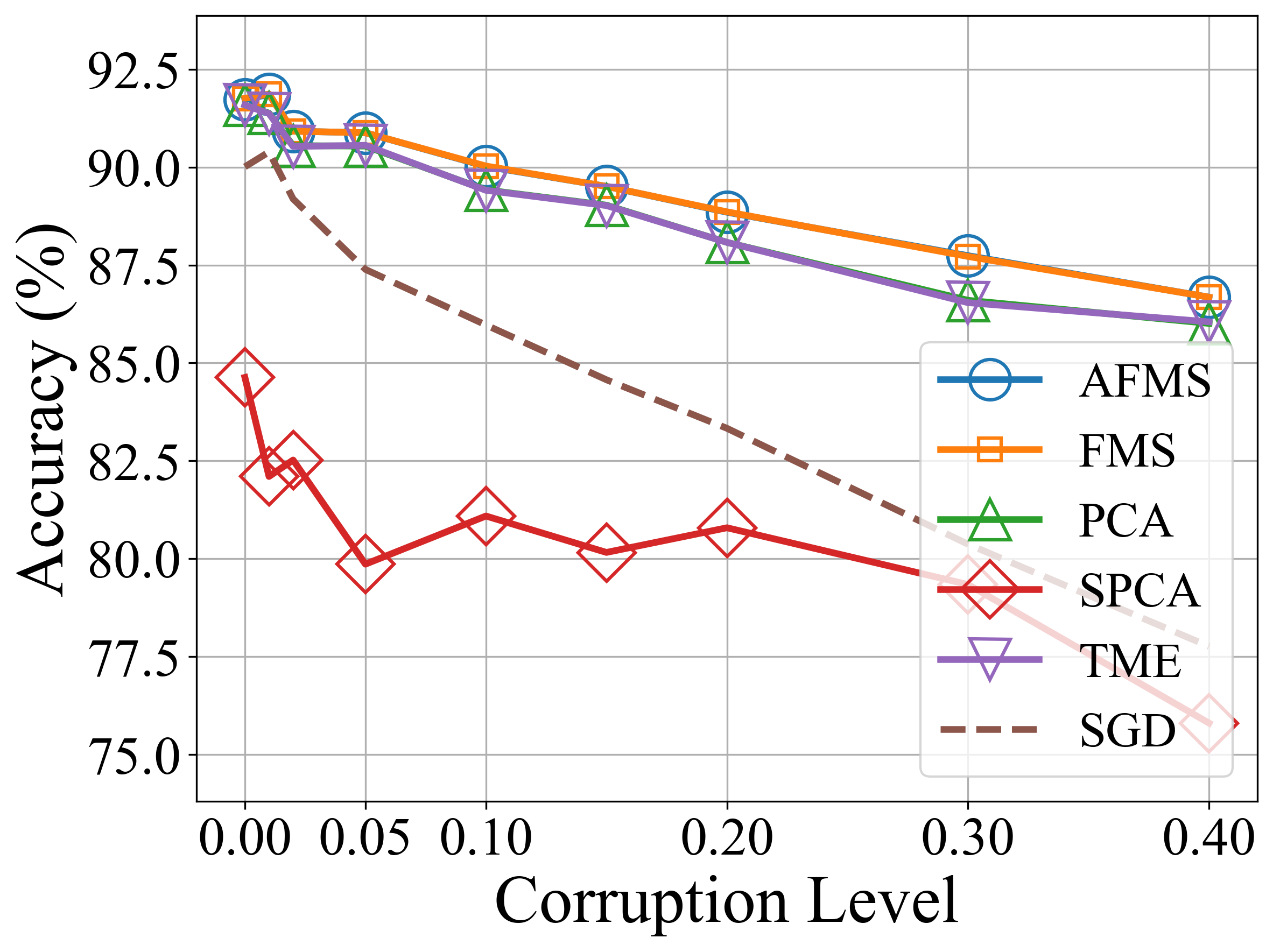}
    \includegraphics[width=\linewidth]{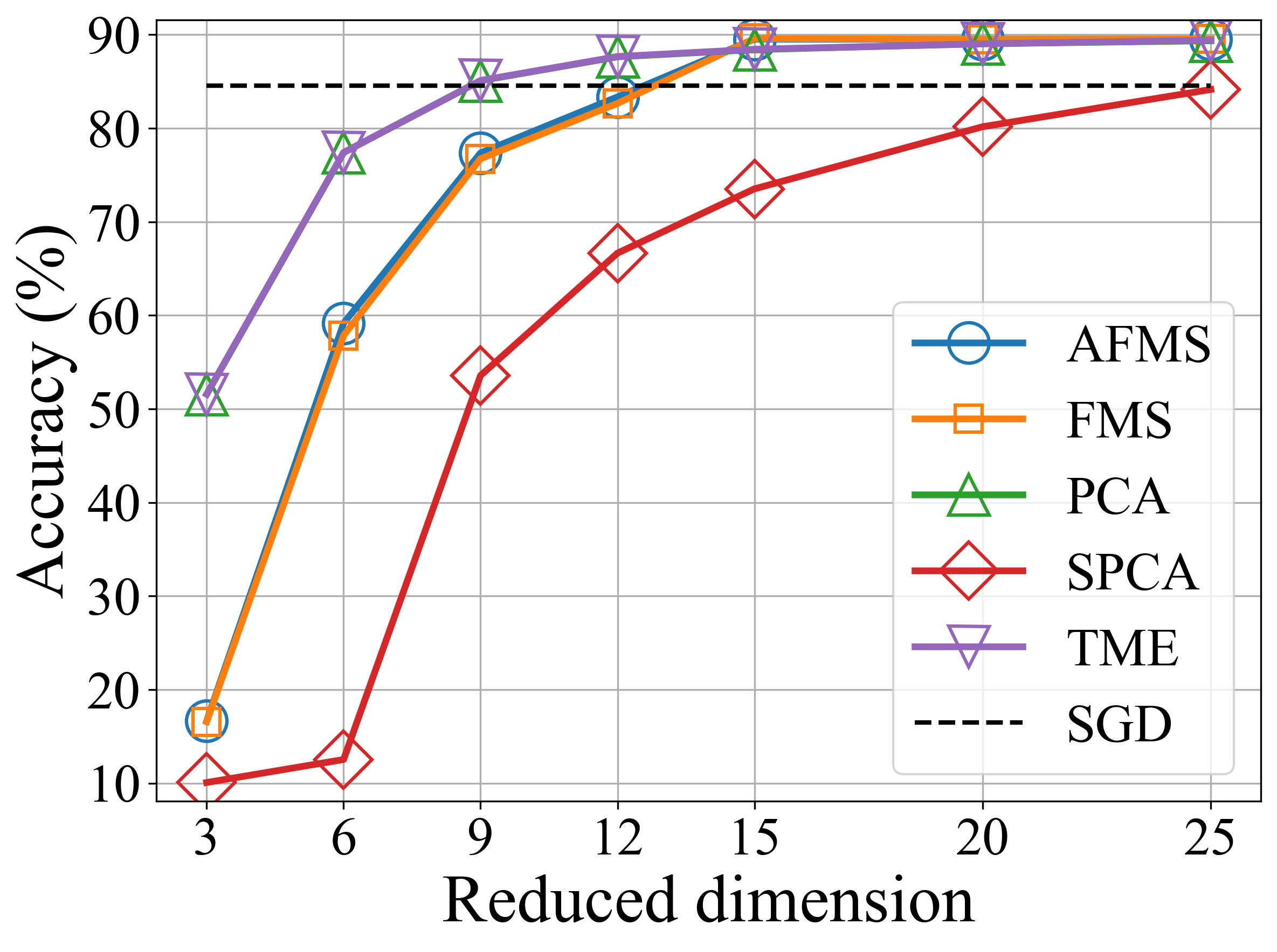}
    \label{fig:sub1}
  \end{minipage}\hfill
  \begin{minipage}[b]{0.31\linewidth}
    \centering
    CIFAR-100\\ \vspace{.1cm}
    \refstepcounter{subfig} 
    \includegraphics[width=\linewidth]{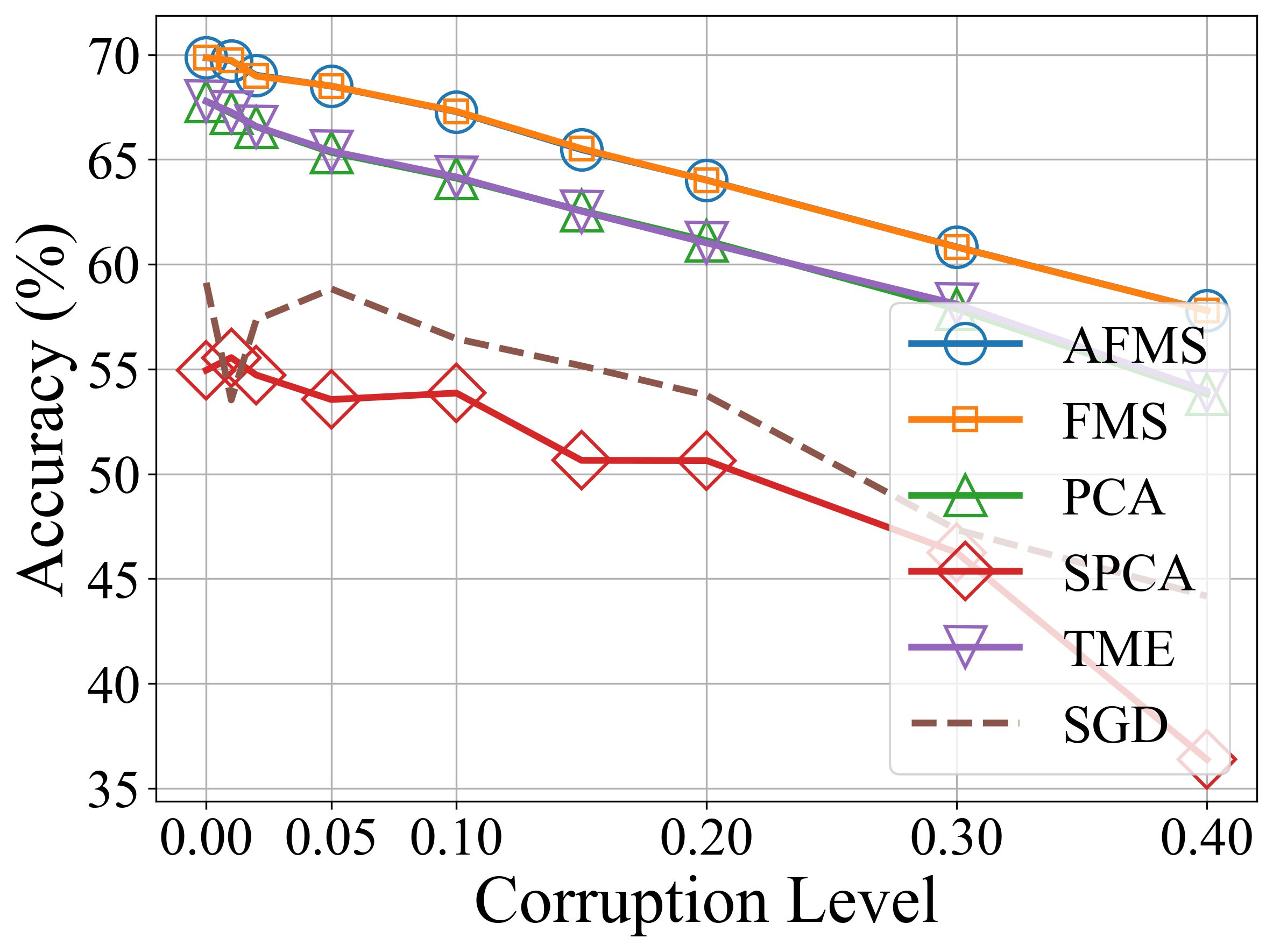}
    \includegraphics[width=\linewidth]{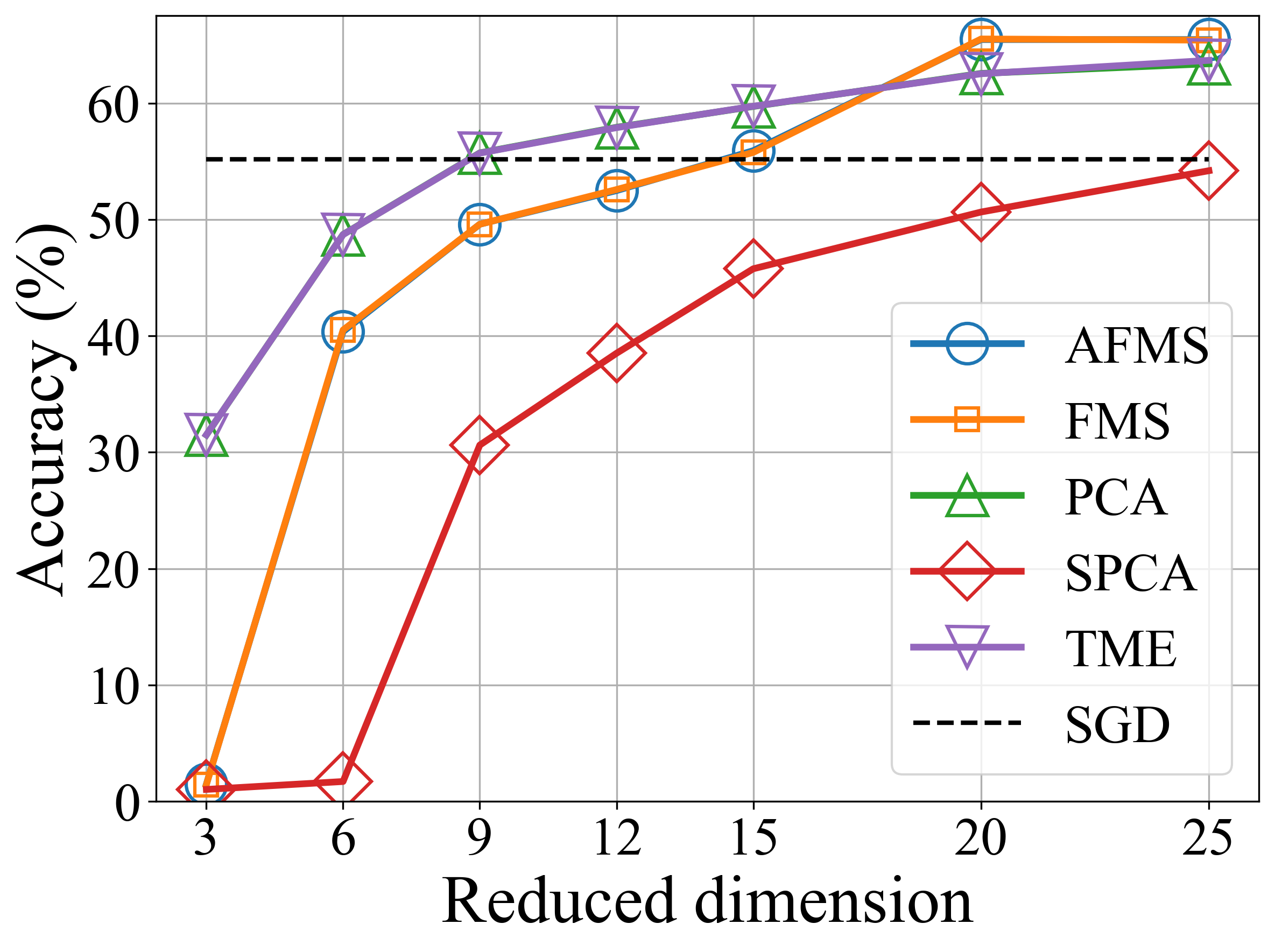}
    \label{fig:sub2}
  \end{minipage}\hfill
  \begin{minipage}[b]{0.31\linewidth}
    \centering
    Tiny ImageNet\\ \vspace{.1cm}
    \refstepcounter{subfig} 
    \includegraphics[width=\linewidth]{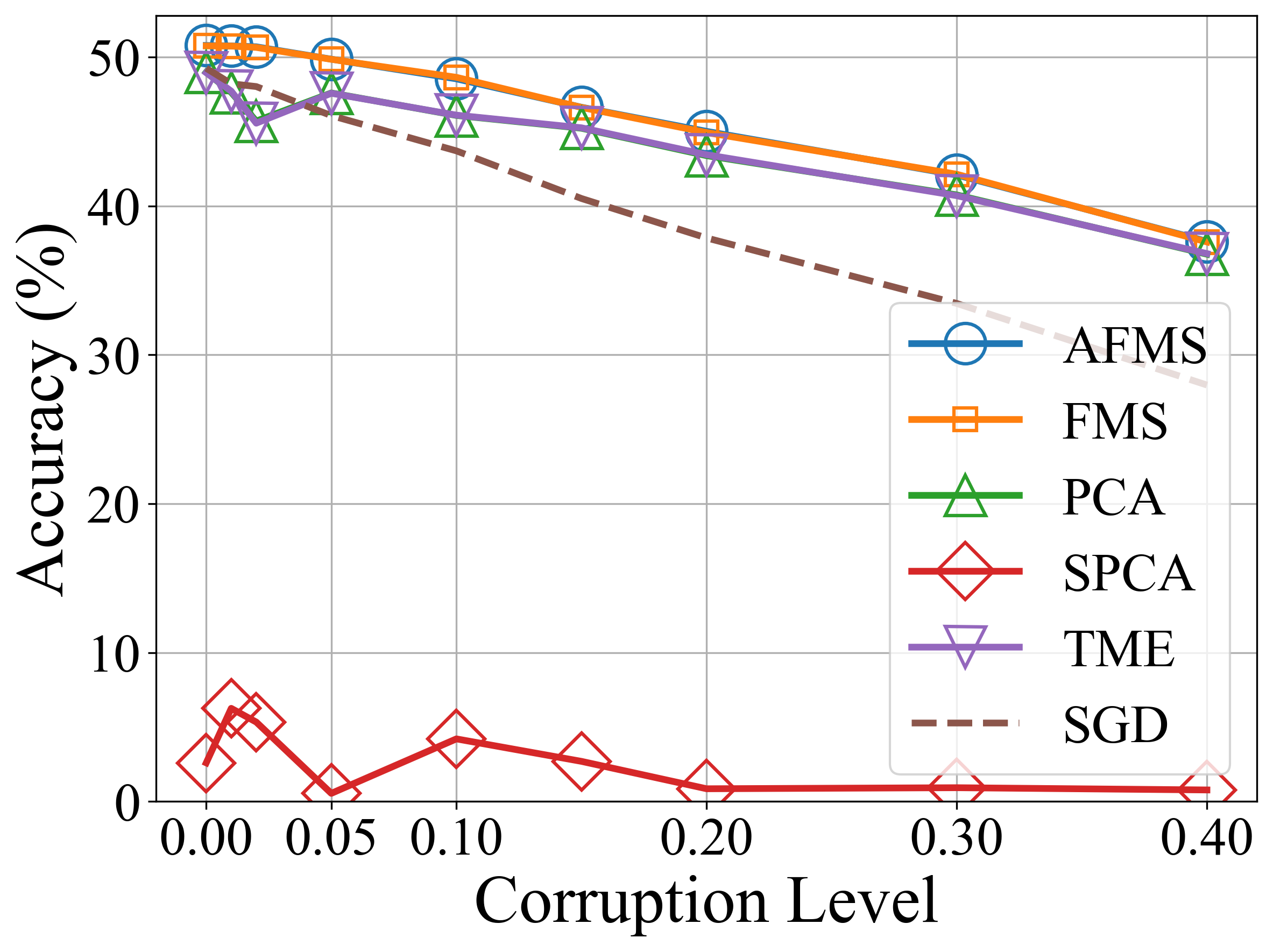}
    \includegraphics[width=\linewidth]{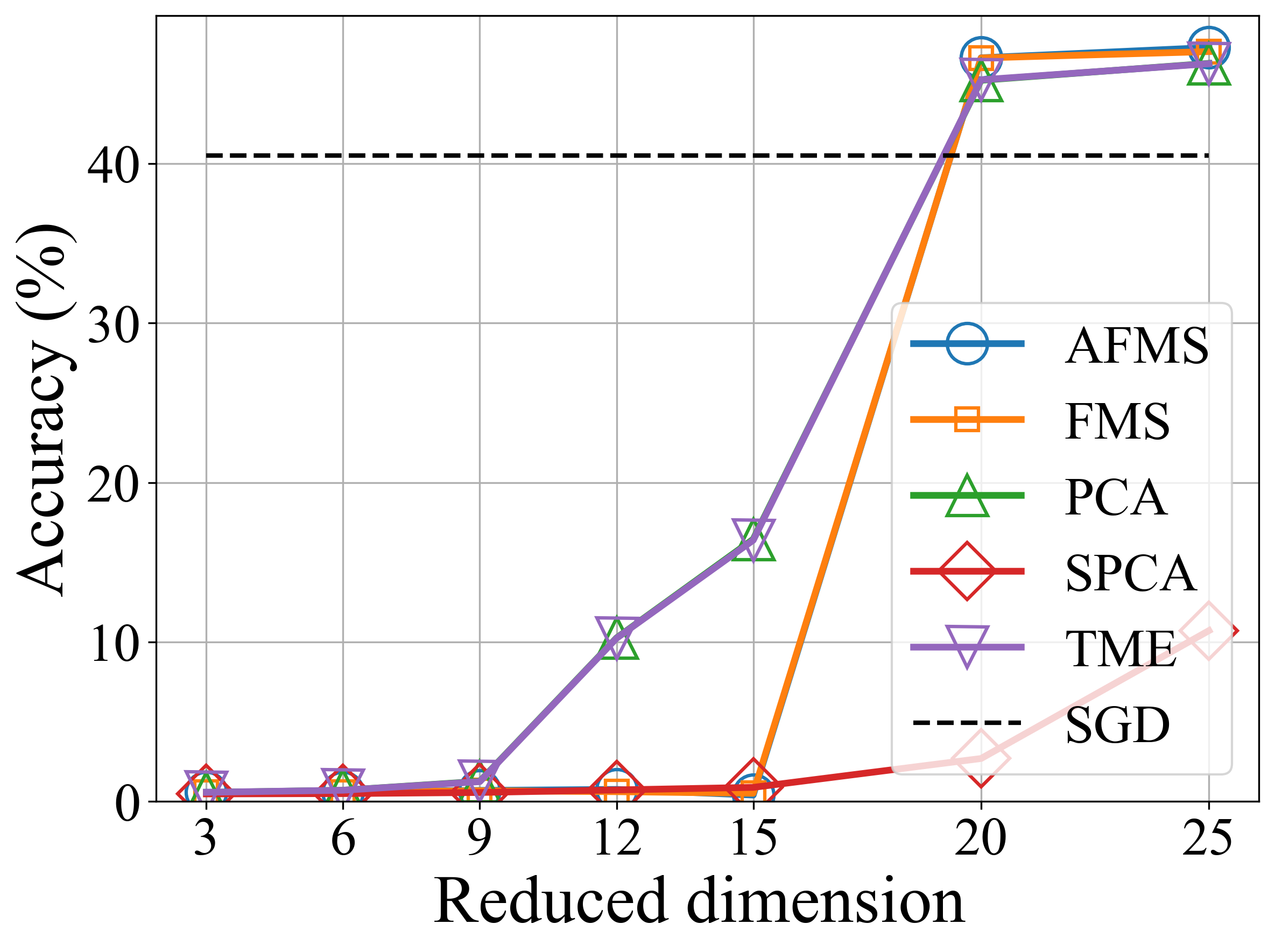}
    \label{fig:sub3}
  \end{minipage}
  \caption{Comparison of PCA, SPCA, TME, FMS, and AFMS for low-dimensional neural network training using three different datasets (left: CIFAR-10, center: CIFAR-100, right: Tiny ImageNet). Top row: subspace dimension fixed at 20 and varying corruption, bottom row: corruption level fixed at 15\% and varying subspace dimension. For sufficiently high dimensions, the subspace recovered by (A)FMS outperforms those obtained by other RSR methods.}
  \label{fig:low3}
\end{figure}

\section{Additional Proofs}
\label{app:proofs}

In this section, we provide additional proofs that were deferred for readability. Section~\ref{app:linthm_lemmaproofs} contains the proofs of Lemmas~\ref{lemma:smooth_properties}–\ref{lemma:assumption1}, which were stated during the proof of Theorem~\ref{thm:global}. Section~\ref{sec:proof_aff} presents the proof of Theorem~\ref{thm:affine}, which builds upon the ideas used in the proof of Theorem~\ref{thm:global}.
Lastly, Section \ref{sec:haystack} provides the proof of Proposition \ref{prop:haystack_assump}, Section \ref{subsec:prop_adv_proof} the proof of Proposition \ref{prop:adv_assump}, and Section \ref{sec:affine_models} provides the proofs of Propositions \ref{prop:affine_adversarial} and \ref{prop:affine_haystack}.

\subsection{Proof of Supplementary Lemmas for Theorem \ref{thm:global}}
\label{app:linthm_lemmaproofs}

\begin{proof}[Proof of Lemma \ref{lemma:smooth_properties}]
    Assume that $\epsilon_{2} < \epsilon_1$. We remind ourselves that
    \begin{align*}
        F_{\epsilon}(L)&=\sum_{\bx\in\calX:\dist(\bx,L)>\epsilon}\dist(\bx,L)+ \sum_{\bx\in\calX:\dist(\bx,L)\leq \epsilon}\left(\frac{1}{2}\epsilon+\frac{\dist(\bx,L)^2}{2\epsilon}\right).
    \end{align*}
    We proceed by cases for each $\bx$.

    If $\dist(\bx, L) > \epsilon_1$, then the corresponding term in $F_{\epsilon_{2}}$ is equal to the term in $F_{\epsilon_1}$. Similarly, if $\dist(\bx, L) \leq \epsilon_{2}$, then the term is again equal. It remains to study the case where $\epsilon_{2} < \dist(\bx, L) \leq \epsilon_1$. The result follows from the fact that
    \[
        |z| \leq \frac{\epsilon}{2} + \frac{z^2}{2\epsilon}
    \]
    for all $\epsilon > 0$. In particular, WLOG assume $z > 0$, and the inequality is equivalent to
    \[
        z^2 + \epsilon^2 - 2z\epsilon = (z-\epsilon)^2 \geq 0.
    \]
\end{proof}

\begin{proof}[Proof of Lemma~\ref{lemma:decrease}]
Before proving Lemma \ref{lemma:decrease}, we prove the following lemma on the quadratic growth of the trace function $f: \scrG(D,d)\rightarrow\reals$ defined by {$f(L)=\tr(\bP_L \bSigma\bP_L )$}.
\begin{lemma}\label{lemma:quadratic}
For a positive semi-definite matrix $\bSigma \in \reals^{D\times D}$ and  $\hat{L}=\argmax_{L}f(L)$, 
\[
f(\hat{L})-f(L)\geq \frac{\|\bP_L\bSigma\bP_{L^\perp}\|_F^2}{\|\bSigma\|}.
\]
\end{lemma}

\begin{proof}[Proof of Lemma~\ref{lemma:quadratic}]
In the proof, we denote
\begin{equation*}\label{eq:matrix_subscript}
[\bSigma]_{L_1,L_2}=\bU_{L_1}^\top\bSigma \bU_{L_2}. 
\end{equation*}
and thus write
\begin{equation}   
\label{eq:matrix_subscript}
\bSigma=\begin{pmatrix}
[\bSigma]_{L,L} & [\bSigma]_{L,L^\perp}\\
[\bSigma]_{L^\perp,L} & [\bSigma]_{L^\perp,L^\perp}
\end{pmatrix}.
\end{equation}
Inspired by \cite{lerman_zhang2024}, we define \[\bSigma_*:=\begin{pmatrix}
[\bSigma]_{L,L} & [\bSigma]_{L,L^\perp}\\
[\bSigma]_{L^\perp,L} & [\bSigma]_{L^\perp,L} ([\bSigma]_{L,L})^{-1}[\bSigma]_{L,L^\perp}
\end{pmatrix}.
\] 
We note that $\bSigma-\bSigma_*$ is positive semi-definite. Indeed, three of its blocks are zero and the bottom-right block is $[\bSigma]_{L^\perp,L^\perp}-[\bSigma]_{L^\perp,L} ([\bSigma]_{L,L})^{-1}[\bSigma]_{L,L^\perp}$, which is positive semi-definite \cite[Theorem 1.3.3]{bhatia2009positive}. Since $f(\hat{L})$ is the sum of the top $d$ eigenvalues of $\bSigma$,
\begin{align}
f(\hat{L})=\sum_{i=1}^d\lambda_i(\bSigma)\geq \sum_{i=1}^d\lambda_i(\bSigma_*)=\tr(\bSigma_*)=\tr([\bSigma]_{L,L} )+\tr([\bSigma]_{L^\perp,L} ([\bSigma]_{L,L})^{-1}[\bSigma]_{L,L^\perp}),
\end{align} 
where the first inequality follows from the fact that $\bSigma-\bSigma_*$ is positive semidefinite and the second equality follows from the fact that $\rank(\bSigma_*)\leq d$, as
\[
\bSigma_*=\begin{pmatrix}
[\bSigma]_{L,L} \\
[\bSigma]_{L^\perp,L} 
\end{pmatrix} \begin{pmatrix}
\bI &
([\bSigma]_{L,L})^{-1}[\bSigma]_{L,L^\perp}.\end{pmatrix}
\]

In addition, 
\begin{align*}
    \tr([\bSigma]_{L^\perp,L} ([\bSigma]_{L,L})^{-1}[\bSigma]_{L,L^\perp})&=\|[\bSigma]_{L^\perp,L} ([\bSigma]_{L,L})^{-1/2}\|_F^2\\
    &\geq \|[\bSigma]_{L^\perp,L}\|_F^2/\|[\bSigma]_{L,L}\|\\
    &\geq  \|[\bSigma]_{L^\perp,L}\|_F^2/\|\bSigma\|.
\end{align*}
Recalling that  $f(L)=\tr([\bSigma]_{L,L} )$, the lemma is proved.
\end{proof}

We now proceed with the proof of Lemma \ref{lemma:decrease}. We use an auxiliary function defined as follows:
\begin{multline}
\label{eq:Gepsilon}
G_{\epsilon}(L,L_0)=\sum_{\bx_i\in\calX:\dist(\bx_i,L_0)>\epsilon}\Big(\frac{1}{2}\dist(\bx_i,L_0)+\frac{\dist(\bx_i,L)^2}{2\dist(\bx_i,L_0)}\Big)\\
+\sum_{\bx_i\in\calX:\dist(\bx_i,L_0)\leq \epsilon}\Big(\frac{1}{2}\epsilon+\frac{\dist(\bx_i,L)^2}{2\epsilon}\Big).
\end{multline}
Our proof is then based on Lemmas~\ref{lemma:smooth_properties} and \ref{lemma:quadratic} as well as the following observations (see Section 5.1 of \cite{lerman2018fast} for a proof):
\begin{align}\label{eq:MM_properties} 
&F_{\epsilon}(L)=G_{\epsilon}(L,L)\\\nonumber
&\text{$F_{\epsilon}(L_0)\leq G_{\epsilon}(L,L_0)$ with equality holding only when $L=L_0$}\\\nonumber
&L^{(k+1)}=\argmin_L G_{\epsilon_k}(L,L^{(k)})
\end{align}

From the observations in \eqref{eq:MM_properties} we have
\[
F_{\epsilon_{k+1}}(L^{(k+1)})\leq F_{\epsilon_k}(L^{(k+1)})\leq  G_{\epsilon_k}(L^{(k+1)},L^{(k)})\leq G_{\epsilon_k}(L^{(k)},L^{(k)})=F_{\epsilon_k}(L^{(k)}).
\]
This implies that
\begin{equation}\label{eq:decrease0}
F_{\epsilon_k}(L^{(k)})-F_{\epsilon_{k+1}}(L^{(k+1)})\geq G_{\epsilon_k}(L^{(k)},L^{(k)})-G_{\epsilon_k}(L^{(k+1)},L^{(k)}).
\end{equation}
Now, using the fact that
\[
    \dist(\bx_i, L)^2 = \tr(\bP_{L^\perp} \bx_i \bx_i^\top \bP_{L^\perp}),
\]
we can rewrite
\begin{align*}
    G_{\epsilon_k}(L,L_0) &= \sum_{\bx\in\calX:\dist(\bx,L_0)>\epsilon_k}\left(\frac{1}{2}\dist(\bx_i,L_0)+\frac{\dist(\bx_i,L)^2}{2\dist(\bx_i,L_0)}\right) \ + \\
    &\sum_{\bx\in\calX:\dist(\bx,L_0)\leq \epsilon_k}\left(\frac{1}{2}\epsilon_k+\frac{\dist(\bx_i,L)^2}{2\epsilon_k}\right) \\
    &= \tr \left(\sum_i \frac{\bP_{L^\perp} \bx_i \bx_i^\top \bP_{L^\perp}}{2 \max(\dist(\bx_i, L_0), \epsilon_k)} \right) + C(L_0) \\
    &= \frac{1}{2} \tr(\bP_{L^\perp}\bS_{L_0,\epsilon_k}\bP_{L^\perp}) + C(L_0) \\
    &= \frac{1}{2}\Big[\tr(\bS_{L_0,\epsilon_k} )-\tr(\bP_{L}\bS_{L_0,\epsilon_k}\bP_{L})\Big] + C(L_0)
\end{align*}{
for some term $C(L_0)$ that depends on $L_0$, $\epsilon_k$, and $\calX$. 
So we can apply Lemma \ref{lemma:quadratic} to find
\begin{align*}
&     G_{\epsilon_k}(L^{(k)}, L^{(k)}) - G_{\epsilon_k}(L^{(k+1)},L^{(k)}) =      \min_{L}G_{\epsilon_k}(L, L^{(k)}) - G_{\epsilon_k}(L^{(k+1)},L^{(k)}) 
     \\=& \frac{1}{2}\left(-\tr(\bP_{L^{(k)}}\bS_{L^{(k)},\epsilon_k}\bP_{L^{(k)}}))+\max_{L} \tr(\bP_{L}\bS_{L^{(k)},\epsilon_k}\bP_{L}))\right)
     \geq \frac{1}{2} \frac{\|\bP_{L^{(k)}} \bS_{L^{(k)},\epsilon_k} \bP_{L^{(k)^\perp}}\|_F^2}{\|\bS_{L^{(k)},\epsilon_k}\|}.
\end{align*}
}
Lemma \ref{lemma:decrease} follows by the previous display and \eqref{eq:MM_properties}.\end{proof}

 \begin{proof}[Proof of Lemma~\ref{lemma:gradient}]
 In this proof, we let $\bS_{L,\epsilon}=\bS_{L,\epsilon,\mathrm{in}}+\bS_{L,\epsilon,\mathrm{out}}$, where
\[
\bS_{L,\epsilon,\mathrm{in}}=\sum_{\bx\in\calX_{\mathrm{in}}}\frac{\bx\bx^\top }{\max(\dist(\bx,L),\epsilon)},\,\,\bS_{L,\epsilon,\mathrm{out}}=\sum_{\bx\in\calX_{\mathrm{out}}}\frac{\bx\bx^\top}{\max(\dist(\bx,L),\epsilon)},
\]
and we will investigate $\bP_L\bS_{L,\epsilon,\mathrm{in}}\bP_{L^\perp}$ and $\bP_L\bS_{L,\epsilon,\mathrm{out}}\bP_{L^\perp}$ separately.

Let the principal vectors between $L$ and $L_\star$ be $\{\bu_j\}_{j=1}^d$ and $\{\bv_j\}_{j=1}^d$, corresponding to $L$ and $L_\star$, respectively, and let $\{\bw_j\}_{j=1}^d$ be unit vectors in $L^\perp$ such that $\bv_j = \cos\theta_j\, \bu_j + \sin\theta_j\, \bw_j$. Then we will show that there exists $1\leq j\leq d$ such that 
\[
\bu_j^\top\bS_{L,\epsilon,\mathrm{in}}\bw_j> \frac{\cos\theta_1}{2\sqrt{d}} \Sin,
\]
{and $|\bu_j^\top\bS_{L,\epsilon,\mathrm{out}}\bw_j|< \Sout$ for all $1 \leq j \leq d$.}

We begin by analyzing $\bS_{L,\epsilon,\mathrm{out}}$. By the definition of $\Sout$,  
\[
\|\bP_L \bS_{L,\epsilon,\mathrm{out}} \bP_{L^\perp}\|_2<\Sout
\]
and as a result, for all $1\leq j\leq d$,
\begin{equation}\label{eq:Sout_lb}
|\bu_j^\top\bS_{L,\epsilon,\mathrm{out}}\bw_j|<\Sout.
\end{equation}

We now analyze $\bP_L \bS_{L,\epsilon,\mathrm{in}} \bP_{L^\perp}$. We first  investigate $\bu_j^\top\bx\bx^\top\bw_j$ for any $\bx\in\calX_{\mathrm{in}}$:
\begin{align}
\label{eq:gradient_in1}
\bu_j^\top\bx=\bu_j^\top \left( \sum_{i=1}^d \bv_i \bv_i^\top \right)\bx = \cos\theta_j\bv_j^\top\bx, \quad 
\bw_j^\top\bx=\bw_j^\top \left(\sum_{i=1}^d \bv_i \bv_i^\top \right) \bx = \sin \theta_j \bv_j^\top \bx.
\end{align} 
{The first equalities follow from the fact that $\bx \in L_\star$ and the second from $\bu_j,\bw_j \perp \bv_i$ for $i \neq j$.}
Next, we note that  $P_{L^\perp}\bx=\sum_{j=1}^d\bw_j\bw_j^\top\bx$, and thus
\begin{equation}\label{eq:gradient_in2}
\dist(\bx,L)=\|P_{L^\perp}\bx\|=\sqrt{\sum_{j=1}^d(\bw_j^\top\bx)^2}=\sqrt{\sum_{j=1}^d\sin^2\theta_j(\bv_j^\top\bx)^2}.
\end{equation}
To proceed with our lower bound of $\dist(\bx,L)$, we note that  $\sqrt{\sum_{j=1}^d \sin^2 \theta_j c_j}$ is a concave function of the variables $\sin^2 \theta_j$. Consequently, using the result of Section 7.2.3 of \cite{lerman2015robust}
\begin{align}\label{eq:dist_lb}
\sum_{\bx\in\calX_{\mathrm{in}}}\dist(\bx,L)&=\sum_{\bx\in\calX_{\mathrm{in}}}\sqrt{\sum_{j=1}^d \sin^2 \theta_j \cdot (\bv_j^\top\bx)^2}  \geq \Sin\sqrt{\sum_{j=1}^d \sin^2\theta_j}.
\end{align}

Applying \eqref{eq:gradient_in1} and \eqref{eq:gradient_in2}, for an inlier $\bx$, we have that 
\begin{align*}
\frac{\dist^2(\bx,L)}{\max(\dist(\bx,L),\epsilon)}&=\frac{\sum_{j=1}^d  \sin^2\theta_j\cdot(\bv_j^\top\bx)^2}{\max(\dist(\bx,L),\epsilon)}\\ &= \sum_{j=1}^d \tan\theta_j \frac{\sin\theta_j\cos\theta_j(\bv_j^\top\bx)^2}{\max(\dist(\bx,L),\epsilon)} \\ &=\sum_{j=1}^d \tan(\theta_j) \, \bu_j^\top \bS_{L,\epsilon,\mathrm{in}}\bw_j.
\end{align*}
Next, let us analyze
\(
\sum_{j=1}^d \tan \theta_j \, \bu_j^\top \bS_{L,\epsilon,\mathrm{in}}\bw_j .\)
Recall that, {by assumption}, $\epsilon$ is chosen such that for at least half of the inliers, $\dist(\bx,L)\geq \epsilon$. Using this and \eqref{eq:dist_lb},
\begin{align}\label{eq:gradient_in3}
\sum_{j=1}^d \tan \theta_j \bu_j{^\top}\bS_{L,\epsilon,\mathrm{in}}\bw_j &=\sum_{\bx\in\calX_{\mathrm{in}}} \frac{\dist^2(\bx,L)}{\max(\dist(\bx,L),\epsilon)} \\ \nonumber
&\geq \sum_{\bx\in\calX_{\mathrm{in}}:\dist(\bx,L)>\epsilon} \dist(\bx,L) \\ \nonumber
&\geq 
 \frac{1}{2}\sum_{\bx\in\calX_{\mathrm{in}}} \dist(\bx,L) \\ \nonumber
 &\geq \frac{1}{2}\Sin \sqrt{\sum_{j=1}^d\sin^2\theta_j}.\nonumber
\end{align}

On the other hand, {because the principal angles are ordered such that $\theta_1\geq\cdots\geq \theta_d$,}
\begin{equation}\label{eq:gradient_in4}
\frac{\sqrt{\sum_{j=1}^d\sin^2\theta_j}}{\sum_{j=1}^d \tan\theta_j }\geq \frac{\frac{1}{\sqrt{d}}{\sum_{j=1}^d\sin\theta_j}}{\frac{1}{\cos\theta_1}\sum_{j=1}^d \sin\theta_j } =\frac{\cos\theta_1}{\sqrt{d}}.
\end{equation}
As a result, \eqref{eq:gradient_in3} implies that
\[
\max_{j=1,\cdots, d}\bu_j^\top \bS_{L,\epsilon,\mathrm{in}}\bw_j\geq \frac{\Sin\sqrt{\sum_{j=1}^d\sin^2\theta_j}}{2\sum_{j=1}^d \tan\theta_j}\geq \frac{\cos\theta_1}{2\sqrt{d}} \Sin.
\]

Denoting the index that achieves the maximum above by $j_0$ and using \eqref{eq:Sout_lb} and the above equation, the proof of Lemma~\ref{lemma:gradient} is concluded:
\[
\|\bP_{L}\bS_{L,\epsilon}\bP_{L^\perp}\|_F\geq \bu_{j_0}^\top\bS_{L,\epsilon}\bw_{j_0}\geq \bu_{j_0}^\top\bS_{L,\epsilon,\mathrm{in}}\bw_{j_0}+\bu_{j_0}^\top\bS_{L,\epsilon,\mathrm{out}}\bw_{j_0}\geq \frac{\cos\theta_1}{2\sqrt{d}} \Sin-\Sout.
\]

\end{proof}

{ 
\begin{proof}[Proof of Lemma~\ref{lem:assumpholds2}] 
We will prove Lemma~\ref{lem:assumpholds2} by showing that for all $k \geq 0$, $\theta_1(L^{(k+1)}, L_\star) \leq \theta_0$. 

We begin by the naive bound (recall that $\alpha_0\leq \gamma/\gamma_\star\leq 1/2$) \begin{equation}q_{\alpha_0}(\{\dist(\bx, L)\}_{\bx \in \calX_{\mathrm{in}}})\leq \mathrm{median}(\{\dist(\bx, L)\}_{\bx \in \calX_{\mathrm{in}}})\leq 2\mathrm{mean}(\{\dist(\bx, L)\}_{\bx \in \calX_{\mathrm{in}}}).\label{eq:assumpholds2_proof1}\end{equation} 
The second bound follows from the fact that for positive numbers, the median is bounded by twice the mean. For any subspaces $L, L' \in \scrG(D, d)$ such that $F_{\mathrm{in}}(L') \geq F_{\mathrm{in}}(L)/5$, that is, $\mathrm{mean}(\{\dist(\bx, L')\}_{\bx \in \calX_{\mathrm{in}}}) \geq \mathrm{mean}(\{\dist(\bx, L)\}_{\bx \in \calX_{\mathrm{in}}})/5$,  \eqref{eq:global21} and \eqref{eq:assumpholds2_proof1} imply
\[
10\beta_1 q_{\alpha_0}\left(\left\{\dist(\bx, L')\right\}_{\bx \in \calX_{\mathrm{in}}}\right)
\geq  q_{\alpha_0}\left(\left\{\dist(\bx, L)\right\}_{\bx \in \calX_{\mathrm{in}}}\right).
\]
The assumption \eqref{eq:global22} implies that
\begin{align*}
    q_{\alpha_0}\left(\left\{\dist(\bx, L')\right\}_{\bx \in \calX_{\mathrm{in}}}\right) & \leq q_{\alpha_0}\left(\left\{\|\bx\|\right\}_{\bx \in \calX_{\mathrm{in}}}\right) \\
    & \leq \beta_2 \cdot q_{\frac{\gamma-\alpha_0\gamma_\star}{1 - \gamma_\star}}\left(\{\dist(\bx, L')\}_{\bx \in \calX_{\mathrm{out}}}\right).
\end{align*}
Using this we find that
\begin{align*}
q_\gamma\left(\left\{\dist(\bx, L')\right\}_{\bx \in \calX}\right)
&\geq \min\left(
q_{\alpha_0}\left(\left\{\dist(\bx, L')\right\}_{\bx \in \calX_{\mathrm{in}}}\right),
q_{\frac{\gamma-\alpha_0\gamma_\star}{(1 - \gamma_\star)}}\left(\left\{\dist(\bx, L')\right\}_{\bx \in \calX_{\mathrm{out}}}\right)
\right) \\
&\geq \min\left(
q_{\alpha_0}\left(\left\{\dist(\bx, L')\right\}_{\bx \in \calX_{\mathrm{in}}}\right),
\frac{1}{\beta_2} q_{\alpha_0}\left(\left\{\dist(\bx, L')\right\}_{\bx \in \calX_{\mathrm{in}}}\right)\right) \\
&\geq \frac{1}{\beta_2} q_{\alpha_0}\left(\left\{\dist(\bx, L')\right\}_{\bx \in \calX_{\mathrm{in}}}\right).
\end{align*}
We remark that when $\gamma \geq 1 - (1-\alpha_0)\gamma_\star$, then $(\gamma - \alpha_0\gamma_\star)/(1 - \gamma_\star) \geq 1$, and hence, by setting $\beta_2 = 1$, the above condition is trivially satisfied under our assumption.  
In summary, we obtain the bound
\begin{equation}\label{eq:global24}
q_{\alpha_0}\left(\left\{\dist(\bx, L)\right\}_{\bx \in \calX_{\mathrm{in}}}\right)\leq {10}{\beta_1}\cdot q_{\alpha_0}\left(\left\{\dist(\bx, L')\right\}_{\bx \in \calX_{\mathrm{in}}}\right) 
\leq \beta \cdot q_\gamma\left(\left\{\dist(\bx, L')\right\}_{\bx \in \calX}\right).
\end{equation}

 Next, we show that at least $\alpha_0$ percentage of the inliers satisfy $\dist(\bx, L^{(k)}) \leq \beta \epsilon_k$, i.e., 
\begin{equation}\label{eq:inliers_epsilonk}
\calX_{\mathrm{in}}' = \left\{\bx \in \calX_{\mathrm{in}} : \dist(\bx, L^{(k)}) \leq \beta \epsilon_k \right\} \implies |\calX_{\mathrm{in}}'| \geq \alpha_0|\calX_{\mathrm{in}}|.
\end{equation}
By the definition of $\epsilon_k$ in \eqref{eq:epsilon_choose}, there exists $l \leq k$ such that
$\epsilon_k = q_\gamma\left(\left\{\dist(\bx, L^{(l)})\right\}_{\bx \in \calX}\right).$ We now consider two cases:

\emph{Case 1: $k = l$.} In this case, \eqref{eq:inliers_epsilonk} follows directly from \eqref{eq:global24} by taking $L = L' = L^{(k)}$.

\emph{Case 2: $k > l$.} Then we have $\epsilon_l > \epsilon_{l+1} = \cdots = \epsilon_k$. Since the objective function is non-increasing, we have
\[F_{\epsilon_k}(L^{(l)}) \geq F_{\epsilon_k}(L^{(k)}) \geq F(L^{(k)}).\] Moreover, by the choice of $\epsilon_k$ and using $\gamma \leq \gamma_\star/2$, we get
$F_{\epsilon_k}(L^{(l)}) - F(L^{(l)}) \leq \epsilon_k \gamma |\calX|
\leq F_{\mathrm{in}}(L^{(l)}),$ where the second inequality uses the fact that
\[
\epsilon_k = q_\gamma\left(\left\{\dist(\bx, L^{(l)})\right\}_{\bx \in \calX}\right)
\leq q_{0.5}\left(\left\{\dist(\bx, L^{(l)})\right\}_{\bx \in \calX_{\mathrm{in}}}\right),
\]
and again, for positive numbers, the median is bounded by twice the mean.

Hence,
\[
F_{\mathrm{in}}(L^{(l)}) + F(L^{(l)}) \geq F(L^{(k)}),\,\,\text{i.e.,}\,\,
2 F_{\mathrm{in}}(L^{(l)}) + F_{\mathrm{out}}(L^{(l)}) \geq F_{\mathrm{in}}(L^{(k)}) + F_{\mathrm{out}}(L^{(k)}).
\]
By Lemma~\ref{lemma:objectivevalue_bound} and Assumption~\ref{assump:global}, we have that $\frac{|F_{\mathrm{out}}(L)-F_{\mathrm{out}}(L_\star)|}{F_{\mathrm{in}}(L)-F_{\mathrm{in}}(L_\star)} \leq 1/3$, and thus
\[
\left(2 + \frac{1}{3}\right) F_{\mathrm{in}}(L^{(l)}) \geq \left(1 - \frac{1}{3}\right) F_{\mathrm{in}}(L^{(k)}),
\]
which simplifies to
\[
5 F_{\mathrm{in}}(L^{(l)}) \geq F(L^{(k)}).
\]
Therefore, by applying \eqref{eq:global24} with $L = L^{(k)}$ and $L' = L^{(l)}$, we obtain \eqref{eq:inliers_epsilonk}, completing the proof of \eqref{eq:inliers_epsilonk} for case 2.

Finally, observe that $L^{(k+1)}$ is the span of the top $d$ eigenvectors of the matrix
\[
\sum_{\bx \in \calX} \frac{\bx \bx^\top}{\max(\dist(\bx, L^{(k)}), \epsilon_k)} 
= \sum_{\bx \in \calX_{\mathrm{in}}} \frac{\bx \bx^\top}{\max(\dist(\bx, L^{(k)}), \epsilon_k)} 
+ \sum_{\bx \in \calX_{\mathrm{out}}} \frac{\bx \bx^\top}{\max(\dist(\bx, L^{(k)}), \epsilon_k)}.
\]
Note that the outlier contribution can be bounded as
\[
\sum_{\bx \in \calX_{\mathrm{out}}} \frac{\bx \bx^\top}{\max(\dist(\bx, L^{(k)}), \epsilon_k)} 
\preceq \sum_{\bx \in \calX_{\mathrm{out}}} \frac{\bx \bx^\top}{\epsilon_k},
\]
and the inlier contribution satisfies
\[
\sum_{\bx \in \calX_{\mathrm{in}}} \frac{\bx \bx^\top}{\max(\dist(\bx, L^{(k)}), \epsilon_k)} 
\succeq \sum_{\substack{\bx \in \calX_{\mathrm{in}} \\ \dist(\bx, L^{(k)}) \leq \beta \epsilon_k}} \frac{\bx \bx^\top}{\max(\dist(\bx, L^{(k)}), \epsilon_k)} 
\succeq \sum_{\bx \in \calX_{\mathrm{in}}'} \frac{\bx \bx^\top}{\beta \epsilon_k},
\]
where $\calX_{\mathrm{in}}'$ is the subset of inliers defined in \eqref{eq:inliers_epsilonk}.

Define
\[
\bA := \sum_{\bx \in \calX_{\mathrm{in}}} \frac{\bx \bx^\top}{\max(\dist(\bx, L^{(k)}), \epsilon_k)}, 
\quad 
\bB := \sum_{\bx \in \calX} \frac{\bx \bx^\top}{\max(\dist(\bx, L^{(k)}), \epsilon_k)}.
\]
Then by combining the inequalities above, along with \eqref{eq:global23} and \eqref{eq:inliers_epsilonk}, we obtain
\[
\sin \theta_0 \cdot \sigma_d(\bA) \geq 2 \cdot \sigma_1(\bB - \bA).
\]

We now apply the Davis–Kahan sin–$\theta$ theorem \cite{a6ee9c48-1e5b-385a-93c1-cdf3b873de37}, using the formulation and notation from \citet[Theorem VII.3.1]{bhatia1996matrix}. Let $S_1 = [\sigma_d(\bA), \infty)$ and $S_2 = (-\infty, \sigma_d(\bA)/2]$. Then the spectral gap is $\delta = \sigma_d(\bA)/2$. Let $\bE$ denote the orthogonal projector onto $L_\star$ and $\bF$ the orthogonal projector onto the orthogonal complement of $L^{(k+1)}$. Then
\[
\sin(\theta_1(L^{(k+1)}, L_\star)) = \|\bE\bF\| 
\leq \frac{\sigma_1(\bB - \bA)}{\delta} 
= \frac{2 \cdot \sigma_1(\bB - \bA)}{\sigma_d(\bA)} 
\leq \sin \theta_0.
\]
This concludes the proof.

\end{proof}

}

\begin{proof}[Proof of Lemma~\ref{lemma:objectivevalue_bound}]
We again denote the principal angles between $L$ and $L_\star$ by $\{\theta_j\}_{j=1}^d$. We first claim that
\begin{align}\label{eq:objectivevalue_bound10}
F_{\mathrm{in}}(L)&\geq \frac{\sum_{j=1}^d\sin\theta_j}{\sqrt{d}} \Sin,\\\label{eq:objectivevalue_bound20}
|F_{\mathrm{out}}(L)-F_{\mathrm{out}}(L_\star)|&\leq \Sout\sum_{j=1}^d\theta_j.
\end{align} 
Then the lemma follows from \eqref{eq:objectivevalue_bound10} and \eqref{eq:objectivevalue_bound20} using the fact that $F_{\mathrm{in}}(L_\star) = 0$.

To prove \eqref{eq:objectivevalue_bound10}, we again let the principal vectors between $L$ and $L_\star$ be $\bu_1,\cdots, \bu_d$ and $\bv_1,\cdots, \bv_d$. Then any $\bx\in\calX_{\mathrm{in}}$ can be written as
\[
\bx=\sum_{j=1}^d x_j\bv_j, \text{where $x_j=\bv_j^\top\bx$},
\]
and the projection to $L$ can be written as
\[
P_L(\bx)=\sum_{j=1}^d x_j P_L(\bv_j). 
\]
Note that $\bv_j$ has an angle of $\theta_j$ with $L$ and $\|\bv_j-P_L(\bv_j)\|=\sin\theta_j$. Consequently, 
\[
\dist(\bx,L)=\|\bx - \bP_L \bx\|=\sqrt{ 
\sum_{j=1}^d\sin^2\theta_j x_j^2}=\sqrt{
\sum_{j=1}^d\sin^2\theta_j (\bv_j^\top\bx)^2}\geq \frac{1}{\sqrt{d}}\sum_{j=1}^d\sin\theta_j |\bv_j^\top\bx|.
\]
Summing the above inequality over all $\bx\in\calX_{\mathrm{in}}$ and applying the definition of $\Sin$, \eqref{eq:objectivevalue_bound10} is proved.

To prove \eqref{eq:objectivevalue_bound20}, let $\{\bu_1,\cdots, \bu_d, \bw_1\}\subset\reals^D$ be an orthonormal set, and let 
$L(t)=\Sp(\cos t\theta \bu_1+\sin t\theta \bw_1, \bu_2,\cdots,\bu_d)$.
We first notice that
\begin{align}
\label{eq:1dderbd}
\left|\frac{\di}{\di t}\dist(\bx,L(t))\Big|_{t=0}\right|&=2\theta\left|\frac{(\cos t\theta \bu_1+\sin t\theta \bw_1)^\top\bx\bx^\top(-\sin t\theta \bu_1+\cos t\theta \bw_1)}{\dist(\bx,L(t))}\right| \\ \nonumber
&\leq 2\theta \left\|\frac{\bP_{L(t)}\bx\bx^\top\bP_{L(t)^\perp}}{\dist(\bx,L(t))}\right\|.
\end{align}

Now let $\bu_1, \bu_2, \cdots, \bu_d$ and $\bv_1, \bv_2, \cdots, \bv_d$ be the principal vectors between $L_\star$ and $L$ with principal angles $\theta_1, \cdots, \theta_d$, and define that for all $1\leq k\leq d$,
\begin{align*}
L_k=\Span(\bv_1,\cdots,\bv_k,\bu_{k+1},\cdots,\bu_d).
\end{align*}
In particular, $L_0=L_\star$ and $L_d=L$. Then $\dist(L_{k-1},L_k)=\theta_k$ and integrating $\frac{\di}{\di t}\dist(\bx,L(t))$ and applying \eqref{eq:1dderbd} implies that
\[
\left|\sum_{\bx\in\calX_{\mathrm{out}}}\dist(\bx,L_k)-\dist(\bx,L_{k-1})\right|\leq \theta_k \Sout.
\]
Combining the above inequality for all $0\leq k\leq d-1$, \eqref{eq:objectivevalue_bound20} is proved.

\end{proof}

{
\begin{proof}[Proof of Lemma~\ref{lemma:assumption1}]

We begin by noting that $\frac{F_{\mathrm{in}}(L)}{|\calX_{\mathrm{in}}|} =  \mathrm{mean}_{\bx\in\calX_{\mathrm{in}}}\dist(\bx,L)$. Combining this with the fact that $\dist(\bx, L) \leq \sin\theta_1(L,  L_\star) \|\bx\|$ yields
\begin{equation}\label{eq:meanbd1}
    \mathrm{mean}_{\bx\in\calX_{\mathrm{in}}}\dist(\bx,L)\leq \sin\theta_1(L,L_\star)\cdot\mathrm{mean}_{\bx\in\calX_{\mathrm{in}}}\|\bx\|.
\end{equation}
To prove the lemma, it is thus sufficient to prove that there is a $c_3 > 0$ such that
\begin{equation}\label{eq:percentage_gamma}
     \sup_{L\in \scrG(D,d)} \frac{|\{\bx \in \calX: \dist(\bx,L)<c_3\cdot\sin\theta_1(L,L_\star)\cdot\mathrm{mean}_{\bx\in\calX_{\mathrm{in}}}\|\bx\| \}|}{|\calX|}<\gamma.
\end{equation}
This follows from the definition of $q_\gamma$ and the fact that \eqref{eq:meanbd1} implies the following is a sufficient condition for the lemma:
\[
c_3\cdot \sin\theta_1(L,L_\star)\cdot\mathrm{mean}_{\bx\in\calX_{\mathrm{in}}}\|\bx\| \leq q_{\gamma}(\{\dist(\bx, L)\}_{\bx \in \calX}).
\]

To prove the existence of such a $c_3$, we will show that Assumption 1 implies that
\begin{equation*}
    \frac{q_{\gamma}(\{\dist(\bx,L)\}_{\bx\in\calX})}{\sin \theta_1(L, L_\star)}
\end{equation*}
is uniformly lower bounded by a positive constant for $L \neq L_\star$.

{ We prove this by contradiction. Suppose the statement does not hold. Then there exists a sequence \(\{L^{(k)}\}_{k \geq 1}\) such that
\begin{equation}\label{eq:bdqgamma}
\lim_{k \to \infty} \frac{q_{\gamma}(\{\dist(\bx,L^{(k)})\}_{\bx \in \calX})}{\sin \theta_1(L^{(k)}, L_\star)} = 0.
\end{equation}

Since the Grassmannian is compact, we may assume without loss of generality that \(L^{(k)} \to \hat{L}\) (otherwise, we can choose a convergent subsequence of $L^{(k)}$).  

\textbf{Case 1:} \(\theta_1(\hat{L}, L_\star) \neq 0\).  
Then $\hat{L}\neq L_*$ and \eqref{eq:bdqgamma} implies
$q_{\gamma}(\{\dist(\bx,\hat{L})\}_{\bx \in \calX}) = 0,$
which contradicts Assumption~\ref{assump:lowerdim}, which claims that fewer than \(\gamma\) fraction of points can lie in the \(d\)-dimensional subspace $\hat{L}$.

\textbf{Case 2:} \(\theta_1(\hat{L}, L_\star) = 0\), i.e., \(\hat{L} = L_\star\).  
Without loss of generality, we may assume the principal vector \(\bv_1(L^{(k)}, L_\star)\) converges to some \(\hat{\bv}_1 \in L_\star \) (otherwise, choose a subsequence with this property). Then, for any \(\bx \in L_\star\),
\begin{align*}
&\liminf_{k \to \infty} \frac{\dist(\bx,L^{(k)})^2}{\sin^2 \theta_1(L^{(k)}, L_\star)}
= \liminf_{k \to \infty} \left(|\bx^\top \bv_1(L^{(k)},L_\star)|^2 + \sum_{j=2}^d \frac{\sin^2 \theta_j(L^{(k)},L_\star)}{\sin^2 \theta_1(L^{(k)}, L_\star)} |\bx^\top \bv_j(L^{(k)},L_\star)|^2\right)\\\geq& \liminf_{k \to \infty} |\bx^\top \bv_1(L^{(k)},L_\star)|^2= |\bx^\top \hat{\bv}_1|^2.
\end{align*}

On the other hand, for \(\bx \notin L_\star\),  \(\dist(\bx,L^{(k)}) / \sin \theta_1(L^{(k)}, L_\star) \to \infty\) as \(k \to \infty\).  

Thus, \eqref{eq:bdqgamma} can occur only if at least \(\gamma |\calX|\) points satisfy \(\bx^\top \hat{\bv}_1 = 0\) and $\bx\in L_*$, i.e., \(\gamma |\calX|\) points lie in the \((d-1)\)-dimensional subspace \(L_\star \cap \Sp(\hat{\bv}_1)^\perp\), which again contradicts Assumption~\ref{assump:lowerdim}.

In summary, the existence of $c_3$ follows by contradiction through the analysis of two cases.
}

First, note that as $L \to L_\star$, 
\[
    q_{\gamma}(\{\dist(\bx,L)\}_{\bx\in\calX}) \asymp q_{\gamma \frac{|\calX|}{|\calX_{\mathrm{in}}|}}(\{\dist(\bx,L)\}_{\bx\in\calX_{\mathrm{in}}}).
\]
so that in a neighborhood around $L_\star$, we only need to understand the behavior of \eqref{eq:bdqgamma} with respect to the inlier points.

 Next, observe that for $\bx \in L_\star$,  $\frac{\dist(\bx,L)^2}{\sin^2 \theta_1(L, L_\star)}=|\bx^\top\bv_1(L,L_\star)|^2 + \sum_{j=2}^d \frac{\sin^2\theta_j(L,L_\star)}{\sin^2(\theta_1(L, L_\star)} |\bx^\top\bv_j(L,L_\star)|^2$, where $\bv_j(L, L_\star)$ is a principal vector for  $L_\star$ relative to $L$.
 Consequently, 
\begin{align*}
 \{\bx\in \calX_{\mathrm{in}} : &\dist(\bx,L)<c_3\cdot\sin\theta_1(L,L_\star)\cdot\mathrm{mean}_{\bx\in\calX_{\mathrm{in}}}\|\bx\|\} \\
 &\subseteq \{\bx\in \calX_{\mathrm{in}} : |\bx^\top\bv_1(L,L_\star)|<c_3\cdot\mathrm{mean}_{\bx\in\calX_{\mathrm{in}}}\|\bx\|\}.
 \end{align*}
By Assumption 1, we thus know that there is a $c_3$ such that ${q_{\gamma}(\{\dist(\bx,L)\}_{\bx\in\calX})}/{\sin \theta_1(L, L_\star)} > c_3$ for all $L \in B(L_\star, \delta)$.
 
Define $L_0 = \Sp(\bv_2, \dots, \bv_d)$. Then, by Assumption 1, we know that less than a $\gamma$ fraction of points lie in $L_0 \cup L$. In particular, for $L \neq L_\star$,
\[
    \frac{q_{\gamma}(\{\dist(\bx,L)\}_{\bx\in\calX})}{\sin \theta_1(L, L_\star)} > \min\left(  \left\{ |\bv_1^\top \bx | : \bx \in \calX_{\mathrm{in}} \setminus L_0\right\} \cup \left\{ \frac{\dist(\bx, L)}{\sin \theta_1(L, L_\star)}: \bx \in \calX_{\mathrm{out}} \setminus L \right\}\right)
\]

Since $\frac{q_{\gamma}(\{\dist(\bx,L)\}_{\bx\in\calX})}{\sin \theta_1(L, L_\star)} \not \to 0$ as $L \to L_\star$, there exists a $c_3$ such that the lemma holds.

\end{proof}

\subsection{Proof of Theorem~\ref{thm:affine}}
\label{sec:proof_aff}
We begin with some background before proceeding to the proof of the theorem.

\subsubsection{Background of proof}

\textbf{A parametrization of paths in $\scrA(D,d)$}. Recall that the underlying affine subspace in our model is $[A_\star]=[(L_\star,0)]$, where we choose the representative $(L_\star, 0)$ WLOG.

For another affine subspace parametrized by $A=(L,\bm)$, let us consider a path between $A_\star$ and $A$ defined as follows: let the principal vectors of $L_\star$ and $L$ be
\[
L_\star=\Sp(\bu_1,\cdots, \bu_d),\,\,
L=\Sp(\bv_1,\cdots, \bv_d),
\]
where $\bv_i=\cos\theta_i\bu_i+\sin\theta_i\bw_i$ such that $\bu_i\perp \bw_j$ for all $1\leq i,j\leq d$.

Then a path of affine subspaces from $A_\star$ to $A$ given by $C: [0,1]\rightarrow \scrA(D,d)$ is defined by 
\begin{equation}\label{eq:Ct}
    C(t) = (1-t)\ba_\star + t \ba +\Sp(\{\cos t\theta_i \bu_i +\sin t\theta_i \bw_i\}_{i=1}^d),
\end{equation}
where $\ba_\star$ is the point in $A_\star$ closest to $A$ and $\ba$ is the point on $A$ closest to $A_\star$. Note that $\ba-\ba_\star\perp \Sp(L\oplus L_\star)$. Denote $\bb=\ba-\ba_\star$, then \begin{equation}\label{eq:Ct2}C(t)=\ba_\star + t \bb +\Sp(\{\cos t\theta_i \bu_i +\sin t\theta_i \bw_i\}_{i=1}^d).\end{equation}

Note that here 
\begin{align}\label{eq:distance_0A}
\dist^2(0,A)&=\dist^2(-\ba,L) \\  &=\dist^2(-\ba_\star-\bb,L)\\&=\|\bb\|^2+\sum_{i=1}^d\dist^2(-\bP_{\Sp(\bu_i,\bw_i)}\ba_\star,\Sp(\{\cos \theta_i \bu_i +\sin \theta_i \bw_i\}_{i=1}^d))\\ \nonumber
&{=}\|\bb\|^2+\sum_{i=1}^d(\ba_\star^\top\bu_i)^2\sin^2\theta_i,
\end{align} 
where the { third step uses $\bb \perp \Sp(L \oplus L_\star)$ and} last step applies that $\ba_\star\perp\bw_i$. 
Therefore the definition in Section~\ref{sec:affine_analysis} gives 
\begin{align}\label{eq:distance_0A2}
    \dist^2(A_\star,[A]) &= \dist^2(0,[A]) + \sum_j \theta_j^2 \\ \nonumber
    &= \|\bb\|^2 + \sum_{i=1}^d\big((\ba_\star^\top\bu_i)^2\sin^2\theta_i+\theta_i^2\big).
\end{align}

 \textbf{Auxiliary functions.} 
 Similar to the proof of Theorem~\ref{thm:global}, the proof of Theorem~\ref{thm:affine} also depends on $F_{\epsilon}^{(a)}(A)$, a regularized version of the objective function $F^{(a)}$ defined in \eqref{eq:Fepsilon}:
\begin{align}\label{eq:Fepsilon_affine}
F_{\epsilon}^{(a)}(A)&=\sum_{\bx\in\calX:\dist(\bx,A)>\epsilon}\dist(\bx,A)+
\sum_{\bx\in\calX:\dist(\bx,A)\leq \epsilon}\left(\frac{1}{2}\epsilon+\frac{\dist(\bx,A)^2}{2\epsilon}\right).
\end{align}
We also define a majorizing function
\begin{align}
    G_{\epsilon_k}^{(a)}(A,A_0) &= \sum_{\bx\in\calX:\dist(\bx,A_0)>\epsilon_k}\left(\frac{1}{2}\dist(\bx,A_0)+\frac{\dist(\bx,A)^2}{2\dist(\bx,A_0)}\right)+\\ \nonumber
    &\sum_{\bx\in\calX:\dist(\bx,A_0)\leq \epsilon_k}\left(\frac{1}{2}\epsilon_k+\frac{\dist(\bx,A)^2}{2\epsilon_k}\right). 
\end{align}

\subsubsection{Proof of Theorem~\ref{thm:affine}}

We prove that the algorithm ensures the sequence $F^{(a)}_{\epsilon_k}(A^{(k)})$ is nonincreasing over iterations. That is, we aim to generalize  Lemma~\ref{lemma:decrease} to the affine setting. 
\begin{lemma}[Decrease over iterations, affine case]\label{lemma:decrease_affine} 
\begin{equation}\label{eq:decrease_equation0_affine}
F_{\epsilon}(A)-F_{\epsilon}(T^{(a)}(A))\geq c \epsilon  \dist(A_\star,[A]).
\end{equation}
\end{lemma}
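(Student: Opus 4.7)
The plan is to mirror the MM-based proof of Lemma~\ref{lemma:decrease} using the auxiliary function $G^{(a)}_\epsilon(A', A)$ introduced above. From the identities $F^{(a)}_\epsilon(A) = G^{(a)}_\epsilon(A, A)$ and $F^{(a)}_\epsilon(T^{(a)}(A)) \leq G^{(a)}_\epsilon(T^{(a)}(A), A)$, I would obtain the descent bound
\[
F^{(a)}_\epsilon(A) - F^{(a)}_\epsilon(T^{(a)}(A)) \geq G^{(a)}_\epsilon(A, A) - G^{(a)}_\epsilon(T^{(a)}(A), A) = \tfrac{1}{2}\sum_{\bx\in\calX} w_\bx \bigl[\dist^2(\bx, A) - \dist^2(\bx, T^{(a)}(A))\bigr],
\]
where $w_\bx = 1/\max(\dist(\bx, A), \epsilon)$ and the $A'$-independent terms in $G^{(a)}_\epsilon(\cdot, A)$ cancel. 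Since $T^{(a)}(A)$ is, by definition, the global minimizer of the weighted least-squares problem $A' \mapsto \sum_\bx w_\bx \dist^2(\bx, A')$ over $\scrA(D,d)$, I can replace $T^{(a)}(A)$ by $A_\star$ on the right to get the cleaner lower bound $\tfrac{1}{2}\sum_{\bx\in\calX} w_\bx [\dist^2(\bx, A) - \dist^2(\bx, A_\star)]$.

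Next, I would split the sum into inlier and outlier contributions. For inliers, $\dist(\bx, A_\star)=0$, so their contribution reduces to $\tfrac{1}{2}\sum_{\bx\in\calX_{\mathrm{in}}} \dist^2(\bx,A)/\max(\dist(\bx,A),\epsilon)$, which is bounded below by $\tfrac{1}{2\epsilon}\sum_{\bx\in\calX_{\mathrm{in}}} \min(\epsilon\,\dist(\bx,A), \dist^2(\bx,A))$. Using the path $C(t)$ in~\eqref{eq:Ct2} together with the decomposition~\eqref{eq:distance_0A2}, each inlier distance $\dist(\bx, A)$ admits a lower bound in the three components $\|\bb\|$, $\sum_i \theta_i^2$, and $\sum_i (\ba_\star^T \bu_i)^2 \sin^2\theta_i$ that collectively form $\dist(A_\star, [A])$. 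The outlier contribution $\sum_{\bx\in\calX_{\mathrm{out}}} w_\bx [\dist^2(\bx, A) - \dist^2(\bx, A_\star)]$ would be bounded in absolute value by integrating the directional derivatives of $\dist^2(\bx, \cdot)$ along $C(t)$, analogous to~\eqref{eq:1dderbd}, which produces a sum of terms of exactly the form appearing on the right-hand side of~\eqref{eq:assumption4_1}. Invoking Assumption~\ref{assumption4} would then absorb the outlier contribution into a constant fraction of the inlier contribution, leaving the inlier term as the dominant positive quantity.

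Combining these steps, the surviving inlier term delivers a lower bound proportional to $\dist(A_\star, [A])$ with an effective multiplicative constant of order $\epsilon$, provided the distances to the inliers lie inside the smoothing window (as they eventually do under the assumption $\dist(A_\star,[A])\leq c_0$). The main obstacle is this last step: unlike the linear setting, the affine case couples the linear direction (captured by $\sum\theta_i^2$) with the translation (captured by $\|\bb\|^2 + \sum(\ba_\star^T\bu_i)^2\sin^2\theta_i$), and the derivatives of $\dist^2(\bx, \cdot)$ along $C(t)$ produce cross terms mixing the two. Assumption~\ref{assumption4} is crafted precisely to dominate both components simultaneously via the two terms inside the $\max$ in~\eqref{eq:assumption4_1}, but uniformly controlling the ratio between these contributions along all directions in $\scrA(D,d)$, and extracting a full linear factor of $\dist(A_\star, [A])$ rather than a weaker power, is the principal technical hurdle.
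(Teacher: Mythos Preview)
Your approach departs from the paper's in a significant way. You substitute $A_\star$ for $T^{(a)}(A)$ and work directly with the secant quantity $\phi(1)-\phi(0)=\tfrac12\sum_\bx w_\bx[\dist^2(\bx,A)-\dist^2(\bx,A_\star)]$. The paper instead studies the one-parameter function $\phi(t)=G^{(a)}_\epsilon(C(t),A)$ along the path $C(t)$ and applies the quadratic descent inequality $\phi(1)-\min_t\phi(t)\geq (\phi'(1))^2/(2\max_t\phi''(t))$. The factor $\epsilon$ in the lemma arises precisely from the second-derivative bound $\phi''(t)\leq C\epsilon^{-1}\dist^2(A_\star,[A])$; Assumption~\ref{assumption4} is then invoked to show that $\phi'(1)$---which splits into an inlier piece $\sum_{\bx\in\calX_{\mathrm{in}}}\tfrac{\cos^2\theta_1\,\dist^2(\bx,A)}{2\max(\dist(\bx,A),\epsilon)}$ and an outlier piece bounded by $\sum_{\bx\in\calX_{\mathrm{out}}}\bigl(\tfrac{\pi}{2}\dist(\bP_{A_\star}\bx,[A])+\theta_1^2\|\bP_{L_\star^\perp}\bx\|^2\bigr)$---is positive and of order $\dist(A_\star,[A])$.

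There are two genuine gaps in your route. First, your secant approach does not produce the $\epsilon$ factor at all: the inlier term $\tfrac12\sum_{\bx\in\calX_{\mathrm{in}}}\dist^2(\bx,A)/\max(\dist(\bx,A),\epsilon)$ is of order $\sum_{\bx\in\calX_{\mathrm{in}}}\dist(\bx,A)$ when at least half the inliers lie \emph{outside} the smoothing window (which is what the dynamic smoothing enforces, not the opposite as you state), so your claim that the constant is ``of order $\epsilon$'' is backward. If your argument went through it would yield a bound with no $\epsilon$---stronger than the lemma---but that leads to the second gap. Second, the outlier term you must control is $\sum_{\bx\in\calX_{\mathrm{out}}}\tfrac{\dist^2(\bx,A)-\dist^2(\bx,A_\star)}{2\max(\dist(\bx,A),\epsilon)}$, an \emph{integrated} quantity over $t\in[0,1]$ divided by a weight fixed at $t=1$. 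Assumption~\ref{assumption4}, however, is calibrated exactly to the two pieces of $\phi'(1)$---the derivative at $t=1$---and the quantities $\dist(\bP_{L_\star}\bx,[A])$ and $\theta_1^2\|\bP_{L_\star^\perp}\sum\bx\bx^T\bP_{L_\star^\perp}\|$ that appear there do not bound your integrated expression without additional argument. So ``integrating along $C(t)$ produces terms of exactly the form in~\eqref{eq:assumption4_1}'' is not established; the assumption is tailored to the paper's derivative-ratio argument, not to the secant decomposition you propose.
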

\begin{proof}[Proof of Lemma~\ref{lemma:decrease_affine}]
{ We break the proof of this lemma into the following steps. First, we decompose $\dist^2(\bx,C(t))$ into constituent parts. We then compute the first two derivatives of this decomposition with respect to $t$ and bound their value. Finally, we can use the fact that $G_\epsilon^{(a)}$ is made up of these quadratic terms combined with the derivative bounds to ensure this sufficient decrease.}

\noindent \textbf{Decomposing $\dist^2(\bx,C(t))$:} Let us first investigate $\dist^2(\bx,C(t))$. Since $C(t)$ defined in \eqref{eq:Ct} can be expressed by $C(t)=(t\bb+\ba_\star, L(t))$ and $L(t)$ can be decomposed into the direct sum of $d$ components:
\begin{align}
\bP_{\Sp(\bu_i,\bw_i)}L(t)&= \Sp(\cos t\theta_i \bu_i +\sin t\theta_i \bw_i), 1\leq i\leq d.
\end{align}

{Then, the distances $\dist^2(\bx,C(t))$ can also be decomposed using
\begin{align}\label{eq:dist_x_Ct}
    &\dist^2(\bx,C(t)) = \dist^2\left(\bx-(t\bb+\ba_\star),L(t)\right)  \\\nonumber &= \left\|\bP_{\Sp(L\oplus L_\star)^\perp}\left(\bx-(t\bb+\ba_\star)\right)\right\|^2+\sum_{i=1}^d\dist^2\left(\bP_{\Sp(\bu_i,\bw_i)}\left(\bx-(t\bb+\ba_\star)\right),\bP_{\Sp(\bu_i,\bw_i)}L(t)\right)\\\nonumber &= \left\|\bP_{\Sp(L\oplus L_\star)^\perp}\bx-t\bb\right\|^2+\sum_{i=1}^d\dist^2\left(\bP_{\Sp(\bu_i,\bw_i)}\left(\bx-\ba_\star\right),\Sp(\cos t\theta_i \bu_i +\sin t\theta_i \bw_i)\right) 
    \\\nonumber
    &= \|\bP_{\Sp(L\oplus L_\star)^\perp}\bx-t\bb\|^2+\sum_{i=1}^d (\bP_{\Sp(\bu_i)}(\bx-\ba_\star)\sin t\theta_i -\bP_{\Sp(\bw_i)}(\bx-\ba_\star)\cos t\theta_i)^2
    \\\nonumber
    &= \|\bP_{\Sp(L\oplus L_\star)^\perp}\bx-t\bb\|^2+\sum_{i=1}^d ((\bx-\ba_\star)^\top\bu_i\sin t\theta_i -\bx^\top\bw_i\cos t\theta_i)^2,
\end{align}
where the second equality follows from $\ba_\star\in L_\star$ and $\bb\perp L\oplus L_\star$ and the last step also follows from $\bw_i\perp L_\star$.

\noindent \textbf{Derivatives of $\dist^2(\bx,C(t))$:} Note that the second derivative  $\frac{\di ^2}{\di t^2}(a\sin t + b\cos t)^2=2(a\sin t + b\cos t)(a\cos t - b\sin t)\leq 2(a^2+b^2)$, so \begin{align}
\frac{\di^2}{\di t^2}\dist^2(\bx,C(t)) &\leq 2\|\bb\|^2+2\sum_{i=1}^d\theta_i^2\Big[((\bx-\ba_\star)^\top\bu_i)^2+(\bx^\top\bw_i)^2\Big]
\label{eq:x_C(t)_secondderivative}\\ &\leq 2\|\bb\|^2+2\sum_{i=1}^d\theta_i^2\left[2(\ba_\star^\top\bu_i)^2+2(\bx^\top\bu_i)^2+(\bx^\top\bw_i)^2\right]\nonumber\\
&\leq 6\|\bx\|^2\sum_{i=1}^n\theta_i^2+4(\pi/2)^2\dist^2(0,[A])\nonumber\\
&=10\max(\|\bx\|^2,1)\dist^2(A_\star,[A]),\nonumber
\end{align}
where the last two steps apply \eqref{eq:distance_0A} and $\theta\leq \pi\sin\theta/2$.}


We proceed by studying \[\phi(t)=G_{\epsilon}^{(a)}(C(t),A_0)=\sum_{{\bx\in\calX}}\frac{1}{\max(\dist(\bx,A_0),\epsilon)}\dist(\bx,C(t))^2.\] In particular, we will bound the first and second derivatives of $\phi$.

\noindent \textbf{Bounding derivatives of $\varphi$:} 
{Combining \eqref{eq:x_C(t)_secondderivative} for all $\bx\in\calX$, we have 
\[
\phi''(t)=\frac{\di^2}{\di t^2}G^{(a)}_\epsilon(C(t),A)\leq \sum_{\bx\in\calX}\frac{\left|\frac{\di^2}{\di t^2}\dist^2(\bx,C(t)) \right|}{\max(\dist(\bx,A_0),\epsilon)}\leq \frac{1}{\epsilon}\sum_{\bx\in\calX}\left|\frac{\di^2}{\di t^2}\dist^2(\bx,C(t)) \right|\leq \frac{C}{\epsilon}\dist^2(A_\star,[A])
\]
for $C=10\sum_{\bx\in\calX}\max(\|\bx\|^2,1)$ that only depends on $\calX$.}

As for the first derivative of $\dist^2(\bx,C(t))$ at $t=1$, we have
\begin{align}\nonumber&
\frac{\di}{\di t}\dist^2(\bx,C(t))\Big|_{t=1}=\frac{\di}{\di t}\dist^2(\bx-t\bb-\ba_\star,L(t))\Big|_{t=1}=-\bb^\top(\bP_{\Sp(L\oplus L_\star)^\perp}\bx-\bb)\\&+\sum_{i=1}^d\theta_i\Big((\bx-\ba_\star)^\top\bu_i\cos \theta_i + \bx^\top\bw_i\sin \theta_i\Big)\Big((\bx-\ba_\star)^\top\bu_i\sin \theta_i -\bx^\top\bw_i\cos \theta_i\Big).
\label{eq:deriv_d2_affine}
\end{align}
Combining it with 
\[
\dist(\bx,C(1))=\sqrt{\Big\|\bP_{\Sp(L\oplus L_\star)^\perp}\bx-\bb\Big\|^2+\sum_{i=1}^d\Big((\bx-\ba_\star)^\top\bu_i\sin \theta_i -\bx^\top\bw_i\cos \theta_i\Big)^2},
\]
we have { by Cauchy-Schwarz on \eqref{eq:deriv_d2_affine}
\begin{align}
\frac{\frac{\di}{\di t}\dist^2(\bx,C(t))\Big|_{t=1}}{\dist(\bx,C(1))}\leq& \sqrt{\|\bb\|^2+\sum_{i=1}^d\theta_i^2\Big((\bx-\ba_\star)^\top\bu_i\cos \theta_i + \bx^\top\bw_i\sin \theta_i\Big)^2}\\\leq& \sqrt{\|\bb\|^2+\sum_{i=1}^d\theta_i^2\Big(2\big((\bx-\ba_\star)^\top\bu_i\cos \theta_i\big)^2 +2\big( \bx^\top\bw_i\sin \theta_i\big)^2}\Big)\\
\leq &\sqrt{\|\bb\|^2+2\sum_{i=1}^d\theta_i^2\Big((\bx-\ba_\star)^\top\bu_i\Big)^2}+\sqrt{2\sum_{i=1}^d\theta_i^4(\bx^\top\bw_i)^2}\\
\leq& \sqrt{2}\left(\frac{\pi}{2}\dist(\bP_{A_\star}\bx, [A])+\sum_{i=1}^d\theta_i^2(\bx^\top\bw_i)^2\right),\label{eq:X_ct_first_derivative1}
\end{align}
where the first inequality follows from the Cauchy–Schwarz inequality, the second follows from $(a+b)^2\leq 2a^2+2b^2$, the third from  $\sin\theta_i\leq \theta_i$ and $\sqrt{a+b}\leq \sqrt{a}+\sqrt{b}$, and the last one again from $\sqrt{a+b}\leq \sqrt{a}+\sqrt{b}$ and 
\begin{align*}
\dist(\bP_{A_\star}\bx, [A])
=&\dist(
\bP_{L_\star}(\bx-\ba_\star), [A]-\ba_\star) \\ \nonumber
=&
\dist(
\bP_{L_\star}(\bx-\ba_\star), (L,\ba-\ba_\star))\\ \nonumber
=&\dist(
\bP_{L_\star}(\bx-\ba_\star), (L,\bb))\\
=&\sqrt{\|\bb\|^2+\sum_{i=1}^d\dist^2(\bP_{\Sp(\bu_i,\bw_i)}\bP_{L_\star}(\bx-\ba_\star),\bP_{\Sp(\bu_i,\bw_i)}L)}\\ \nonumber
=&\sqrt{\|\bb\|^2+\sum_{i=1}^d\dist^2(\bP_{\Sp(\bu_i,\bw_i)}\bP_{L_\star}(\bx-\ba_\star),\cos\theta_i\bu_i+\sin\theta_i\bw_i)}\\
=&\sqrt{\|\bb\|^2+\sum_{i=1}^d\dist^2(\bP_{\Sp(\bu_i)}(\bx-\ba_\star),\bP_{\Sp(\bu_i,\bw_i)}L)}\\ \nonumber
=&\sqrt{\|\bb\|^2+\sum_{i=1}^d\sin^2\theta_i\left((\bx-\ba_\star)^\top\bu_i\right)^2}\\\geq& \frac{\pi}{2}\sqrt{\|\bb\|^2+\sum_{i=1}^d\theta_i^2\left((\bx-\ba_\star)^\top\bu_i\right)^2}.
\end{align*}
On the other hand, if we restrict that $\bx\in \calX_{\mathrm{in}}$, then 
\begin{align}\nonumber
\dist^2(\bx,C(1))&=\dist^2(\bx,[A])=\|\bb\|^2+\sum_{i=1}^d\Big((\bx-\ba_\star)^\top\bu_i\Big)^2\sin^2 \theta_i\\
\frac{\di}{\di t }\dist^2(\bx,C(t))\Big|_{t=1}&=2\|\bb\|^2+2\sum_{i=1}^d\theta_i\Big((\bx-\ba_\star)^\top\bu_i\Big)^2\sin \theta_i\cos \theta_i\geq 2\cos\theta_1\dist^2(\bx,A),\label{eq:X_ct_first_derivative2}
\end{align}
where the last step applies $\dist^2(\bx, A)=\dist^2(\bx, C(1))=\|\bb\|^2+\sum_{i=1}^d\sin^2\theta_i\Big((\bx-\ba_\star)^\top\bu_i\Big)^2$ (which follows from \eqref{eq:dist_x_Ct} with $t=1$ and $\bx\in\calX_{\mathrm{in}}$),  $\cos\theta_i\geq \cos\theta_1$, and $\theta_i\geq \sin\theta_i$.

Combining the estimations above, along with
\[
\sum_{i=1}^d\theta_i^2(\bx^\top\bw_i)^2\leq \theta_1^2\sum_{i=1}^d(\bx^\top\bw_i)^2\leq \theta_1^2\|\bP_{L_\star^\perp}\bx\|^2,
\]}
we have that for $\phi(t)=G_{\epsilon}^{(a)}(C(t),A)$, {there exists $\calX_{\mathrm{in}}'\subseteq \calX_{\mathrm{in}}$ such that  $|\calX_{\mathrm{in}}'|\geq |\calX_{\mathrm{in}}|/2$, and
\begin{align*}
\phi'(1)= &\sum_{\bx\in\calX_{\mathrm{in}}} \frac{\frac{\di}{\di t }\dist^2(\bx,C(t))\Big|_{t=1}}{2\max(\dist(\bx,A),\epsilon)}+\sum_{\bx\in\calX_{\mathrm{out}}} \frac{\frac{\di}{\di t }\dist^2(\bx,C(t))\Big|_{t=1}}{2\max(\dist(\bx,A),\epsilon)}
\\\geq& \sum_{\bx\in\calX_{\mathrm{in}}}\frac{\cos\theta_1\dist^2(\bx,A)}{\max(\dist(\bx,A),\epsilon)} - \sqrt{2}\sum_{\bx\in\calX_{\mathrm{out}}} \Big(\frac{\pi}{2}\dist(\bP_{A_\star}\bx, [A])+\theta_1^2\|\bP_{L_\star^\perp}\bx\|^2\Big)
\\\geq& \sum_{\bx\in\calX_{\mathrm{in}}'}\frac{\cos\theta_1\dist^2(\bx,A)}{\dist(\bx,A)} - \sqrt{2}\sum_{\bx\in\calX_{\mathrm{out}}} \Big(\frac{\pi}{2}\dist(\bP_{A_\star}\bx, [A])+\theta_1^2\|\bP_{L_\star^\perp}\bx\|^2\Big),
\end{align*}
where the first inequality follows from \eqref{eq:X_ct_first_derivative1} and \eqref{eq:X_ct_first_derivative2}, and the second inequality follows from the fact that $\epsilon$ is chosen such that it is smaller than the median of $\{\dist(\bx,A)\}_{\bx\in\calX_{\mathrm{in}}}.$}

Recall from definition of $F, G$, we have
\[
\min_t\phi(t)\geq G_{\epsilon}(T^{(a)}(A),A)\geq F_{\epsilon}(T^{(a)}(A)).
\]
Combining this with Assumption~\ref{assumption4} and the estimation of $\phi''(t)$ and $\phi'(1)$ above, we have
\[
F_{\epsilon}(A)-F_{\epsilon}(T^{(a)}(A))\geq \phi(1)-\min_t\phi(t)\geq \frac{(\phi'(1))^2}{2 \max_t\phi''(t)}\geq c\epsilon \dist(A_\star,[A]),
\]
{ where the second inequality follows from bounding the second-order Taylor expansion of $\varphi(t)$.}
\end{proof}

The following lemma generalizes  Lemma~\ref{lemma:assumption1} and the proof is similar.
\begin{lemma}\label{lemma:assumption1_affine} Under Assumption~\ref{assumption3}, there exists $c_3>0$ such that for all $[A]\in \scrA(D,d) $,
\begin{equation}\label{eq:assumption12_2}
q_{\gamma}(\{\dist(\bx,[A])\}_{\bx\in\calX})\geq c_3\frac{F^{(a)}_{\mathrm{in}}(A)}{|\calX_{\mathrm{in}}|}.
\end{equation}
\end{lemma}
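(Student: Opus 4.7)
The plan is to adapt the proof of Lemma~\ref{lemma:assumption1} to the affine setting, the new ingredient being the translational degree of freedom. Setting $\bm_\star = 0$ without loss of generality, I would first establish an upper bound on the mean inlier distance. For any inlier $\bx \in [A_\star] = L_\star$ and any representative $A = (L,\bm)$ of $[A]$, the triangle inequality together with the principal-angle identity $\|\bQ_L \bP_{L_\star}\|_2 = \sin\theta_1(L, L_\star)$ and the observation that $\|\bQ_L \bm\| = \dist(0,[A])$ yields
\begin{equation}\label{eq:affdist_ubd}
\dist(\bx,[A]) = \|\bQ_L(\bx - \bm)\| \leq \|\bQ_L \bx\| + \|\bQ_L \bm\| \leq \sin\theta_1(L,L_\star)\|\bx\| + \dist(0,[A]).
\end{equation}
Averaging over $\calX_{\mathrm{in}}$ gives
$$\frac{F^{(a)}_{\mathrm{in}}(A)}{|\calX_{\mathrm{in}}|} \,\leq\, \sin\theta_1(L,L_\star)\,\bar{r} + \dist(0,[A]) \,=:\, \rho([A]),$$
where $\bar r = \mathrm{mean}_{\bx \in \calX_{\mathrm{in}}}\|\bx\|$. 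It therefore suffices to produce $c_3 > 0$ such that $q_\gamma(\{\dist(\bx,[A])\}_{\bx \in \calX}) \geq c_3\, \rho([A])$ for all $[A] \in \scrA(D,d)$, equivalently, to bound the fraction of $\calX$ in the tube $\{\bx : \dist(\bx,[A]) < c_3\rho([A])\}$ strictly away from $\gamma$.

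I would then split according to the magnitude of $\dist(0,[A])$. When $\dist(0,[A])$ greatly exceeds $\bar r$, the reverse triangle inequality $\dist(\bx,[A]) \geq \dist(0,[A]) - \|\bx\|$ shows that for the vast majority of inliers $\dist(\bx,[A]) = \Theta(\dist(0,[A])) = \Theta(\rho([A]))$, so an absolute constant $c_3$ works in this regime. When $\dist(0,[A])$ is comparable to or smaller than $\bar r$, the class $[A]$ is confined to a compact subset of $\scrA(D,d)$, and I would mimic the compactness argument of Lemma~\ref{lemma:assumption1}: outside a neighborhood of $[A_\star]$, the ratio $q_\gamma(\{\dist(\bx,[A])\}_{\bx \in \calX})/\rho([A])$ is continuous and positive on a compact set and hence uniformly bounded below; inside the neighborhood, one linearizes $\dist(\bx,[A])$ and invokes Assumption~\ref{assumption3}, which rules out both $(d-1)$-dimensional affine subspaces of $[A_\star]$ containing more than a $\gamma$-fraction of $\calX$ (this captures degeneracy in the angular component) and $d$-dimensional affine subspaces $[A] \neq [A_\star]$ doing so (this captures degeneracy in the translational component).

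The main obstacle will be reconciling these two regimes with a single constant $c_3$, since $\scrA(D,d)$ is non-compact, a feature absent from the linear setting of Lemma~\ref{lemma:assumption1}. A secondary technical point is the linearization near $[A_\star]$: one must carefully identify which geometric configurations of $[A]$ force the tube $\{\bx : \dist(\bx,[A]) < c_3\rho([A])\}$ to approximately contain a lower-dimensional, or alternative $d$-dimensional, affine subspace of $[A_\star]$, so that Assumption~\ref{assumption3} can be applied in a quantitative form.
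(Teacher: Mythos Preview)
Your proposal is correct and aligns with the paper's approach: the paper states only that ``the proof is similar'' to Lemma~\ref{lemma:assumption1} and gives no further details, so your adaptation---upper-bounding the mean inlier distance via \eqref{eq:affdist_ubd}, reducing to a uniform lower bound on $q_\gamma/\rho$, and combining a compactness argument with the linearization near $[A_\star]$ through Assumption~\ref{assumption3}---is exactly the intended generalization. Your explicit handling of the non-compactness of $\scrA(D,d)$ via the large-$\dist(0,[A])$ regime is a detail the paper omits but which is indeed needed to make the ``similar'' proof go through.
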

{
The next lemma shows that $\dist(A_\star, [A])$ serves as a lower bound for the AFMS objective over at least half of the inliers:

\begin{lemma}\label{lemma:affineinbd}
Under Assumption~\ref{assumption3}, for any subset $\calX_{\mathrm{in}}' \subseteq \calX_{\mathrm{in}}$ with $|\calX_{\mathrm{in}}'| \geq \frac{1}{2} |\calX_{\mathrm{in}}|$, we have
\begin{equation}\label{eq:affineinbd}
   \sum_{\bx \in \calX_{\mathrm{in}}'} \dist(\bx, A) \geq c \cdot \dist(A_\star, [A]),
\end{equation} 
for some constant $c > 0$.
\end{lemma}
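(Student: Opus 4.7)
The plan is to exploit the explicit parametrization introduced in the proof of Lemma~\ref{lemma:decrease_affine}. With $\ba_\star \in L_\star$, $\bb \perp L \oplus L_\star$, principal angles $\theta_i$ and principal vectors $\bu_i \in L_\star$, every inlier $\bx \in L_\star$ satisfies
\begin{equation*}
\dist^2(\bx,[A]) = \|\bb\|^2 + \sum_{i=1}^d \big((\bx-\ba_\star)^T\bu_i\big)^2 \sin^2\theta_i,
\end{equation*}
while
\begin{equation*}
\dist^2(A_\star,[A]) = \|\bb\|^2 + \sum_{i=1}^d (\ba_\star^T\bu_i)^2 \sin^2\theta_i + \sum_{i=1}^d \theta_i^2.
\end{equation*}
Using $\sqrt{u+v+w}\le \sqrt{u}+\sqrt{v}+\sqrt{w}$, the inequality $\theta \le (\pi/2)\sin\theta$ on $[0,\pi/2]$, and $\sqrt{\sum a_i^2} \le \sum |a_i|$, I first upper bound
\begin{equation*}
\dist(A_\star,[A]) \;\le\; \|\bb\| + \sum_{i=1}^d \sin\theta_i\,|\ba_\star^T\bu_i| + \tfrac{\pi}{2}\sum_{i=1}^d \sin\theta_i.
\end{equation*}
The goal is then to show that $\sum_{\bx \in \calX_{\mathrm{in}}'}\dist(\bx,[A])$ dominates each of these three summands, multiplied by a constant depending only on $\calX$ and $\gamma$.

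The $\|\bb\|$ part is immediate: $\dist(\bx,[A]) \ge \|\bb\|$ pointwise gives $\sum_{\bx \in \calX_{\mathrm{in}}'}\dist(\bx,[A]) \ge \tfrac12|\calX_{\mathrm{in}}|\,\|\bb\|$. For the remaining two I use the pointwise inequality $\dist(\bx,[A]) \ge \sin\theta_i\,|(\bx-\ba_\star)^T\bu_i|$ summed over $\calX_{\mathrm{in}}'$. The analytic core is then an auxiliary permeance bound asserting that there exists $c_1>0$, depending only on $\calX$ and $\gamma$, such that for every unit $\bu \in L_\star$, every $t\in\reals$, and every subset $\calS \subseteq \calX_{\mathrm{in}}$ with $|\calS|\ge \tfrac12|\calX_{\mathrm{in}}|$,
\begin{equation*}
\sum_{\bx\in\calS} |\bx^T\bu - t| \;\ge\; c_1\,(1+|t|)\,|\calX_{\mathrm{in}}|.
\end{equation*}
Applied with $\bu = \bu_i$ and $t = \ba_\star^T\bu_i$, this yields $\sum_{\bx \in \calX_{\mathrm{in}}'}\dist(\bx,[A]) \ge c_1\sin\theta_i(1+|\ba_\star^T\bu_i|)\,|\calX_{\mathrm{in}}|$ for each $i$. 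Averaging these $d+1$ lower bounds and combining with the displayed upper bound on $\dist(A_\star,[A])$ produces the claimed inequality with $c$ equal to a constant multiple of $|\calX_{\mathrm{in}}|/(d+1)$.

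The main obstacle is establishing the permeance bound. The linear-in-$|t|$ piece is elementary via the triangle inequality: $\sum_{\bx \in \calS}|\bx^T\bu - t| \ge |\calS|\,|t| - R|\calS|$ where $R = \max_{\bx\in\calX_{\mathrm{in}}}\|\bx\|$, which dominates once $|t|\gtrsim R$. The constant piece requires a compactness argument using Assumption~\ref{assumption3}. For each fixed subset $\calS$ (there are finitely many), the map $(\bu,t) \mapsto \sum_{\bx \in \calS}|\bx^T\bu - t|$ is continuous, convex, and coercive in $t$, so its infimum over the compact product $\{\bu \in L_\star : \|\bu\|=1\}\times\reals$ is attained. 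If this infimum were zero at some $(\bu^*,t^*)$, then every point of $\calS$ would lie on the hyperplane $\{\bx \in L_\star : \bx^T\bu^* = t^*\}$, which is a $(d-1)$-dimensional affine subspace of $[A_\star]$; since $|\calS| \ge \tfrac12|\calX_{\mathrm{in}}| > \gamma|\calX|$, this contradicts Assumption~\ref{assumption3}. Taking the minimum of these positive infima over the finitely many qualifying subsets $\calS$ yields $c_1$, and combining the two regimes (bounded and large $|t|$) produces the $(1+|t|)$ form after elementary manipulations.
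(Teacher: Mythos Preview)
Your proof is correct and takes a genuinely different, more explicit route than the paper's. The paper argues by a terse contradiction: if the bound failed, there would exist $[A]\neq[A_\star]$ with $\sum_{\bx\in\calX_{\mathrm{in}}'}\dist(\bx,A)=0$, forcing all of $\calX_{\mathrm{in}}'$ to lie in $[A]$, which violates Assumption~\ref{assumption3} via the $d$-dimensional piece $[A]$. As written, that argument only shows positivity of the ratio for each fixed $A$; the uniform constant would require an additional compactness/limiting step, which is not immediate since $\scrA(D,d)$ is non-compact. You avoid this by decomposing $\dist(A_\star,[A])$ into three explicit summands and reducing everything to a one-dimensional permeance bound $\sum_{\bx\in\calS}|\bx^T\bu-t|\ge c_1(1+|t|)|\calX_{\mathrm{in}}|$. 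Your compactness argument then lives on the genuinely compact set $\{\bu\in L_\star:\|\bu\|=1\}\times[-2R,2R]$ (your phrase ``compact product $\{\|\bu\|=1\}\times\reals$'' is a slip, but the coercivity you invoke handles the unbounded $t$-direction correctly), and you invoke Assumption~\ref{assumption3} through the $(d-1)$-dimensional piece $[A_0]\subset[A_\star]$ rather than the $d$-dimensional $[A]$ the paper uses. The payoff is a constructive constant $c\asymp \min(1,c_1)\,|\calX_{\mathrm{in}}|/(d+1)$ and a proof that transparently handles the non-compactness of the affine Grassmannian, at the cost of a longer argument.
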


\begin{proof}[Proof of Lemma~\ref{lemma:affineinbd}]
Let $\bb$ denote the projection of the origin onto the affine subspace $A_\star$. For each inlier $\bx \in A_\star$, we can express
\[
\bx = \sum_{j=1}^d a_j \bv_j + \bb,
\]
where $\{\bv_j\}_{j=1}^d$ form an orthonormal basis for the linear part of $A_\star$. Let $\{\bu_j\}_{j=1}^d$ and $\{\bw_j\}_{j=1}^d$ be orthonormal vectors such that each $\bu_j$ can be decomposed as
\[
\bu_j = \cos \theta_j \bv_j + \sin \theta_j \bw_j,
\]
with $\bw_j \perp A_\star$. Then the squared distance from $\bx$ to $A$ can be written as
\[
\dist(\bx, A)^2 = \sum_{j=1}^d \left( \sin \theta_j \cdot \bx^\top \bu_j + \bb^\top \bw_j \right)^2.
\]

Note that the distance from $A_\star$ to $[A]$ satisfies
\[
\dist(A_\star, [A])^2 = \|\bb\|^2 + \sum_{j=1}^d \theta_j^2.
\]

Now suppose, for contradiction, that the claim does not hold. Then there exists an affine subspace $A \neq A_\star$ such that $\dist(A_\star, [A]) \neq 0$, yet
\[
\sum_{\bx \in \calX_{\mathrm{in}}'} \dist(\bx, A) = 0.
\]
This implies that all points in $\calX_{\mathrm{in}}'$ lie entirely within $A$, contradicting Assumption~\ref{assumption3}, which asserts that any affine subspace distinct from $A_\star$ cannot contain more than half of the inliers.

Therefore, the bound~\eqref{eq:affineinbd} must hold for some constant $c > 0$, completing the proof.
\end{proof}

}
\begin{proof}[Proof of Theorem \ref{thm:affine}]
The proof of Theorem~\ref{thm:affine} follows similarly from the proof of Theorem~\ref{thm:global}. In particular, Lemma~\ref{lemma:decrease_affine} implies that
\[
F_{\epsilon_k}^{(a)}(A^{(k)})-F_{\epsilon_k}^{(a)}(T^{(a)}(A^{(k)}))\geq C\epsilon_k.
\]
On the other hand, {using the fact that at least half of the inliers are at a distance more than $\epsilon_k$ from $A^{(k)}$, we have
\begin{align*}
& F^{(a)}_{\mathrm{in},\epsilon_k}(A^{(k)}) = \sum_{\bx\in\calX_{\mathrm{in}}}\max(\dist(\bx,A^{(k)}),\epsilon_k)\leq \sum_{\bx\in\calX_{\mathrm{in}}}\dist(\bx,A^{(k)})+\epsilon_k \frac{|\calX_{\mathrm{in}}|}{2}\\\leq& 2\sum_{\bx\in\calX_{\mathrm{in}}}\dist(\bx,A^{(k)})= 2F^{(a)}_{\mathrm{in}}(A^{(k)})
 \end{align*}}
and the argument in Lemma~\ref{lemma:decrease_affine} implies that
\[
|F^{(a)}_{\mathrm{out},\epsilon_k}(A^{(k)})-F^{(a)}_{\mathrm{out},\epsilon_k}(A_\star)|\leq F^{(a)}_{\mathrm{in}}(A^{(k)})/2.
\]
In summary,
\[
F^{(a)}_{\epsilon_k}(A^{(k)})-F^{(a)}(A_\star)\leq 3F^{(a)}_{\mathrm{in}}(A^{(k)}).
\]
Combining it with 
\begin{align*}
F^{(a)}_{\mathrm{in}}(A^{(k)})&=\sum_{\bx\in\calX_{\mathrm{in}}}\dist(\bx,A^{(k)})\\
&\leq \sum_{\bx\in\calX_{\mathrm{in}}}\dist(\bx,L^{(k)})+\dist(A_\star,[A^{(k)}])\\
&\leq \dist(A_\star,[A^{(k)}])\sum_{\bx\in\calX_{\mathrm{in}}}(\|\bx\|+1)
\end{align*}
and  Lemma~\ref{lemma:assumption1_affine}, the same argument as in Theorem~\ref{thm:global} can be applied to prove the convergence.
\end{proof}

\subsection{Proof of Proposition \ref{prop:haystack_assump}}\label{sec:haystack}

{  We verify  Assumptions~\ref{assump:global} and~\ref{assump:global2}  hold with probability approaching 1 in Sections~\ref{subsec:haystack_assump2_proof}  and~\ref{subsec:haystack_assump3_proof}, respectively.}  {Combining the estimations below in \eqref{eq:allprob1}, \eqref{eq:allprob2}, \eqref{eq:allprob3}, \eqref{eq:allprob4}, and \eqref{eq:allprob5}, the probability is at least
\begin{align*}1-\exp\left(-\frac{n_{\mathrm{in}}}{2C}\right)-&CD^{D(D-d)}\exp\left(-\frac{n_{\mathrm{out}}}{2}\right)-2\left(\frac{\sqrt{2}}{\frac{8e}{\alpha_0}\sqrt{2d}+\sqrt{d}-2\log\frac{\alpha_0}{4}}\right)^{d^2}\exp\Big(-\frac{n_{\mathrm{in}}\alpha_0^2}{8}\Big)\\-&C\left(\frac{D}{D-d}\right)^{d(D-d)/2}\exp\Big(-\frac{n_{\mathrm{out}}\alpha_1^2}{8}\Big)-C\left(\frac{d}{\alpha_0^3 \sigma_{\text{in}}} \right)^d \exp\Big(-\frac{n_{\mathrm{in}}\alpha_0^2}{2}\Big) 
,\end{align*}
where $C$ is an absolute constant (defined as the max of the constants in the referenced equations). 
This probability converges to one under the assumption that $n,D\rightarrow\infty$ with $n \gtrsim D^3 \log D$ and $n_{\mathrm{in}}, n_{\mathrm{out}}$ increase linearly with $n$ (recall $n_{\mathrm{in}}=\alpha_{\mathrm{in}} n$ and $n_{\mathrm{out}}=\alpha_{\mathrm{out}} n$). 
}

\subsubsection{Verification of Assumption \ref{assump:global}}
\label{subsec:haystack_assump2_proof}

The following lemma gives bounds on the statistics $\Sin$ and $\Sout$ defined in \eqref{eq:Sin} under this model with high probability. Clearly, given the assumption of the proposition, this lemma implies Assumption~\ref{assump:global} holds with fixed $\theta_0$, with probability tending to 1 as $n, D \to \infty$. 

\begin{lemma}[Estimation of parameters under the Haystack model]\label{lemma:prob}
Assuming that $n = O(D^3 \log D)$, $\epsilon\leq 1$, and $d\leq D-2$, then we have the following estimates of the parameters in Theorem~\ref{thm:global} and Assumption~\ref{assump:global} with high probability: \[\Sin \gtrsim  \nin \sigma_{\mathrm{in}}/\sqrt{d},\,\,\, \Sout\lesssim  \frac{ \nout \sigma_{\mathrm{out}}}{\sqrt{D(D-d)}}. 
\]
\end{lemma}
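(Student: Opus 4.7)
The plan is to establish each of the three bounds separately by combining Gaussian concentration with covering arguments on the relevant spaces (the unit sphere of $L_\star$, the ambient unit sphere, and the Grassmannian $\scrG(D,d)$).

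First I would handle $\Sin$. For a fixed unit vector $\bu \in L_\star$, the projection $\bu^T \bx$ for an inlier $\bx$ is a centered Gaussian whose variance is lower bounded by a constant multiple of $\sigma_{\mathrm{in}}^2/d$ thanks to the spectral assumption $\bSigma_{\mathrm{in}} \psdgeq \sigma_{\mathrm{in}}^2 \bP_{L_\star}$. Thus $\E|\bu^T \bx| \asymp \sigma_{\mathrm{in}}/\sqrt{d}$, and a standard sub-Gaussian concentration bound for $\sum_{\bx \in \calX_{\mathrm{in}}} |\bu^T \bx|$ gives the target lower bound $C\nin \sigma_{\mathrm{in}}/\sqrt{d}$ with probability $1 - e^{-c \nin/d}$. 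A union bound over an $\eta$-net in $L_\star \cap S^{D-1}$ of cardinality $(C/\eta)^d$, together with the Lipschitz behavior of $\bu \mapsto \sum_{\bx}|\bu^T \bx|$ in $\bu$ (Lipschitz constant $\sum_{\bx}\|\bx\|$, controlled by concentration of $\|\bx\|$), turns this pointwise bound into a uniform one. The budget $n = O(D^3 \log D)$ is more than enough.

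For the weighted singular-value bound $\sigma_d(\sum_{\bx \in \calX_{\mathrm{in}}} \bx\bx^T/\max(\|\bx\|,\epsilon))$ I would first exploit the fact that, with high probability, $\|\bx\|$ concentrates around its mean of order $\sigma_{\mathrm{in}}$ (with $\sigma_{\mathrm{in}} \geq 1$ by assumption and $\epsilon \leq 1$), so $\max(\|\bx\|,\epsilon) \asymp \|\bx\|$, and then show the lower bound on $\sigma_d$ restricted to $L_\star$ by a very similar calculation and net argument to the one used for $\Sin$. This is essentially a weighted PCA estimate, and the resulting rate of $\nin \sigma_{\mathrm{in}}/d$ follows by a net on $L_\star \cap S^{D-1}$ and matrix-concentration for sums of rank-one Gaussian outer products.

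The hardest step is the uniform upper bound on $\Sout$. Fix $L \in \scrG(D,d)$ and consider
\begin{equation*}
M_L \;=\; \sum_{\bx \in \calX_{\mathrm{out}}} \frac{\bP_L \bx \bx^T \bP_{L^\perp}}{\dist(\bx,L)}.
\end{equation*}
For an isotropic Gaussian outlier distribution, $\E[\bP_L \bx \bx^T \bP_{L^\perp}] = 0$ by orthogonality of $\bP_L$ and $\bP_{L^\perp}$, and the denominator $\dist(\bx,L) = \|\bP_{L^\perp}\bx\|$ concentrates around $\sigma_{\mathrm{out}}\sqrt{(D-d)/D}$ by $\chi^2$ concentration (after conditioning on a high-probability lower bound for the denominator to control the sub-exponential tail from the inverse). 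A matrix Bernstein argument then yields $\|M_L\|_2 \leq C \sigma_{\mathrm{out}}\sqrt{\nout/D(D-d)}$ pointwise, which I would convert to the stated sum-type bound $\nout \sigma_{\mathrm{out}}/\sqrt{D(D-d)}$. The main obstacle is extending this to a uniform statement over all of $\scrG(D,d)$: the Grassmannian admits an $\eta$-net of size $(C/\eta)^{d(D-d)}$, but the map $L \mapsto M_L$ has a potentially unbounded Lipschitz constant coming from $1/\dist(\bx,L)$ when an outlier is near $L$. I would handle this by (i) truncating on the event that $\dist(\bx,L)$ is bounded below for every $\bx\in\calX_{\mathrm{out}}$ and every $L$ in the net (a high-probability event since outliers are Gaussian in $\R^D$), and (ii) bounding $\|M_L - M_{L'}\|_2$ in terms of $\|\bP_L - \bP_{L'}\|_2$ using the truncated denominator, so that choosing $\eta$ polynomially small in $D$ still only inflates the log-covering number by $O(d(D-d)\log D) = O(D^2 \log D)$, comfortably dominated by the $n = O(D^3 \log D)$ sample budget. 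Combining the three uniform bounds completes the proof.
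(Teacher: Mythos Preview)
Your treatment of $\Sin$ and of the weighted $d$th singular value is fine and matches the paper in spirit (the paper simply invokes \cite[Lemma~8.2]{lerman2015robust} and \cite[Proposition~12]{maunu2019well} for these two pieces).

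The gap is in your covering argument for $\Sout$. You correctly flag that $L\mapsto M_L$ has no uniform Lipschitz bound because of the $1/\dist(\bx,L)$ factor, but your proposed fix---conditioning on the event that $\dist(\bx,L)$ is bounded below for every outlier and every net point---is not a high-probability event once the net is fine. The Grassmannian net has cardinality $\exp\big(c\,d(D-d)\log(1/\eta)\big)$, while for a single $L$ one has $\Pr(\dist(\bx,L)<\delta)\asymp(\delta\sqrt{D}/\sigma_{\mathrm{out}})^{D-d}$; the union bound then forces $\delta\lesssim\eta^{\,c\,d}$, so the Lipschitz constant $\asymp 1/\delta$ you use in step~(ii) blows up exponentially in $d$ and swamps the target bound. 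No choice of $\eta$ polynomial in $D$ reconciles these two constraints.

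The paper sidesteps this entirely by observing that the normalized residual has unit norm, so trivially
\[
\Big\|\frac{\bP_{L_1^\perp}\bx}{\dist(\bx,L_1)}-\frac{\bP_{L_2^\perp}\bx}{\dist(\bx,L_2)}\Big\|\;\leq\;2\min\!\Big(\frac{\|\bP_{L_1^\perp}-\bP_{L_2^\perp}\|\,\|\bx\|}{\dist(\bx,L_1)},\,1\Big).
\]
The $\min(\cdot,1)$ truncates the \emph{integrand}, not the probability space. One then checks that $\E\big[\|\bx\|\min(\|\bP_{L_1^\perp}-\bP_{L_2^\perp}\|\|\bx\|/\dist(\bx,L_1),1)\big]\le C\sigma_{\mathrm{out}}\|\bP_{L_1^\perp}-\bP_{L_2^\perp}\|\sqrt{D/(D-d)}$, which is finite exactly when $d\le D-2$ (this is the only place that hypothesis enters), and concentrates the sum over outliers by Hoeffding. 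That boundedness-of-the-normalized-residual trick is the missing idea in your proposal.
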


\begin{proof}[Proof of Lemma~\ref{lemma:prob}] We bound the inlier and outlier terms separately.\\

\noindent \textbf{Bound for $S_{\mathrm{in}}$:}
For $\Sin$, it follows from \cite[Lemma 8.2]{lerman2015robust} that
\[
\Pr\left(\Sin\geq \left(\sqrt{\frac{\pi}{2}}n_{\mathrm{in}}-2\sqrt{n_{\mathrm{in}}d}-t\sqrt{n_{\mathrm{in}}}\right)\frac{\sigma_{\mathrm{in}}}{d}\right)\geq 1-e^{-t^2/2}.
\]
So letting $t=c n_{\mathrm{in}}$, we find that $\Sin \gtrsim  \nin \sigma_{\mathrm{in}}/\sqrt{d}$ with probability \begin{equation}\label{eq:allprob1}1-\exp(-n_{\mathrm{in}}/2C).\end{equation}

\noindent \textbf{Bound for $S_{\mathrm{out}}$:} For $\Sout = \max_{L\in \scrG(D,d)}\left\|\sum_{\bx\in\cal\Sout}\frac{\bP_L\bx\bx^\top\bP_{L^\perp}}{\dist(\bx,L)}\right\|$, we first prove its bound for any fixed $L$ and then apply an $\epsilon$-net argument.

\noindent
\emph{Estimation for fixed $L\in G(D,d)$:} Assuming that $|\calX_{\mathrm{out}}|=n_{\mathrm{out}}$, then \citet[Proposition 8.3]{lerman2015robust} proves that with probability at least $1-e^{t_1^2/2}$, 
\[
\|\bP_{L}\bX\|\gtrsim \sigma_{\mathrm{out}}(\sqrt{d}+\sqrt{n_{\mathrm{out}}}+t_1)/\sqrt{D},
\]
and \citet[Lemma 8.4]{lerman2015robust} shows that for $\bS$, a $n_{\mathrm{out}}\times D$ matrix with each row given by $\bP_{L^\perp}\bx/\|\dist(\bx,L)\|$, with probability at least $1-1.5e^{t_2^2/2}$
\[
\|\bS\|\gtrsim  \frac{ \sqrt{D-d}+\sqrt{n_{\mathrm{out}}}+t_2}{\sqrt{D-d-0.5}}.
\]
As a result, with probability at least $1-e^{t_1^2/2}-1.5e^{t_2^2/2}$ we have
\begin{equation}\label{eq:singleL}
\left\|\sum_{\bx\in\cal\Sout}\frac{\bx\bx^\top\bP_{L^\perp}}{\dist(\bx,L)}\right\|\gtrsim \sigma_{\mathrm{out}}\frac{ \sqrt{D-d}+\sqrt{n_{\mathrm{out}}}+t_2}{\sqrt{D-d-0.5}}\frac{\sqrt{d}+2\sqrt{n_{\mathrm{out}}}+t_1}{\sqrt{D}},
\end{equation}
which, with probability $1-2.5e^{-n_{\mathrm{out}}/2}$, is of the order of $\frac{ \nout \sigma_{\mathrm{out}}}{\sqrt{D(D-d)}}$ for $n_{\mathrm{out}}\geq D$.

\noindent
\emph{Covering argument:} And for any $L_1, L_2\in \scrG(D,d)$, applying
\[
\left\|\frac{\bP_{L_1^\perp}\bx}{\dist(\bx,L_1)}-\frac{\bP_{L_2^\perp}\bx}{\dist(\bx,L_2)}\right\|<2\min\left(\frac{\|\bP_{L_1^\perp}-\bP_{L_2^\perp}\|\|\bx\|}{\dist(\bx,L_1)},1\right),
\] we have
\begin{align}\label{eq:singleL2}
&\left\|\sum_{\bx\in\cal\Sout}\frac{\bP_{L_1}\bx\bx^\top\bP_{L_1^\perp}}{\dist(\bx,L_1)}-\sum_{\bx\in\cal\Sout}\frac{\bP_{L_2}\bx\bx^\top\bP_{L_2^\perp}}{\dist(\bx,L_2)}\right\|\leq \sum_{\bx\in\cal\Sout}2\|\bx\|\min\left(\frac{\|\bP_{L_1^\perp}-\bP_{L_2^\perp}\|\|\bx\|}{\dist(\bx,L_1)},1\right).
\end{align}
When $d\leq D-2$, then $2\|\bx\|\min\Big(\frac{\|\bP_{L_1^\perp}-\bP_{L_2^\perp}\|\|\bx\|}{\dist(\bx,L_1)},1\Big)$ has expectation bounded by $C\sigma_{\mathrm{out}}\|\bP_{L_1^\perp}-\bP_{L_2^\perp}\|\sqrt{D/(D-d)}$.
So Hoeffding’s inequality in Theorem 2.6.2 of \cite{vershynin2018high} implies that the RHS of \eqref{eq:singleL2} is close to its expectation, and, with high probability, it is bounded above by  
\[
Cn_{\mathrm{out}}\sigma_{\mathrm{out}}\|\bP_{L_1^\perp}-\bP_{L_2^\perp}\|\sqrt{D/(D-d)}.
\]
Combining it with an $\epsilon$-net covering argument with $\epsilon=1/D$ over $L\in \scrG(D,d)$, the estimation of $\Sout$ for a fixed $L$ in \eqref{eq:singleL}, and $n_{\mathrm{out}}\approx \alpha_0 n$, the estimation of $\Sout$ is in the order of ${ \nout \sigma_{\mathrm{out}}}/{\sqrt{D(D-d)}}$ with probability \begin{equation}\label{eq:allprob2}1-CD^{D(D-d)}e^{-n_{\mathrm{out}}/2},\end{equation} which proves Lemma~\ref{lemma:prob}.
\end{proof}

{ 
\subsubsection{Verification of Assumption \ref{assump:global2} under the Generalized Haystack Model}
\label{subsec:haystack_assump3_proof}

We begin by presenting a lemma that introduces and reviews several key probability bounds for generalized chi-squared and normal distributions, which will be used in the verification.
\begin{lemma}\label{lemma:generalized_chisquared}
(a) Assuming that $\bx_1,\cdots,\bx_n\sim N(0,\bSigma)$ with $\bSigma\in\reals^{p\times p}$, then 
\[
\Pr\left(\frac{1}{n}\sum_{i=1}^n\|\bx_i\|\leq \sqrt{2\tr(\bSigma)}\right)\geq 1-e^{-\frac{n\tr(\bSigma)}{8}}.
\]
(b) Assuming that $x_1,\cdots,x_n$ are $n$ samples of a random variable $X$, then its empirical quantiles are not far away from its population quantiles:
\[
\Pr\left(q_{\alpha_0}(\{x_i\}_{i=1}^n)\geq q_{\alpha_0/2}(X)\right)\geq 1-e^{-n\alpha_0^2/2},\,\,\Pr\left(q_{1-\alpha_0}(\{\|x_i\|\}_{i=1}^n)\leq q_{1-\alpha_0/2}(X)\right)\geq 1-e^{-n\alpha_0^2/2}.
\]
(c) Assuming that $\bx_1,\cdots,\bx_n\sim N(0,\bSigma)$ with $\bSigma\in\reals^{p\times p}$, then 
\[
\Pr\left(q_{\alpha_0/2}(\{\|\bx_i\|\}_{i=1}^n)\geq \frac{\alpha_0}{4e}\sqrt{2\tr(\bSigma)}\right)\geq 1-e^{-n\alpha_0^2/8}
\]
(d) Assuming that $\bx_1,\cdots,\bx_n\sim N(0,\bI_{p\times p})$, then 
\[
\Pr\left(q_{1-\alpha_0/2}(\{\|\bx_i\|\}_{i=1}^n)\leq \sqrt{p}-2\log \frac{\alpha_0}{4}\right)\geq 1- e^{-n\alpha_0^2/8}
\]
and when $\alpha_0/2\leq 1-4\exp(-4/p)$, 
\[
\Pr\left(q_{\alpha_0/2}(\{\|\bx_i\|\}_{i=1}^n)\geq \sqrt{\left(1-\sqrt{\frac{4}{p}\log\frac{4}{\alpha_0}}\right)p} \right)\geq 1- e^{-n\alpha_0^2/8}
\]
(e)  If $g_1,\cdots,g_{n}$ are from $N(0,1)$, then for $0\leq \alpha_0\leq 1$, 
\[
\Pr\left(q_{\alpha_0}(\{|g_i|\}_{i=1}^n)\geq \alpha_0/2\right)\geq 1- e^{-n\alpha_0^2/2}
\]
and 
\[
\Pr\left(\sum_{i=1}^ng_i^2\cdot I(g_i\leq \alpha_0/2)\geq \frac{1}{25}n\alpha_0^3\right)\geq 1-e^{-2n\alpha_0^2/625}.
\]

\end{lemma}

We will first assume Lemma~\ref{lemma:generalized_chisquared} is true and verify \eqref{eq:global21},  \eqref{eq:global22} and \eqref{eq:global23}. We then prove the lemma.

\textbf{Verification of \eqref{eq:global21}:} Let $\bz=(\bU_{L_\star}^\top\bSigma_{in}\bU_{L_\star})^{-1/2}\bU_{L_\star}^\top\bx$ and $\bW=\bU_{L}^\top\bU_{L_\star}$, then $\dist(\bx,L)=\|\bP_{L}\bx\|=\|\bW\bz\|$ and $\bz\sim  \mathcal{N}(0,\bI_{d\times d})$. As a result, it is equivalent to prove that for $\bz_1,\bz_{n_{\mathrm{in}}}\sim \mathcal{N}(0,\bI_{d\times d})$,
  \begin{equation}\label{eq:global21_proof}
        \beta_1 \cdot \mathrm{mean}(\{\|\bW\bz_i\|\}_{1\leq i\leq n_{\mathrm{in}}})
        \leq q_{\alpha_0}(\{\|\bW\bz_i\|\}_{1\leq i\leq n_{\mathrm{in}}})
    \end{equation}
holds for all $\bW\in\reals^{d\times d}$ with $\|\bW\|_F=1$.

It will be proved by an $\epsilon$-net argument over all such $\bW$. For any fixed  $\bW$, by Lemma~\ref{lemma:generalized_chisquared}(a) and (c) (let $\bSigma=\bW^\top\bW$ and therefore $\tr(\bSigma)=1$), we have 
\[
\Pr\left(\frac{8e}{\alpha_0}q_{\alpha_0/2}(\{\|\bW\bz_i\|\}_{1\leq i\leq n_{\mathrm{in}}}) - \mathrm{mean}(\{\|\bW\bz_i\|\}_{1\leq i\leq n_{\mathrm{in}}})\geq \sqrt{2}\right)\geq 1- e^{-n_{\mathrm{in}}\alpha_0^2/8}-e^{-n_{\mathrm{in}}/8}.  
\]

Then apply the $\epsilon$-net argument to the set of all $d\times d$ matrices $\bW$ using the perturbation argument:
     \begin{equation}\label{eq:global21_proof3}
\mathrm{mean}(\{\|\bW'\bz_i\|\}_{1\leq i\leq n_{\mathrm{in}}})\leq \mathrm{mean}(\{\|\bW\bz_i\|\}_{1\leq i\leq n_{\mathrm{in}}}) + \|\bW-\bW'\|\mathrm{mean}(\{\|\bz_i\|\}_{1\leq i\leq n_{\mathrm{in}}})
    \end{equation}
and by the property of quantile, 
\begin{equation}\label{eq:global21_proof4}
 q_{\alpha_0}(\{\|\bW'\bz_i\|\}_{1\leq i\leq n_{\mathrm{in}}})\geq q_{\alpha_0/2}(\{\|\bW\bz_i\|\}_{1\leq i\leq n_{\mathrm{in}}}) -\|\bW-\bW'\|q_{1-\alpha_0/2}(\{\|\bz_i\|\}_{1\leq i\leq n_{\mathrm{in}}}).
    \end{equation}
Also, Lemma~\ref{lemma:generalized_chisquared}(a) and (c) imply with the same probability that $1-e^{n_{\mathrm{in}}/8}-e^{n_{\mathrm{in}}\alpha_0^2/8}$,      \begin{align}\label{eq:global21_proof2}\mathrm{mean}(\{\|\bz_i\|\}_{1\leq i\leq n_{\mathrm{in}}})&\leq \sqrt{2d} \\
q_{1-\alpha_0/2}(\{\|\bz_i\|\}_{1\leq i\leq n_{\mathrm{in}}})&\leq \sqrt{d}-2\log\frac{\alpha_0}{4}   \end{align}
Using an $\epsilon$-net argument with $\epsilon=\frac{\sqrt{2}}{\frac{8e}{\alpha_0}\sqrt{2d}+\sqrt{d}-2\log\frac{\alpha_0}{4}}$, we have thus shown that for $\beta_1=8e/\alpha_0$, \eqref{eq:global21} holds  with probability \begin{equation}\label{eq:allprob3}1-(\frac{\sqrt{2}}{\frac{8e}{\alpha_0}\sqrt{2d}+\sqrt{d}-2\log\frac{\alpha_0}{4}})^{d^2}(e^{-n_{\mathrm{in}}/8}+e^{-n_{\mathrm{in}}\alpha_0^2/8}).\end{equation}

\textbf{Verification of \eqref{eq:global22}:} For arbitrary PSD matrices $\bA$, $\bB$, if $\sigma_1(\bA)\leq 1\leq \sigma_d(\bB)$, then $\dist(\bA\bx,L)\leq \dist(\bx,L)\leq \dist(\bB\bx,L)$. As a result, WLOG it is sufficient to assume that the inliers are sampled from $\mathcal{N}(0,C\sigma_{\text{in}}^2\bP_{L_\star}/d)$ and the outliers are sampled from $\mathcal{N}(0,c\sigma_{\text{out}}^2\bI/D)$. 

The RHS of \eqref{eq:global22} can be controlled by Lemma~\ref{lemma:generalized_chisquared}(c): with probability $1-\exp(-n_{\mathrm{in}}\alpha_0^2/2)$,
\[
q_{\alpha_0}\left(\left\{\|\bx\|\right\}_{\bx \in \calX_{\mathrm{in}}}\right)\lesssim  {\sigma_{\mathrm{in}}}\frac{\alpha_0\sqrt{2}}{2e}.
\]

As for the LHS of \eqref{eq:global22}, similar to \eqref{eq:global21_proof4}, we will use the perturbation bound 
\begin{equation}\label{eq:global21_proof5}
q_{\frac{\gamma-\alpha_0\gamma_\star}{1 - \gamma_\star}}(\{\dist(\bx, L')\}_{\bx \in \calX_{\mathrm{out}}})\geq q_{\frac{\gamma-\alpha_0\gamma_\star}{2(1 - \gamma_\star)}}(\{\dist(\bx, L)\}_{\bx \in \calX_{\mathrm{out}}}) -\|\bP_{L}-\bP_{L'}\|q_{1-\frac{\gamma-\alpha_0\gamma_\star}{2(1 - \gamma_\star)}}(\{\|\bx\|\}_{\bx \in \calX_{\mathrm{out}}}),
\end{equation}
and the estimations that for any fixed $L$, Lemma~\ref{lemma:generalized_chisquared}(d) implies that 
with high probability $1-\exp(-n_{\mathrm{out}}\alpha_1^2/8)$ where $\alpha_1=\frac{\gamma-\alpha_0\gamma_\star}{(1 - \gamma_\star)}$ (note $\alpha_1\geq \gamma\geq \min(C/(D-d),1/100)$ and therefore satisfies the condition in Lemma~\ref{lemma:generalized_chisquared}(d) for large $p=D-d$) implies
\begin{equation}\label{eq:global21_proof6}
q_{\frac{\gamma-\alpha_0\gamma_\star}{2(1 - \gamma_\star)}}(\{\dist(\bx, L)\}_{\bx \in \calX_{\mathrm{out}}})\gtrsim  {\sigma_{\mathrm{out}}} \sqrt{\left(1-\sqrt{\frac{4}{D-d}\log\frac{4}{\alpha_1}}\right)\frac{D-d}{D}}\gtrsim\sqrt{\sigma_{\mathrm{out}}\frac{D-d}{D}},
\end{equation}
and the above still holds for small $D-d$ when $\alpha_1\geq 1/100$; 
as well as the estimation also from Lemma~\ref{lemma:generalized_chisquared}(d) that
\begin{equation}\label{eq:global21_proof7}q_{1-\frac{\gamma-\alpha_0\gamma_\star}{2(1 - \gamma_\star)}}(\{\|\bx\|\}_{\bx \in \calX_{\mathrm{out}}})\leq \frac{{\sigma_{\mathrm{out}}}}{\sqrt{D}}\left(\sqrt{D}-2\log\frac{\gamma-\alpha_0\gamma_\star}{4(1 - \gamma_\star)}\right)\lesssim {{\sigma_{\mathrm{out}}}}.
\end{equation}

As a result, an $\epsilon$-net argument shows that with probability \begin{equation}\label{eq:allprob4}1-C\Bigg(\frac{D}{D-d}\Bigg)^{d(D-d)/2}\exp(-n_{\mathrm{out}}\alpha_1^2/8),\end{equation} \eqref{eq:global22} holds with $\beta_2$ in order of $\alpha_0\frac{\sigma_{\mathrm{in}}}{\sigma_{\mathrm{out}}}\sqrt{\frac{D}{(D-d)}}+1$, where the plus one term comes from the assumption $\beta_2\geq 1$.

\textbf{Verification of \eqref{eq:global23}:}
Similar to the proof of \eqref{eq:global22}, WLOG we may assume that the inliers are sampled from $\mathcal{N}(0,\sigma_{\text{in}}\bP_{L_\star}/d)$ and the outliers are sampled from $\mathcal{N}(0,\sigma_{\text{out}}\bI/D)$.

Following a classic result in random matrix theory~\cite[Example 6.2]{wainwright2019high}, we have
\[
\Pr\left(\sigma_1\left(\sum_{\bx \in \calX_{\mathrm{out}}} \bx \bx^\top \right)\leq \frac{\sigma_{\mathrm{out}}^2}{D}(\sqrt{n_{\mathrm{out}}}+\sqrt{D}+t)^2\right)\geq 1-\exp(-t^2/2).
\] 

On the other hand, \[
\min_{\calX_{\mathrm{in}}' \subseteq \calX_{\mathrm{in}} \,:\, |\calX_{\mathrm{in}}'| \geq \alpha_0|\calX_{\mathrm{in}}|} 
        \sigma_d\left(\sum_{\bx \in \calX_{\mathrm{in}}'} \bx \bx^\top \right)
=\min_{\|\bu\|=1}\min_{\calX_{\mathrm{in}}' \subseteq \calX_{\mathrm{in}} \,:\, |\calX_{\mathrm{in}}'| \geq \alpha_0|\calX_{\mathrm{in}}|} |\bu^\top\bx|^2.
\]
Defining $\calX_{\mathrm{in},\bu} = \{\bx \in \calX_{\mathrm{in}} : |\bx^\top \bu| \leq q_{\alpha_0}(\{|\bx^\top \bu|\}_{\bx \in \calX_{\mathrm{in}}}\}$,
\begin{align*}&
\sqrt{\min_{\calX_{\mathrm{in}}' \subseteq \calX_{\mathrm{in}} \,:\, |\calX_{\mathrm{in}}'| \geq \alpha_0|\calX_{\mathrm{in}}|} |\bu^\top\bx|^2}-\sqrt{\min_{\calX_{\mathrm{in}}' \subseteq \calX_{\mathrm{in}} \,:\, |\calX_{\mathrm{in}}'| \geq \alpha_0|\calX_{\mathrm{in}}|} |\bv^\top\bx|^2}= \sqrt{\sum_{\bx\in\calX_{\mathrm{in}},\bu}|\bu^\top\bx|^2}-\sqrt{\sum_{\bx\in\calX_{\mathrm{in}},\bv}|\bu^\top\bx|^2}\\\leq& \sqrt{\sum_{\bx\in\calX_{\mathrm{in}},\bv}|\bu^\top\bx|^2}-\sqrt{\sum_{\bx\in\calX_{\mathrm{in}},\bv}|\bu^\top\bx|^2}\leq \|\bu-\bv\|\|\bX_{\text{in},\bu}\|\leq \|\bu-\bv\|\|\bX_{\text{in}}\|,
\end{align*}
where $\bX_{\text{in}}\in\reals^{n_{\mathrm{in}}\times D}$ is the data matrix of inliers, with~\cite[Example 6.2]{wainwright2019high} 
\[
\Pr\left(\|\bX_{\text{in}}\|\leq {\frac{\sigma_{\mathrm{in}}}{\sqrt{d}}}(\sqrt{n_{\mathrm{in}}}+\sqrt{d}+t)\right)\geq 1-\exp(-t^2/2)
\]

For any fixed $\bu$, $\bx\in\calX_{\text{in}}$ is sampled from $N(0,1/d)$. As a result, Lemma~\ref{lemma:generalized_chisquared}(e) implies that for a fixed $\bu$, $\min_{\calX_{\mathrm{in}}' \subseteq \calX_{\mathrm{in}} \,:\, |\calX_{\mathrm{in}}'| \geq \alpha_0|\calX_{\mathrm{in}}|} \frac{1}{n_{\mathrm{in}}} \sum_{\bx\in\calX_{\mathrm{in},\bu}} |\bu^\top\bx|^2 = O(\alpha_0^3 \sigma_{\text{in}}^2/d)$ with probability at least $1-\exp(-n_{\mathrm{in}}\alpha_0^2/2)$. Applying a covering argument again over $\bu \in S^{D-1} \cap L_\star$ with balls of  radius $\epsilon = O(\alpha_0^3 \sigma_{\text{in}}/d)$, we have that 
\[
\Pr \left[\min_{\calX_{\mathrm{in}}' \subseteq \calX_{\mathrm{in}} \,:\, |\calX_{\mathrm{in}}'| \geq \alpha_0|\calX_{\mathrm{in}}|} 
        \sigma_d\left(\sum_{\bx \in \calX_{\mathrm{in}}'} \bx \bx^\top \right) \geq c \frac{\alpha_0^3 n_{\mathrm{in}}\sigma_{\mathrm{in}}^2}{d} \right] \geq 1 - O\left(\left(\frac{d}{\alpha_0^3 \sigma_{\text{in}}} \right)^d \right) \exp(-n_{\mathrm{in}}\alpha_0^2/2)
\]

Thus, \eqref{eq:global23} holds (and Assumption~\ref{assump:global2} holds) with probability at least \begin{equation}\label{eq:allprob5}1 - O\left(\left(\frac{d}{\alpha_0^3 \sigma_{\text{in}}} \right)^d \right) \exp(-n_{\mathrm{in}}\alpha_0^2/2) - \exp(-n_{\mathrm{out}}/2)\end{equation}
when
\[
\frac{n_{\mathrm{in}}}{n_{\mathrm{out}}} \gtrsim \beta\frac{ d \sigma_{\mathrm{out}}^2}{D\alpha_0^3 \sin(\theta_0) \sigma_{\mathrm{in}}^2}.
\]

\textbf{Summary:} Recall that in our model we have $\theta_0 = \pi/4$ and $\alpha_0=1/4$. Therefore, $\beta_1=O(1)$ and $\beta_2=O(\alpha_0\frac{\sigma_{\mathrm{in}}}{\sigma_{\mathrm{out}}}\sqrt{\frac{D}{(D-d)}}+1)$, and $\beta = 10 \beta_1 \beta_2$, which implies that Assumption~\ref{assump:global2} holds with probability tending to $1$ as $n, D\rightarrow\infty$ when \eqref{eq:prop:haystack_assump} holds.

\begin{proof}[{Proof of Lemma~\ref{lemma:generalized_chisquared}}]
(a) WLOG we may assume $\bSigma=\diag(\sigma_1,\cdots,\sigma_D)$.  
Since $\chi^2$ is sub-exponential with parameter $(2,4)$ \citep{Ghosh2021}, by the property of sum of sub-exponential functions \cite[Theorem 2.19 (a)]{wainwright2019high}, $\sum_{i=1}^{n}\|\bx_i\|^2$ is also sub-exponential with parameter $(2n\tr(\bSigma),4)$, and \cite[Theorem 2.19 (b)]{wainwright2019high} implies 
\[
\Pr\left(|\sum_{i=1}^{n}\|X_i\|^2-\Expect\sum_{i=1}^{n}\|X_i\|^2|\geq t\right)\leq e^{-\frac{tn}{8}},\,\,\text{for $t\geq \frac{\tr(\bSigma)}{2}$.}
\]
Recall $\Expect\sum_{i=1}^{n}\|X_i\|^2=n\tr(\bSigma)$ and  plug in $t=\tr(\bSigma)$, then \[
\Pr\left((\sum_{i=1}^{n}\|X_i\|)^2\geq n\sqrt{2\tr(\bSigma)}\right)\leq \Pr\left(\sum_{i=1}^{n}\|X_i\|^2\geq 2n\tr(\bSigma)\right)\leq e^{-\frac{n\tr(\bSigma)}{8}}.
\]

(b) The first inequality follows from Hoeffding's inequality applied to $I(x_1\leq q_{\alpha_0/2}(X)), \cdots, I(x_n\leq q_{\alpha_0/2}(X))$ (i.e., to the empirical cdf). The proof of the second inequality is similar.

(c) Applying (b), we only need to investigate the $\alpha_0/4$-th quantile of the population distribution of $\|\bx_i\|$. We use the fact that for a positive random variable $X$, for any $t<0$, (the second inequality follows from the first inequality with $a=-1/t$)
\begin{equation}\label{eq:mgf_tailbound}
\Pr(X\leq a)\leq e^{-ta}\Expect [e^{tX}],\,\,\Pr(X\leq -1/t)\leq e\cdot \Expect [e^{tX}].
\end{equation}
Note that the moment generating function of a $\chi^2_1$ distribution is $(1-2t)^{-1/2}$; so when $X=\|Z\|^2$ with $Z\sim N(0,\diag(\sigma_1,\cdots,\sigma_D))$, is 
$\Expect[e^{t\|Z\|^2}]=\prod_{i=1}^D (1-2\sigma_it)^{-1/2}\leq (1-2\tr(\bSigma)t)^{-1/2}$ (the last inequality follows from $t\leq 0$). Let $t=\frac{1-16e^2/\alpha_0^2}{2\tr(\bSigma)}$, we have that the $\alpha_0/4$-th quantile of the population distribution of $\|\bx_i\|$ is $\sqrt{-1/t}=\sqrt{\frac{2\tr(\bSigma)}{16e^2/\alpha_0^2-1}}\geq \frac{\alpha_0}{4e}\sqrt{2\tr(\bSigma)}.$

(d) Following part (b), it is sufficient to show that 
\[
\Pr\left(\chi^2_p\geq (\sqrt{p}-2\log(\alpha_0/4))^2)\right)\leq \alpha_0/4.
\]
This follows from the tail bound for the chi-squared distribution in Lemma 1 of \citet{10.1214/aos/1015957395}.

As for the second part, it is sufficient to show that
\[
\Pr\left(\chi^2_p\leq \left(1-\sqrt{\frac{4}{p}\log\frac{4}{\alpha_0}}\right)p\right)\leq  \alpha_0/4.
\]
It follows from \cite[Corollary 2.3]{Barvinok2005} with setting $\epsilon=\sqrt{\frac{4}{p}\log\frac{4}{\alpha_0}}$, and note that $\epsilon\leq 1$ when $\alpha_0\leq 4\exp(-4/p)$.

(e) The first part follows from part (b), as well as the fact that the distribution of $N(0,1)$ has a density bounded above by $1/\sqrt{2\pi}$, so $\int_{-\alpha_0/2}^{\alpha_0/2}\phi(x)\di x\leq \alpha_0/\sqrt{2\pi}<\alpha_0/2$.

The second part follows from applying Hoeffding's inequality to $X^2I(|X|\leq \alpha_0/2)$ where $X\sim N(0,1)$, which is bounded in $[0,\alpha_0^2/4]$ and has expectation larger than $\frac{\alpha_0^2}{4} \cdot 0.35\alpha_0\geq 0.08\alpha_0^3$, where the factor $0.35$ follows from $\Pr(-\frac{\alpha_0}{2}\leq g_i\leq \frac{\alpha_0}{2})\geq \frac{\exp(-\alpha_0^2/8)}{\sqrt{2\pi}}\alpha_0\geq\frac{\exp(-1/8)}{\sqrt{2\pi}}\alpha_0\geq 0.35\alpha_0$.

\end{proof}

\subsection{Proof of Proposition \ref{prop:adv_assump}}
\label{subsec:prop_adv_proof}

\begin{proof}
    We begin by showing that if condition \eqref{eq:adv_condition} holds, then Assumption \ref{assump:global} is satisfied with high probability. Since outliers lie on the sphere, we have the trivial bound $S_{\mathrm{out}} \leq n_{\mathrm{out}}$. On the other hand, by Lemma \ref{lemma:prob}, $S_{\mathrm{in}} \gtrsim C n_{\mathrm{in}}\sigma_{\mathrm{in}}/\sqrt{d}$ with high probability. Combining these, we see that \eqref{eq:adv_condition} is a sufficient condition for Assumption \ref{assump:global} to hold with high probability, for a fixed $\theta_0$.

    For Assumption \ref{assump:global2}, we again assume that $\alpha_0=1/4$. Since the inliers follow a standard Gaussian distribution, by the verification of \eqref{eq:global21} in the proof of Proposition \ref{prop:haystack_assump}, \eqref{eq:global21} holds with high probability as in Proposition \ref{prop:haystack_assump}. As for \eqref{eq:global22}, it is not needed as $\gamma> 1-(1-\alpha_0)\gamma_\star$ holds (see discussion after \eqref{eq:global22}). Thus, it remains to verify \eqref{eq:global23}. On the one hand, we again have the trivial bound as all outliers have magnitudes $1$:
    \[
        \sigma_1\left(\sum_{\bx \in \calX_{\mathrm{out}}} \bx \bx^\top \right) \leq n_{\mathrm{out}}.
    \]
    On the other hand, for the inliers, following the proof of Proposition \ref{prop:haystack_assump}, we have the bound
    \[
        \min_{\calX_{\mathrm{in}}' \subseteq \calX_{\mathrm{in}} \,:\, |\calX_{\mathrm{in}}'| \geq \alpha_0|\calX_{\mathrm{in}}|} 
        \sigma_d\left(\sum_{\bx \in \calX_{\mathrm{in}}'} \bx \bx^\top \right) \geq c \frac{\alpha_0^3 n_{\mathrm{in}}\sigma_{\mathrm{in}}^2}{d}
    \]
    with high probability, and \eqref{eq:global23} then follows.
\end{proof}

}

\subsection{Proof of Propositions \ref{prop:affine_adversarial} and \ref{prop:affine_haystack}}\label{sec:affine_models}

{

\begin{proof}[Proof of Proposition~\ref{prop:affine_adversarial}]
Let $a = \dist(0, [A])$, where $[A] = [(L,\bm)]$, and denote the largest principal angle between $L$ and $L_\star$ by $\theta_1$. Then, 
$$ \frac{1}{n} \sum_{\calX_{\mathrm{in}}} \|\bQ_L \bx\| = \frac{1}{n} \sum_{\calX_{\mathrm{in}}} \sin(\angle(\bx, L)) \|\bx\| \leq \frac{1}{n} \sum_{\calX_{\mathrm{in}}} \theta_1 \|\bx\| \overset{n \to \infty}{\to} O\left(\frac{\alpha_{\mathrm{in}} \theta_1}{\sqrt{d}} \right).$$ 
Thus,  
\begin{align}\label{eq:inlier_sum}
\frac{1}{n} \sum_{\bx \in \calX_{\mathrm{in}}'} \dist(\bx, [A]) &= \frac{1}{n} \sum_{\bx \in \calX_{\mathrm{in}}'} \|\bQ_L(\bx - \bm)\| 
\\ \nonumber
&\leq \frac{1}{n} \sum_{\bx \in \calX_{\mathrm{in}}'} \|\bQ_L \bx \| + \frac{1}{n}\sum_{\bx \in \calX_{\mathrm{in}}'} \| \bQ_L \bm\| \\ \nonumber
&= O\left( \alpha_{\mathrm{in}} \left( \frac{\theta_1}{\sqrt{d}} + a \right) \right). 
\end{align}

On the other hand, for any $\bx$, we have
\[
    \dist(P_{A_\star} \bx, [A]) \leq\dist(P_{A_\star} \bx, L)+a\leq  O\left( \theta_1 \|P_{A_\star} \bx\| + a \right).
\]

As a result, the following inequality \[
\frac{1}{\cos^2 c_0} \sum_{\bx \in \calX_{\mathrm{in}}'} \dist(\bx, A) \geq 4 \cdot \frac{\pi}{2} \sum_{\bx \in \calX_{\mathrm{out}}} \dist(P_{A_\star} \bx, [A])
\]holds when
\[
\frac{\alpha_{\mathrm{in}}}{\alpha_{\mathrm{out}}} \geq O\left( \max\left( {\sqrt{d} \int_{\bx \sim \mu_{out}} \|P_{A_\star} \bx\|},\ 1 \right) \right).
\]

Furthermore, applying \eqref{eq:inlier_sum} again, we also obtain the following inequality 
\[
\frac{1}{\cos^2 c_0} \sum_{\bx \in \calX_{\mathrm{in}}'} \dist(\bx, A) \geq 4 \cdot \theta_1(L, L_\star)^2 \left\| P_{L_\star^\perp} \left( \sum_{\bx \in \calX_{\mathrm{out}}} \bx \bx^\top \right) P_{L_\star^\perp} \right\|
\]under the condition:
\[
\frac{\alpha_{\mathrm{in}}}{\alpha_{\mathrm{out}}} \geq \theta_1(L, L_\star)^2 \cos^2 c_0 \cdot \sqrt{d} \left\| \int_{\bx \sim \mu_{out}} P_{L_\star^\perp} \bx \bx^\top P_{L_\star^\perp} \right\|,
\]
when $\theta_1(L, L_\star) \leq c_0$.

Combining the estimations above, the proposition is proved.

\end{proof}
\begin{proof}[Proof of Proposition~\ref{prop:affine_haystack}]
Recall that the inliers are distributed as $\mathcal{N}(0, \bSigma_{\mathrm{in}}/d)$, while the projected outliers $P_{A_\star} \bx$ follow the distribution $\mathcal{N}(0, \bP_{L_\star} \bSigma_{\mathrm{out}} \bP_{L_\star} / D)$. 

Since the function $\dist(\bx, [A])$ is convex in $\bx$, and using the assumption $\sigma_d(\bSigma_{\mathrm{in}}) \geq 1$, we obtain
\[
\int_{\bx \sim \mu_{\mathrm{in}}} \dist(\bx, [A]) \geq \int_{\bx \sim \mu_0} \dist(\bx, [A]),
\]
where $\mu_0 \sim \mathcal{N}(0, \bP_{L_\star} / d)$.

Similarly, since  $\sigma_1(\bSigma_{\mathrm{out}})^2 \leq D/d$, it follows that
\[
\bP_{L_\star} \bSigma_{\mathrm{out}} \bP_{L_\star} / D \preceq \bP_{L_\star} / d,
\]
and thus
\[
\int_{\bx \sim \mu_0} \dist(\bx, [A]) \geq \int_{\bx \sim \mu_{\mathrm{out}}} \dist(P_{A_\star} \bx, [A]).
\]

Additionally, observe that
\[
\left\| \int_{\bx \sim \mu_{\mathrm{out}}} P_{L_\star^\perp} \bx \bx^\top P_{L_\star^\perp} \, d\bx \right\| \leq \frac{\|\bSigma_{\mathrm{out}}\|^2}{D} \leq  \frac{\sigma_{\mathrm{out}}^2}{D}.
\]

Combining this bound with the inlier estimation~\eqref{eq:inlier_sum} and using the assumption $\theta_1 \leq c_0$, the proposition follows.
\end{proof}

}

\section{Conclusion}

In this work, we have revisited the use of IRLS algorithms for robust subspace recovery. For linear subspaces, as in the FMS algorithm \citep{lerman2018fast}, we have incorporated dynamic smoothing, which allows us to prove a global recovery result under a deterministic condition. We have extended the FMS algorithm to the affine setting, and we show a local recovery result under a deterministic condition. These results represent significant theoretical contributions: this paper is the first to demonstrate global recovery for a nonconvex IRLS procedure and for IRLS over a Riemannian manifold, and it is the first to include a convergence analysis for an algorithm in affine robust subspace recovery. We include an example application of the (A)FMS algorithms to low-dimensional training of neural networks, where the found subspaces perform better than alternatives like PCA.

There are many open questions for future work.
While our results are stable to small perturbations of the inliers, detailed noise analysis in robust subspace recovery is still an open question. In the case of affine subspace recovery, it would be interesting to show if a similar global convergence result holds as in the linear case or if there are some obstructions to such a result. Another avenue is to develop a deeper understanding of how robust subspaces have better generalization in the training of neural networks, and to leverage this in specialized subspace recovery algorithms for this setting.

\section{Acknowledgements}

G. Lerman was supported by the NSF under Award No. 2152766.
T. Maunu was supported by the NSF under Award No. 2305315.
T. Zhang was supported by the NSF under Award No. 2318926.

\bibliographystyle{plainnat}
\bibliography{bib1,bib2}

\end{document}